\documentclass[12pt]{article}
\usepackage[T1]{fontenc}
\usepackage[utf8]{inputenc}
\usepackage[margin=1in]{geometry}
\usepackage[toc,page]{appendix}

\usepackage{amssymb}
\usepackage{tabularx}
\usepackage{tikz}
\usetikzlibrary{patterns,hobby}
\usepackage{amsmath}
\usepackage{algorithm}
\usepackage{algpseudocode}
\usepackage{mathtools}
\usepackage{enumerate}

\usepackage{url}
\usepackage{hyperref}
\hypersetup{colorlinks,allcolors=[rgb]{0,0.13672,0.95703}}
\usepackage{graphicx}
\usepackage{fullpage}
\usepackage{wrapfig}
\usepackage{color,soul}
\usepackage{float}
\usepackage{setspace}
\usepackage{subcaption}
\usepackage{tikz,pgfplots}
\pgfplotsset{compat=newest}
\usepackage{etoolbox}
\usepackage{comment}
\ifx\assumption\undefined
\newtheorem{assumption}{Assumption}
\fi

\hypersetup{
	colorlinks,
	linkcolor={red!40!gray},
	citecolor={blue!40!gray},
	urlcolor={blue!70!gray}
}

\usepackage{xcolor}
\definecolor{myorange}{RGB}{255,165,0}
\definecolor{myred}{RGB}{255,0,0}
\definecolor{myblue}{RGB}{0,0,255}
\definecolor{mycyan}{RGB}{0,255,255}
\definecolor{mybrown}{RGB}{165,42,42}
\definecolor{mygreen}{RGB}{0,128,0}

\title{
Perfect Parallelization in Mini-Batch SGD with Classical Momentum Acceleration

}

\author{
  Sachin Garg \\
  University of Michigan \\
  \texttt{sachg@umich.edu}
  \and
  Micha{\l} Derezi{\'n}ski \\
  University of Michigan \\
  \texttt{derezin@umich.edu}
}
\date{}

\newcommand{\Deltab}{\mathbf{\Delta}}

\DeclareMathOperator{\Pib}{\mathbf{\Pi}}

\newcommand{\Lambdab}{\boldsymbol{\Lambda}}

\def\Rb{{\mathbf{R}}}

\def\Sigmab{\mathbf{\Sigma}}

\def\g {\mathbf{g}}

\def\W{\mathbf W}

\def\Rb{\mathbf R}

\def\Y{\mathbf Y}

\def\T{\mathbf T}
\def\R{\mathbf R}

\def\K{\mathbf K}

\ifx\BlackBox\undefined
\newcommand{\BlackBox}{\rule{1.5ex}{1.5ex}}  
\fi
\DeclareMathOperator*{\argmin}{\mathop{\mathrm{argmin}}}

\DeclareMathOperator*{\diag}{\mathop{\mathrm{diag}}}
\def\x{\mathbf x}

\def\y{\mathbf y}

\def\b{\mathbf b}

\def\w{\mathbf w}
\def\v{\mathbf v}

\def\zero{\mathbf 0}

\def\B{\mathbf B}

\def\A{\mathbf A}
\def\C{\mathbf C}
\def\U{\mathbf U}

\def\D{\mathbf D}
\def\V{\mathbf V}
\def\Rb{\mathbf R}
\def\M{\mathbf M}

\def\Z{\mathbf Z}

\def\I{\mathbf I}

\def\A{\mathbf A}

\def\P{\mathbf P}
\def\Q{\mathbf Q}

\def\E{\mathbb E}

\def\R{\mathbb R} 
\def\tr{\mathrm{tr}}

\def\rank{\mathrm{rank}}

\let\origtop\top
\renewcommand\top{{\scriptscriptstyle{\origtop}}} 

\definecolor{silver}{cmyk}{0,0,0,0.3}
\definecolor{yellow}{cmyk}{0,0,0.9,0.0}
\definecolor{reddishyellow}{cmyk}{0,0.22,1.0,0.0}
\definecolor{black}{cmyk}{0,0,0.0,1.0}
\definecolor{darkYellow}{cmyk}{0.2,0.4,1.0,0}
\definecolor{darkSilver}{cmyk}{0,0,0,0.1}
\definecolor{grey}{cmyk}{0,0,0,0.5}
\definecolor{darkgreen}{cmyk}{0.6,0,0.8,0}

\newcommand{\Green}[1]{{\color{darkgreen}  {#1}}}
\newcommand{\Blue}[1]{\color{blue}{#1}\color{black}}
\newcommand{\Brown}[1]{{\color{brown}{#1}\color{black}}}

\ifx\proof\undefined
\newenvironment{proof}{\par\noindent{\bf Proof\ }}{\hfill\BlackBox\\[2mm]}
\fi

\ifx\theorem\undefined
\newtheorem{theorem}{Theorem}
\fi

\ifx\example\undefined
\newtheorem{example}{Example}
\fi

\ifx\property\undefined
\newtheorem{property}{Property}
\fi

\ifx\lemma\undefined
\newtheorem{lemma}{Lemma}
\fi

\ifx\proposition\undefined
\newtheorem{proposition}{Proposition}
\fi

\ifx\remark\undefined
\newtheorem{remark}{Remark}
\fi

\ifx\corollary\undefined
\newtheorem{corollary}{Corollary}
\fi

\ifx\definition\undefined
\newtheorem{definition}{Definition}
\fi

\ifx\conjecture\undefined
\newtheorem{conjecture}[theorem]{Conjecture}
\fi

\ifx\axiom\undefined
\newtheorem{axiom}[theorem]{Axiom}
\fi

\ifx\claim\undefined
\newtheorem{claim}[theorem]{Claim}
\fi

\ifx\assumption\undefined
\newtheorem{assumption}{Assumption}
\fi

\begin{document}
\maketitle
\begin{abstract}
Accelerating stochastic gradient methods with classical momentum schemes, such as Polyak's heavy ball, has proven highly successful in training large-scale machine learning models, particularly when combined with the hardware acceleration of large mini-batch computations. Yet, the effect of classical momentum on stochastic mini-batch optimization has been poorly understood theoretically, with prior works requiring strong noise assumptions and extremely large mini-batches. In this work, we develop a general theory of stochastic momentum acceleration for optimizing over quadratics in the interpolation regime, a popular abstraction for studying deep learning dynamics which also includes classical methods such as randomized Kaczmarz and coordinate descent. Our framework encompasses both heavy ball and Nesterov-style momentum, allows for arbitrary mini-batch sizes, and makes minimal assumptions on the stochastic noise. In particular, we show that acceleration from classical momentum is directly proportional to the gradient mini-batch size (up to a natural saturation point), thereby enabling perfect parallelization of mini-batch computations. Our theory also provides a simple choice for the momentum parameter, which is shown to be effective empirically.
\end{abstract}

\section{Introduction}
Consider the following optimization task which naturally arises in training machine learning models:
\begin{align}
    \min_{\w \in \R^d}\ F(\w) := \E_{\xi \sim \mathcal D}\big[f(\w;\xi)\big], \label{e:general_main}
\end{align}
where $\w$ represents the model parameters  and $\xi\sim \mathcal D$ represents the stochastic noise. Problem \eqref{e:general_main} provides a powerful abstraction for designing and analyzing algorithms for model training, particularly in the context of deep learning and large language models \cite{krizhevsky2012imagenet,he2016deep,devlin2019bert,vaswani2017attention}. 

When given oracle access to the full gradient $\g(\w) = \nabla F(\w)$ of the function $F$, problem \eqref{e:general_main} is very well understood. Here, the simplest strategy is to iteratively follow the direction of steepest descent, \begin{align}
\w_{t+1} = \w_t - \alpha\,\g(\w_t).\label{eq:GD}
\end{align}
When $F$ is smooth and strongly convex,  this Gradient Descent algorithm (GD) attains linear convergence, i.e., it reaches an $\epsilon$-approximate solution after $O(\kappa_F\log(1/\epsilon))$ iterations, where $\kappa_F$ is an appropriate condition number of the deterministic function $F$.  In 1964, Polyak \cite{polyak1964some} showed that the convergence rate of GD can be improved by introducing momentum into its descent trajectory:
\begin{align}
    \w_{t+1} = \w_t - \alpha\,\g(\w_t) + \beta\,(\w_t-\w_{t-1}).
    \label{eq:HBM}
\end{align}
With an appropriate choice of step size $\alpha$ and momentum parameter $\beta$, this Heavy Ball Momentum algorithm (HBM) reaches an $\epsilon$-approximate solution after $O(\sqrt{\kappa_F}\log(1/\epsilon))$ iterations. In 1983, Nesterov \cite{nesterov1983method} proposed a refinement of this scheme, known as Nesterov Accelerated Gradient (NAG), which essentially incorporates one additional gradient step to both $\w_t$ and $\w_{t-1}$ in the momentum.

For large-scale problems, instead of the full gradient we often only have access to stochastic gradients $\nabla f(\w,\xi)$. By aggregating $m$ such gradients into the vector $\g(\w) = \frac1m\sum_{i=1}^m\nabla f(\w;\xi_i)$, the GD update \eqref{eq:GD} turns into the popular mini-batch Stochastic Gradient Descent algorithm (SGD, \cite{robbins1951stochastic}), which has become a workhorse of modern machine learning. The success of SGD in deep learning has been partly attributed to the fact that it is particularly effective in the \emph{interpolation regime} \cite{belkin2018overfitting,ma2018power,attia2025fast}, i.e., when all of the functions $f(\cdot,\xi)$ can be simultaneously minimized (the model fits perfectly to the training data). In this setting, for smooth and strongly convex objectives, SGD attains linear convergence with $O(\kappa_f\log(1/\epsilon))$ iteration complexity, where $\kappa_f$ is a corresponding condition number of the stochastic function $f$. The use of large mini-batch size $m$ allows us to leverage parallel computing architectures while reducing the variance of the stochastic gradient \cite{li2014efficient}, however its effect on the iteration complexity of vanilla SGD is limited \cite{byrd2012sample} and not fully understood in practice \cite{golmant2018computational}.

A natural strategy for improving the performance of mini-batch SGD is to incorporate the classical momentum update \eqref{eq:HBM} of Polyak, or its Nesterov counterpart. This approach has proven highly successful empirically, particularly when paired with large mini-batches \cite{sutskever2013importance,he2016deep,sandler2018mobilenetv2,fu2023and}, and it inspired state-of-the-art training algorithms such as Adam \cite{kingma2014adam} as well as many other more elaborate acceleration schemes for SGD \cite{gupta2018shampoo,vyas2024soap,jordan2024muon}. However, this has led to a fundamental theory-practice gap: While there is significant theoretical progress on accelerating SGD through carefully designed schemes \cite{derezinski2025randomized, han2025pseudoinverse, alderman2024randomized,allen2018katyusha, jin2018accelerated, jain2017accelerating, van2017fastest, liu2016accelerated, bubeck2015geometric}, the SGD algorithms used in practice still largely rely on the simple and classical momentum based on HBM, whose remarkable effectiveness remains poorly understood. Initial efforts in that direction were done by Loizou and Richtarik \cite{loizou2020momentum}, who showed linear convergence of SGD with HBM, but without attaining any acceleration. Bollapragada, Chen and Ward \cite{bollapragada2022fast} showed that HBM-accelerated mini-batch SGD can exhibit faster convergence on quadratic tasks, with related and follow-up works by \cite{lee2022trajectory,pan2023accelerated,tang2023acceleration,zeng2024adaptive}. However, in order to attain this acceleration, all of these prior works require either strong additional assumptions on the stochastic noise or prohibitively large mini-batch sizes that depend heavily on problem~parameters.

\subsection{Our contributions}

In this work, we address the above theory-practice gap by developing a general framework for analyzing the convergence of momentum-based stochastic processes, which encompasses mini-batch SGD with both HBM and NAG acceleration for stochastic quadratic minimization in the interpolation regime. Our analysis leads to a strikingly simple conclusion about the role of classical momentum in large-scale stochastic optimization:
\begin{center}
    \emph{
    Classical momentum enables perfect parallelization for mini-batch SGD\\ in the interpolation regime.
    }
\end{center}
\textbf{Main result.} Concretely, our main technical result (Theorem \ref{t:L2_iterate_0}) shows that properly tuned SGD using mini-batch size $m$ and either HBM or NAG momentum attains $O\big((\kappa_f/m + \sqrt{\kappa_f})\log(\kappa_f/\epsilon)\big)$ iteration complexity in the interpolation regime. Note that for $m=1$ (no mini-batching) this result essentially recovers the previously mentioned linear convergence for SGD, but as we increase $m$, the iteration complexity improves proportionally to $1/m$ until it saturates at an accelerated $\sqrt{\kappa_f}$ rate. Thus, up to the point of saturation, the overall work (number of stochastic gradients computed) remains constant, allowing us to fully exploit the embarrassingly parallel mini-batch computations. Such perfect parallelization does not in general hold for mini-batch SGD without acceleration, even over quadratics in the interpolation regime. Thus, our analysis provides an explanation for the success of momentum acceleration in allowing deep learning optimizers such as Adam to benefit from large mini-batches and highly parallelizable GPU~architectures.

\textbf{Applications.} Our analysis framework can also be directly applied to accelerate several popular SGD algorithms for solving large-scale regression tasks, such as randomized Kaczmarz \cite{strohmer2009randomized} and coordinate descent \cite{gower2015randomized}. In particular, we use our theory to develop a simple momentum-accelerated block coordinate descent algorithm, and 
we evaluate it on several kernel ridge regression tasks. Our numerical results confirm the perfect parallelization phenomenon, and they also show that a simple choice of $\beta = 1 - 1/m$, as motivated by our theory, already leads to meaningful acceleration. We also compare classical momentum with an optimally tuned state-of-the-art acceleration scheme for block coordinate descent \cite{derezinski2025randomized}, showing that, surprisingly, in certain large mini-batch regimes classical momentum yields substantially better~performance.

\subsection{Overview of our techniques} 

\paragraph{Limitations of prior approaches.} 
The convergence behavior of a momentum-accelerated stochastic method follows a dynamical system that is characterized by a product of non-symmetric random matrices. Existing matrix concentration results can be used to bound the spectral norm of a product of random matrices in terms of the product of the spectral norms of the expected values of the matrices \cite{huang2022matrix}. Unfortunately, this strategy proves vacuous for momentum acceleration, as the product of norms diverges to infinity even when the norm of the product converges to zero, because of the lack of symmetry. 
To address this, \cite{bollapragada2022fast} diagonalize the matrices so that they can use matrix product concentration effectively.  However, passing from the original matrices to the diagonalized form incurs a penalty factor that appears both in the convergence bound and in the required mini-batch size, and can blow up independently of the condition number of the problem. In fact, the penalty can even be infinite, as the underlying matrices may not be diagonalizable. To get around this, \cite{pan2023accelerated} diagonalize the matrices to a Jordan form, however this still does not avoid a potentially large penalty factor in both the convergence  bound and the mini-batch size.
\paragraph{Our approach.} In order to get around these issues, we abandon both the diagonalization step and matrix product concentration. Instead, we use the Schur decomposition together with a matrix recursion to bound the original matrix product directly. Our analysis consists of two key components, following a bias-variance decomposition of the expected error.
\vspace{0.3cm}

~\quad\textbf{1.~Bias: \textit{Convergence of expected iterates via Schur decomposition}.} For the expected iterates, it suffices to analyze the product of certain \emph{deterministic} non-symmetric matrices. Here, we start by rearranging the matrices into a block-diagonal form, and carefully analyze both their real and complex eigenvalues in order to bound the spectral radius. Next, we turn this into a spectral norm bound for the matrix product. In this step, instead of using diagonalization as done in prior work, we rely on the Schur decomposition \cite{plemmons1988matrix}, thereby avoiding condition number factors in our bound.
\vspace{0.3cm}

~\quad\textbf{2.~Variance: \textit{Convergence of expected error via matrix recursion}.} In order to control the random iterates, we must bound the stochastic error incurred at each iteration of the process. Instead of relying on black-box matrix concentration results, we do this by deriving a recursion for the expectation of the product of the random matrices, which relates its spectral norm to all monomials of the individual block matrices from the block-diagonal representations derived in the bias analysis. Unfolding this recursion proves quite technical as it requires separately handling real and complex eigenvalues of the diagonal blocks.

Our analysis is not specific to one algorithm, but rather provides a general framework for effectively characterizing the convergence behavior of stochastic momentum methods. We illustrate this by stating our results for a class of stochastic processes which includes both HBM and NAG as special cases, and can be instantiated to provide convergence guarantees for a variety of SGD-type algorithms.

\section{Further related work}
\label{s:related_work}
In addition to the classical HBM and NAG momentum, a variety of more sophisticated momentum-based methods have been proposed in the literature \cite{derezinski2025randomized, han2025pseudoinverse, alderman2024randomized, allen2018katyusha, jin2018accelerated, jain2017accelerating, van2017fastest, liu2016accelerated, bubeck2015geometric}. Concurrently, substantial theoretical progress has been made toward understanding the convergence properties of stochastic gradient descent (SGD) equipped with various forms of momentum.

Alongside these developments, a consistent empirical finding is the accelerated convergence achieved by incorporating classical momentum schemes such as HBM or NAG into mini-batch SGD \cite{sutskever2013importance, he2016deep, sandler2018mobilenetv2, fu2023and}. Notwithstanding this strong empirical evidence, the theoretical understanding of the mechanisms underlying such acceleration remains incomplete. In particular, even in the setting of quadratic objectives, a comprehensive explanation of when and why these classical momentum methods yield acceleration is still lacking.

A substantial body of work has analyzed the behavior of mini-batch SGD with HBM and NAG \cite{gupta2024nesterov,zeng2024adaptive,pan2023accelerated,tang2023acceleration,bollapragada2022fast,loizou2020momentum,can2019accelerated,yan2018unified,gadat2018stochastic,kidambi2018insufficiency,flammarion2015averaging}. Many of these studies establish that, in general, SGD with $O(1)$ size mini-batches and with such momentum does not admit provable improvements over its non-momentum counterpart \cite{kidambi2018insufficiency,loizou2017linearly,jain2017accelerating,wang2023marginal}, attributing observed empirical gains primarily to variance reduction induced by large mini-batches. Despite these negative results, more recent works have revisited this question under restrictive settings, including large mini-batches, carefully tuned step sizes and momentum parameters, and specific assumptions on the data and noise \cite{bollapragada2022fast,tang2023acceleration,pan2023accelerated,zeng2024adaptive,lee2022trajectory}. For instance, \cite{zeng2024adaptive} demonstrate accelerated convergence of randomized linear solvers with HBM via adaptive parameter tuning; however, their analysis relies on distributional assumptions on the sketching matrices that may not hold for commonly used subsampling schemes, and their guarantees do not explicitly quantify the improvement in convergence rates. Similarly, \cite{lee2022trajectory} establish acceleration for mini-batch SGD only in an asymptotic regime and under additional assumptions such as random, orthogonally invariant data matrices.

The works most closely related to ours are \cite{bollapragada2022fast,pan2023accelerated,tang2023acceleration}. In particular, \cite{bollapragada2022fast} establish accelerated convergence of mini-batch SGD with HBM under the requirement that the batch size satisfies $m = \tilde\Omega(\kappa_f \sqrt{\kappa_F})$, along with additional stable diagonalization assumptions. In a different setting, \cite{tang2023acceleration} prove acceleration for inconsistent linear systems, but require a significantly larger batch size, specifically $m = \tilde{\Omega}(\kappa_f^{3})$. More recently, \cite{pan2023accelerated} show that mini-batch SGD with HBM has iteration complexity $\tilde O\big(\sqrt{\kappa} + \frac{d}{m\epsilon} \big)$ to reach $\epsilon$-accuracy, showing sublinear convergence that improves with mini-batch size. They do not consider the interpolation regime, but rather, rely on  other assumptions such as anisotropic gradient noise and carefully designed step-size schedules. Since we are focused on the interpolation regime and attaining linear convergence, their results are largely incomparable~to~ours. 
\section{Stochastic momentum acceleration framework}\label{s:framework}
\paragraph{Notation.}
We represent vectors by lowercase boldface letters and matrices by uppercase boldface letters. For symmetric matrices, we use Loewner ordering and say $\A\preceq \B$ if $\B-\A$ is a positive semidefinite (psd) matrix. $\|\A\|$ denotes spectral norm, $\|\A\|_F$ denotes Frobenius norm of matrix $\A$. For a psd  matrix $\A$, $\|\x\|_{\A} = \sqrt{\x^\top\A\x}$ denotes the Mahalanobis norm of $\x$. We let $\A^\dagger$ denote the Moore-Penrose pseudoinverse of $\A$. We use $\tilde O$, $\tilde\Omega$ notation to hide logarithmic factors.

\paragraph{Stochastic contraction process.} We start by defining a class of stochastic dynamical systems called stochastic contraction processes \cite{derezinski2026last}. This framework can be instantiated to represent the dynamics of many stochastic algorithms, as described below, and thus serves as a useful abstraction for our theory.
\begin{definition}\label{d:stochastic-contraction}
    For a random psd matrix $\Pib$ such that $\zero \preceq\Pib\preceq \I$ and scalar $\alpha\in[0,1]$, we let $\mathcal{A}(\Deltab;\alpha) := (\I-\alpha\Pib)\Deltab$ define a stochastic contraction with average rate $\bar\Pib=\E[\Pib]$. We call $\Pib$ the stochastic rate matrix of $\mathcal{A}$. Given a sequence of independent $\mathcal{A}_t$ with the same $\bar\Pib$ and $\alpha$, we call $(\{\mathcal{A}_t\}_{t\geq 0},\alpha)$ a stochastic contraction process.
\end{definition}

To see how this definition corresponds to an SGD algorithm, consider the minimization problem \eqref{e:general_main} over $F(\w) = \E_{\xi\sim\mathcal D}[f(\w;\xi)]$, where for every $\xi$, function $f(\w;\xi)$ is a convex quadratic with respect to $\w$ that is minimized at some $\w^*\in\R^d$. As long as function $f(\cdot;\xi)$ is $L$-smooth, i.e., $\|\nabla f(\w,\xi) - \nabla f(\v,\xi)\|\leq L\|\w-\v\|$, then the evolution of the error vector $\Deltab_t = \w_t-\w^*$ resulting from the SGD update \eqref{eq:GD} with $\g(\w_t) = \frac1{L}\nabla f(\w_t;\xi_t)$ follows a stochastic contraction process:
\begin{align}
    \Deltab_{t+1} = \mathcal{A}_t(\Deltab_t;\alpha),\qquad \text{where}\qquad \mathcal{A}_t(\Deltab;\alpha) = \Big(\I - \frac{\alpha}{L}\nabla^2 f(\w_t;\xi_t)\Big)\Deltab.\label{eq:sgd-dynamics}
\end{align}
The stochastic rate matrix defining the stochastic contraction process $\mathcal{A}_t$ is $\Pib_t = \frac1L\nabla^2 f(\w_t;\xi_t)$, and its average rate is $\bar\Pib =\frac1L\nabla^2F(\w^*)$. Note that the Hessian is the same everywhere since both $f(\cdot;\xi_t)$ and $F$ are assumed to be quadratic.

\paragraph{Stochastic momentum process.}
We next show how to introduce momentum into this process by incorporating information from two preceding steps. We define these dynamics in sufficient generality to capture both HBM and NAG momentum as special cases.

\begin{definition}\label{d:stochastic-momentum}
    Given a stochastic contraction process
    $(\{\mathcal{A}_t\}_{t\geq 0},\alpha)$, initial vectors $\Deltab_{-1},\Deltab_0$, and parameters $\beta,\omega\geq 0$, define the corresponding stochastic momentum process recursively as:
\begin{align}
    \Deltab_{t+1} = \mathcal{A}_t(\Deltab_t;\alpha) + \beta\Big[\mathcal{A}_t(\Deltab_t;\omega) - \mathcal{A}_{t-1}(\Deltab_{t-1};\omega)\Big].\label{eq:stochastic-momentum}
\end{align}
\end{definition}
Note that if we set the momentum parameter $\beta$ to zero, then we recover the usual SGD dynamics of a stochastic contraction process. On the other hand, if we set $\beta>0$ but $\omega = 0$, then the momentum term simplifies to $\beta[\Deltab_t - \Deltab_{t-1}]$, which precisely corresponds to Polyak's heavy-ball momentum (HBM). On the other hand, by choosing $\omega = \alpha$ we introduce one more SGD step to the momentum term, thereby recovering Nesterov's momentum (NAG). For example, we can use this framework to model HBM acceleration for the SGD process $\{\mathcal{A}_t\}_{t\geq 0}$ from \eqref{eq:sgd-dynamics} by letting 
\begin{align*}
    \Deltab_t = \w_t-\w^*,\quad\text{ and }\quad \w_{t+1} = \w_t - \frac{\alpha}{L}\nabla f(\w_t;\xi_t) + \beta(\w_t - \w_{t-1}).
\end{align*}
It is easy to verify that such a sequence $\{\Deltab_t\}_{t\geq 0}$ is a stochastic momentum process associated with $(\{\mathcal{A}_t\}_{t\geq 0},\alpha)$, momentum parameter $\beta$, and $\omega = 0$. A similar derivation follows immediately  for Nesterov's momentum by setting $\omega = \alpha$.

\paragraph{Example 1: Randomized Kaczmarz.}
As a concrete example of an algorithm that falls directly under our framework, we consider the randomized Kaczmarz (RK) algorithm \cite{strohmer2009randomized}, along with its block versions \cite{needell2014paved}. Let $\A\in\R^{n\times d}$ and $\y\in\R^n$ define a linear regression problem with the loss function $F(\w) = \|\A\w-\y\|^2$. Also, let $\mathcal D$ be a distribution over subsets $S\subseteq \{1,...,n\}$, and consider the following update:
\begin{align}
    \w_{t+1}\ = \argmin_{\w:\, \A_{S_t}\w=\y_{S_t}} \|\w-\w_t\|^2\ =\ \w_t - \A_{S_t}^\dagger(\A_{S_t}\w_t-\y_{S_t}),\qquad S_t\sim\mathcal D,\label{eq:rk}
\end{align}
where $\A_{S_t}$ denotes the submatrix consisting of rows indexed by $S_t$. In particular, when $S\sim \mathcal D$ generates single-sample sets such that $\Pr[S\!=\!\{i\}] \propto \|\A_{i,:}\|^2$, then this recovers classical randomized Kaczmarz. When there exists $\w^*$ such that $\A\w^*=\y$ (i.e., in the interpolation regime), the update \eqref{eq:rk} can be cast as a stochastic contraction process $(\{\mathcal A_t\}_{t\geq 0},1)$ by letting $\Pib_t = \A_{S_t}^\dagger\A_{S_t}$ and $\Deltab_t = \w_t-\w^*$. In particular, for randomized Kaczmarz, a simple calculation shows that this process has an average rate $\bar\Pib = \E[\Pib_t]=\A^\top\A/\|\A\|_F^2$. Similar or better rates have been derived for block versions of Kaczmarz \cite{dy24,derezinski2025randomized}, and other extensions such as sketch-and-project \cite{derezinski2024sharp,rathore2024have,rathore2026turbocharging}. By applying momentum via Definition \ref{d:stochastic-momentum} to $(\{\mathcal A_t\}_{t\geq 0},1)$, we can precisely recover convergence dynamics of the error vector of the accelerated Kaczmarz algorithm considered by \cite{bollapragada2022fast}.

\paragraph{Example 2:  Coordinate Descent.} Another application of our framework comes in the context of solving positive semidefinite linear systems, such as in kernel ridge regression. For a positive definite matrix $\K\in\R^{n\times n}$ and $\y\in\R^n$, consider the (block) coordinate descent (CD, \cite{leventhal2010randomized}) update:
\begin{align}
    \w_{t+1} = \w_t - \I_{S_t}^\top(\K_{S_t,S_t})^{\dagger}(\K\w_t-\y)_{S_t},\qquad S_t\sim\mathcal D,\label{eq:cd}
\end{align}
which only updates the coordinates of $\w_t$ indexed by $S_t$. Letting $\w^*$ denote the solution of the linear system $\K\w=\y$, we can cast \eqref{eq:cd} as a stochastic contraction process $(\{\mathcal{A}_t\}_{t\geq 0},1)$ by choosing $\Deltab_t = \K^{1/2}(\w_t-\w^*)$ and $\Pib_t = \K_{S_t}^{1/2}(\K_{S_t,S_t})^{\dagger}\K_{S_t}^{1/2}$. Here, the canonical distribution $\mathcal D$ for single-sample coordinate descent is $\Pr[S\!=\!\{i\}] \propto K_{i,i}$, which yields average rate $\bar\Pib = \K/\tr(\K)$.

\paragraph{Convergence rate of a stochastic contraction process.} The above examples are of base algorithms, i.e., without momentum acceleration. Before we analyze their accelerated variants through the stochastic momentum framework, we establish their base convergence rate via our general formalism.
\begin{proposition}\label{p:simple-rate}
    Given a stochastic contraction process $(\{\mathcal A_t\}_{t\geq 0},\alpha)$ with average rate $\bar\Pib$, and an initial vector $\Deltab_0\in\mathrm{range}(\bar\Pib)$, consider the sequence $\Deltab_{t+1}=\mathcal A_t(\Deltab_t;\alpha)$. Then, we have:
    \begin{align*}
        \E\,\|\Deltab_t\|^2 \leq \Big(1 -
        \frac{1}{\kappa}\Big)^t\|\Deltab_0\|^2,\qquad\text{where}\quad\kappa = \alpha/\lambda_{\min}^{+}(\bar\Pib).
    \end{align*}
\end{proposition}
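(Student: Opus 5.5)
The plan is a one-step contraction estimate in expectation, iterated by conditioning on the past. The randomness at step $t$ is independent of $\Deltab_t$, so I condition on $\Deltab_t$ and use the update $\Deltab_{t+1}=(\I-\alpha\Pib_t)\Deltab_t$ of Definition~\ref{d:stochastic-contraction}. Expanding the square gives
\begin{align*}
\E\big[\|\Deltab_{t+1}\|^2\mid\Deltab_t\big]
=\|\Deltab_t\|^2-2\alpha\,\Deltab_t^\top\bar\Pib\Deltab_t+\alpha^2\,\Deltab_t^\top\E[\Pib_t^2]\Deltab_t .
\end{align*}
Because $\zero\preceq\Pib_t\preceq\I$, we have $\Pib_t^2\preceq\Pib_t$, hence $\E[\Pib_t^2]\preceq\bar\Pib$. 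This yields
\begin{align*}
\E\big[\|\Deltab_{t+1}\|^2\mid\Deltab_t\big]\le\|\Deltab_t\|^2-\alpha(2-\alpha)\,\Deltab_t^\top\bar\Pib\Deltab_t .
\end{align*}

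Next I need the invariance $\Deltab_t\in\mathrm{range}(\bar\Pib)$ almost surely, so that $\Deltab_t^\top\bar\Pib\Deltab_t\ge\lambda_{\min}^+(\bar\Pib)\|\Deltab_t\|^2$. This follows by induction. Since $\Pib_t\succeq\zero$ and $\E[\Pib_t]=\bar\Pib$, any $\v\in\ker(\bar\Pib)$ satisfies $\E[\v^\top\Pib_t\v]=0$. Therefore $\Pib_t\v=\zero$ almost surely, so $\mathrm{range}(\Pib_t)\subseteq\mathrm{range}(\bar\Pib)$ almost surely. Consequently $(\I-\alpha\Pib_t)$ maps $\mathrm{range}(\bar\Pib)$ into itself. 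Taking total expectation and unrolling over $t$ then gives the geometric bound
\begin{align*}
\E\|\Deltab_t\|^2\le\big(1-\alpha(2-\alpha)\lambda_{\min}^+(\bar\Pib)\big)^t\|\Deltab_0\|^2 .
\end{align*}

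The main obstacle is matching this factor to the rate $1-1/\kappa$ with $\kappa=\alpha/\lambda_{\min}^+(\bar\Pib)$ exactly as the statement is worded. For $\alpha=1$, the case used by every example above (Kaczmarz, coordinate descent), the two agree exactly: $\alpha(2-\alpha)=1=1/\alpha$. For $\alpha<1$, however, $\alpha(2-\alpha)<1<1/\alpha$. The argument then gives only the weaker factor $1-\alpha(2-\alpha)\lambda_{\min}^+(\bar\Pib)$, and I do not see how to obtain the stated rate from it.

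Before proceeding, I would reconcile this by checking which convention for $\alpha$ the paper intends. One possibility is a rescaling in which the effective step is $1/\alpha$. The other is to read $\kappa$ as $1/(\alpha\lambda_{\min}^+(\bar\Pib))$, using $\alpha(2-\alpha)\ge\alpha$. I would then state the proof for that convention. Absent such a reinterpretation, the plan establishes the proposition exactly as written only in the case $\alpha=1$.
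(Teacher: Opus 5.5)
Your argument is correct and is essentially the paper's proof. The paper likewise bounds the one-step second moment using $\zero\preceq\Pib_t\preceq\I$, in the slightly cruder form $\E[(\I-\alpha\Pib_t)^2]\preceq\I-\alpha\bar\Pib$. It then invokes invariance of $\mathrm{range}(\bar\Pib)$, which the paper only asserts and you actually justify.

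The discrepancy you found for $\alpha<1$ is a typo in the statement, not a gap in your proof. The paper's own derivation gives the factor $1-\alpha\lambda_{\min}^+(\bar\Pib)$, i.e.\ $\kappa=1/(\alpha\lambda_{\min}^+(\bar\Pib))$. That is your second reading, and your sharper factor $1-\alpha(2-\alpha)\lambda_{\min}^+(\bar\Pib)$ implies it. The literal claim with $\kappa=\alpha/\lambda_{\min}^+(\bar\Pib)$ fails for every $\alpha\in(0,1)$. For example, take $d=1$ and $\Pib_t\in\{0,1\}$ with $\Pr[\Pib_t=1]=\lambda$. Then $\E\|\Deltab_1\|^2=(1-\alpha(2-\alpha)\lambda)\|\Deltab_0\|^2>(1-\lambda/\alpha)\|\Deltab_0\|^2$, because $\alpha^2(2-\alpha)<1$.
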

We will refer to $\kappa$ as the stochastic condition number associated with the process $(\{\mathcal{A}_t\}_{t\ge 0},\alpha)$. For instance, by plugging in our concrete examples of (single-sample) randomized Kaczmarz and coordinate descent \cite{strohmer2009randomized}, we recover the classical convergence guarantees for these methods, with stochastic condition numbers $\kappa_{\mathrm{RK}} = \|\A\|_F^2\|\A^\dagger\|^2$ and $\kappa_{\mathrm{CD}} = \tr(\K)\|\K^{\dagger}\|$.

\paragraph{Effect of mini-batching.} The stochastic contraction framework is naturally amenable to mini-batching of the updates via simple averaging of the contractions. It is thus natural to ask how mini-batching affects the convergence dynamics. 

\begin{definition}\label{d:mini-batching}
    Given a stochastic contraction $\mathcal {A}$ with average rate $\bar\Pib$, let $\mathcal{A}^{[m]}$ denote the average of $m$ independent copies of $\mathcal{A}$, i.e.,
    \begin{align}
        \mathcal{A}^{[m]}(\Deltab;\alpha) := \frac1m\sum_{i=1}^m\mathcal{A}^{(i)}(\Deltab;\alpha),\qquad\text{where}\quad \mathcal{A}^{(i)}\sim \mathcal{A}.\label{eq:mini-batching}
    \end{align}
\end{definition}
For example, if $(\{\mathcal{A}_t\}_{t\geq 0},\alpha)$ is a stochastic contraction process for the SGD algorithm given~by~\eqref{eq:sgd-dynamics}, then $(\{\mathcal{A}_t^{[m]}\}_{t\geq 0},\alpha)$ corresponds to mini-batch SGD where $\g(\w_t) = \frac1{mL}\sum_{i=1}^m\nabla f(\w_t,\xi_{t}^{(i)})$. Note that mini-batching is different from choosing a larger block set $S_t$ in randomized Kaczmarz or coordinate descent, since those schemes do more than just averaging the samples (such as inverting the matrices $\A_{S_t}$ and $\K_{S_t,S_t}$). In fact, combining blocking and mini-batching in RK and CD is a very natural strategy for making these methods scalable to parallel architectures, as discussed later.
\begin{proposition}\label{p:mini-batching}
    If $\mathcal {A}$ is a stochastic contraction  with average rate $\bar\Pib$, then $\mathcal{A}^{[m]}$ is also a stochastic contraction with the same average rate $\bar\Pib$. Moreover, if $\Pib^{[m]}$ is the stochastic rate matrix associated with $\mathcal{A}^{[m]}$, then it follows that:
    \begin{align}
        \E\Big[\big(\Pib^{[m]}-\bar\Pib\big)^2\Big] \preceq \frac{1}{m}\cdot\bar\Pib(\I-\bar\Pib).\label{eq:mini-batching-variance}
    \end{align}
\end{proposition}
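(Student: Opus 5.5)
We first identify the stochastic rate matrix of $\mathcal{A}^{[m]}$. By linearity,
\begin{align*}
\mathcal{A}^{[m]}(\Deltab;\alpha) = \frac1m\sum_{i=1}^m(\I-\alpha\Pib^{(i)})\Deltab = \Big(\I-\alpha\,\frac1m\sum_{i=1}^m\Pib^{(i)}\Big)\Deltab ,
\end{align*}
so $\mathcal{A}^{[m]}$ has the form required by Definition \ref{d:stochastic-contraction} with $\Pib^{[m]} := \frac1m\sum_{i=1}^m\Pib^{(i)}$, where the $\Pib^{(i)}$ are i.i.d.\ copies of $\Pib$. To check that this is a valid rate matrix, note that $\zero\preceq\Pib^{(i)}\preceq\I$ for each $i$. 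The set $\{\M:\zero\preceq\M\preceq\I\}$ is convex, so the average also satisfies $\zero\preceq\Pib^{[m]}\preceq\I$. Linearity of expectation then gives $\E[\Pib^{[m]}]=\bar\Pib$, which establishes the first claim.

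For the variance bound, we write $\Pib^{[m]}-\bar\Pib = \frac1m\sum_i \mathbf{X}_i$ with $\mathbf{X}_i := \Pib^{(i)}-\bar\Pib$. The $\mathbf{X}_i$ are independent, symmetric, and mean zero. Expanding the square gives
\begin{align*}
\E\Big[\big(\Pib^{[m]}-\bar\Pib\big)^2\Big] = \frac1{m^2}\sum_{i,j}\E[\mathbf{X}_i\mathbf{X}_j] .
\end{align*}
For $i\neq j$, independence gives $\E[\mathbf{X}_i\mathbf{X}_j]=\E[\mathbf{X}_i]\E[\mathbf{X}_j]=\zero$. So only the diagonal terms survive, and the expression reduces to $\frac1m\E[(\Pib-\bar\Pib)^2]$.

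It remains to show the single-sample bound $\E[(\Pib-\bar\Pib)^2]\preceq\bar\Pib(\I-\bar\Pib)$. Expanding gives $\E[(\Pib-\bar\Pib)^2]=\E[\Pib^2]-\bar\Pib^2$. This step is the only one that is not purely mechanical. The key fact is that $\zero\preceq\Pib\preceq\I$ implies $\Pib^2\preceq\Pib$. To see this, diagonalize $\Pib$: its eigenvalues $\lambda$ lie in $[0,1]$, so $\lambda^2\le\lambda$, and $\Pib^2$ and $\Pib$ share eigenvectors. Taking expectations preserves the Loewner order, so $\E[\Pib^2]\preceq\bar\Pib$. Therefore
\begin{align*}
\E[(\Pib-\bar\Pib)^2]\preceq\bar\Pib-\bar\Pib^2=\bar\Pib(\I-\bar\Pib).
\end{align*}
The right-hand side is symmetric and psd because $\zero\preceq\bar\Pib\preceq\I$, so the bound is meaningful.

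Combining this with the factor $\frac1m$ from the cross-term cancellation yields \eqref{eq:mini-batching-variance}.
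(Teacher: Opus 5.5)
Your proof is correct and takes essentially the same route as the paper: you write $\Pib^{[m]}$ as an average of i.i.d.\ copies, use independence to remove the cross terms and obtain the $\frac1m$ factor, and finish with $\E[\Pib^2]\preceq\bar\Pib$, which follows from $\zero\preceq\Pib\preceq\I$. The differences are cosmetic: you center the copies before expanding, whereas the paper expands the raw second moment, and you explicitly check the first claim (that $\zero\preceq\Pib^{[m]}\preceq\I$ and $\E[\Pib^{[m]}]=\bar\Pib$), which the paper only calls easy to see.
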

Proposition \ref{p:mini-batching}, together with Proposition \ref{p:simple-rate}, illustrates why mini-batching of SGD does not in general lead to faster convergence rates, since the average rate matrix $\bar\Pib$ (and thus also the associated stochastic condition number) remains the same regardless of $m$. However, mini-batching does reduce the variance of the stochastic noise matrix, as shown in \eqref{eq:mini-batching-variance}, which will be crucial for the stochastic momentum dynamics. We defer the proofs of Propositions \ref{p:simple-rate} and \ref{p:mini-batching} to Appendix \ref{s:prop_proofs}.

\section{Main result}\label{s:main_result}
We are now ready to state our main result, which characterizes the convergence dynamics of a stochastic momentum process (Definition \ref{d:stochastic-momentum}) corresponding to stochastic contractions with mini-batching (Definitions \ref{d:stochastic-contraction} and \ref{d:mini-batching}). Crucially, our result makes no assumptions on the mini-batch size $m$ or the stochastic noise, and applies to both heavy-ball and Nesterov momentum.
\begin{theorem}\label{t:L2_iterate_0}
    Consider a stochastic contraction process 
    $(\{\mathcal{A}_t\}_{t\ge0},\alpha)$ with $\alpha\in[0,1]$, average rate $\bar\Pib$, and condition number $\kappa = \alpha/\lambda_{\min}^+(\bar\Pib)$. For $m\geq 1$, $\Deltab_0\in\mathrm{range}(\bar\Pib)$, let $\Deltab_{-1}=\zero$, $\{\Deltab_t\}_{t\ge0}$ be a stochastic momentum process for $(\{\mathcal{A}_t^{[m]}\}_{t\ge0},\alpha)$ with $\omega\in[0,\alpha]$ and momentum parameter
    \begin{align*}
        \beta = 
        \begin{cases}
            0 & \text{if }\quad m\leq c_1,\\
            1 - \frac{c_1}{2m} & \text{if }\quad m \in [c_1, c_2\sqrt\kappa],\\
            1 - \frac{c_1}{2c_2\sqrt\kappa} & \text{if }\quad m\geq c_2\sqrt\kappa,
        \end{cases}
    \end{align*}
    where $c_1,c_2>0$ are absolute constants. Then, there are absolute constants $C,c>0$ such that:
    \begin{align*}
        \E\,\|\Deltab_t\|^2 \leq Ct^3 \bigg(1 - c \min\Big\{\frac{ m}{\kappa},\frac1{\sqrt\kappa}\Big\}\bigg)^t\|\Deltab_0\|^2\qquad\text{for any }t\geq 1.
    \end{align*}
\end{theorem}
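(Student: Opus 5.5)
We would begin by lifting the momentum recursion to a first-order system on $\R^{2d}$. Writing $\Pib_t$ for the stochastic rate matrix of $\mathcal{A}_t^{[m]}$ and $\x_t = (\Deltab_t,\Deltab_{t-1})$, Definition \ref{d:stochastic-momentum} gives $\x_{t+1} = \M_t\x_t$. Here
\begin{align*}
\M_t = \begin{pmatrix} (1+\beta)\I - (\alpha+\beta\omega)\Pib_t & -\beta(\I-\omega\Pib_{t-1})\\ \I & \zero\end{pmatrix}.
\end{align*}
Because $\Pib_{t-1}$ appears, we would instead track the augmented state $(\Deltab_t, \mathcal{A}_{t-1}(\Deltab_{t-1};\omega))$. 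This makes each step matrix depend only on the fresh sample $\Pib_t$. We then decompose $\M_t = \bar\M + \mathbf{E}_t$, where $\bar\M$ is obtained by replacing $\Pib_t$ with $\bar\Pib$, and $\mathbf{E}_t$ is mean zero and linear in $\Pib_t-\bar\Pib$. Proposition \ref{p:mini-batching} then bounds its second moment by $\frac{C}{m}\bar\Pib(\I-\bar\Pib)$, with constants depending on $\alpha,\omega,\beta\le 1$. Since $\Deltab_0\in\mathrm{range}(\bar\Pib)$ and every $\Pib_t$ has range contained in $\mathrm{range}(\bar\Pib)$ almost surely (because $\Pib_t\preceq$ a multiple of its mean's support), we may restrict everything to that range, where $\lambda_{\min}(\bar\Pib)=\alpha/\kappa$. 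The case $m\le c_1$, where $\beta=0$, follows directly from Proposition \ref{p:simple-rate}, so we assume $m\ge c_1$ from now on.

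\textbf{Bias step.} We diagonalize $\bar\Pib = \U\Lambdab\U^\top$ and permute coordinates so that $\bar\M$ becomes block-diagonal with $2\times2$ blocks $\T_i$, one for each eigenvalue $\lambda_i$ of $\bar\Pib$. The eigenvalues of $\T_i$ are the roots of $z^2-(1+\beta-a_i)z+\beta b_i$, where $a_i=(\alpha+\beta\omega)\lambda_i$ and $b_i=1-\omega\lambda_i$. We would show that these roots have modulus at most $\rho := 1-c\min\{m/\kappa,1/\sqrt\kappa\}$, treating two regimes separately.
\begin{itemize}
\item For small $\lambda_i$, with $1-\beta\gtrsim\sqrt{\lambda_i}$, the roots are real. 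The larger root is then about $1-a_i/(1-\beta)$, which is at most $1-c\lambda_i m$ or at most $1-c\sqrt{\lambda_i}$.
\item Otherwise the roots are complex, of modulus $\sqrt{\beta b_i}\le\sqrt\beta\le 1-(1-\beta)/2$. This is of order $1-c/m$ or $1-c/\sqrt\kappa$.
\end{itemize}
In both regimes we use $\lambda_i\ge\alpha/\kappa$. Rather than diagonalizing, we take a Schur form $\T_i=\Q_i\R_i\Q_i^*$ with $\R_i$ upper triangular and off-diagonal entry of size $O(1)$. This gives $\|\T_i^k\|\le (k+1)\rho^k\cdot O(1)$, avoiding any eigenvector condition number.

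\textbf{Variance step.} We would expand $\E[\x_t\x_t^\top]$ via the recursion $\Sigmab_{t+1}=\bar\M\Sigmab_t\bar\M^\top+\E[\mathbf{E}_t\Sigmab_t\mathbf{E}_t^\top]$, and unroll it as a sum over the times at which noise is injected. Each noise term is controlled by the trace against $\bar\Pib/m$. The key estimate is then a Neumann-type sum: we need
\begin{align*}
\frac{C}{m}\sum_{k\ge0}\max_i \lambda_i\|\T_i^k\|^2 \le \frac12
\end{align*}
in an appropriately weighted sense. Using the Schur bound with the damping $1-\rho_i\gtrsim \lambda_i m$ or $\sqrt{\lambda_i}$, this sum is about $\lambda_i/(m(1-\rho_i)^3)$. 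We would verify that it stays bounded by a small constant under the chosen $\beta$, taking $c_1$ large and $c_2$ small. This is where the constraint $m\lesssim\sqrt\kappa$ saturates.

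\textbf{Main obstacle.} The hardest part is this last summation, because the polynomial factors from the Schur bound fight the $1/m$ gain. Our first attempt would be to bound $\sum_k k^2\rho_i^{2k}\lambda_i/m$ separately for real and complex blocks, using the sharper fact that the off-diagonal Schur entry is small when the roots nearly coincide. Once that is done, the unrolled series gives $\|\Sigmab_t\|\lesssim t^2\cdot t\,\rho^t\|\Deltab_0\|^2$, which yields the claimed $Ct^3\rho^t$ bound.
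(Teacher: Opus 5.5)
The gap is in the variance kernel estimate, which you yourself flag as the main obstacle. The bound you plan there is false, and the repair you propose points the wrong way.

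Using only the Schur bound $\|\T_i^k\|\lesssim (k+1)|\gamma_{i1}|^k$, the block-$i$ kernel mass is
$\frac{\lambda_i}{m}\sum_k\|\T_i^k\|^2\approx \lambda_i/(m(1-|\gamma_{i1}|)^3)$.
This is \emph{not} a small constant for the prescribed $\beta$, except in a narrow band of $\lambda_i$. Write $\phi=1/(1-\beta)\asymp m$.
A complex block with $\lambda_i=\Theta(1)$ has $1-|\gamma_{i1}|\approx 1/(2\phi)$, so the estimate is $\Theta(\phi^3/m)=\Theta(m^2)$.
Now take the real block at $\lambda_i=1/\kappa$ (with $\alpha=1$) and $m\ll\sqrt\kappa$. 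It has $1-\gamma_{i1}\approx\phi\lambda_i$, so the estimate is $\Theta(1/(m\phi^3\lambda_i^2))=\Theta(\kappa^2/m^4)$.
No choice of absolute constants $c_1,c_2$ fixes this. The $k^2$ factor is absorbed only when $\lambda_i\asymp(1-\beta)^2$.

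The ``sharper fact'' you invoke is also wrong. The off-diagonal Schur entry $x_i$ is not small near coincidence: at a double root, for the paper's $\T_i$, $|x_i|=1+\beta(1-\omega\lambda_i)$. At such a root $\U_i^H\T_i^k\U_i$ genuinely has the entry $x_i k\gamma_{i1}^{k-1}$.

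What is missing is the separation-dependent form of the Schur bound together with a three-regime case split. This is the content of Lemmas \ref{l:tct_explicit_bound} and \ref{l:rho_bound}. The off-diagonal entry of $\U_i^H\T_i^k\U_i$ is $x_i(\gamma_{i1}^k-\gamma_{i2}^k)/(\gamma_{i1}-\gamma_{i2})$. Hence
$\|\T_i^k\|\lesssim|\gamma_{i1}|^k\min\{k/|\gamma_{i1}|,\,2/|\gamma_{i1}-\gamma_{i2}|\}$,
so the polynomial factor must be \emph{replaced} by the inverse root gap whenever the roots are separated. The three regimes, relative to the threshold $(1-\beta)^2/(1+\beta)^2$, are:
\begin{itemize}
\item \emph{Well above (complex roots).} Here $|\gamma_{i1}-\gamma_{i2}|^2\gtrsim\lambda_i(1-\lambda_i)$. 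This cancels against the variance factor $\lambda_i(1-\lambda_i)/m$ from Proposition \ref{p:mini-batching}, leaving $O(\phi/m)$.
\item \emph{Well below (real roots).} Here $|\gamma_{i1}-\gamma_{i2}|\gtrsim 1/\phi$ while the damping is $\gtrsim\phi\lambda_i$, again giving $O(\phi/m)$.
\item \emph{Near the threshold, on either side.} Only here do you keep the factor $k$. But $\lambda_i=\Theta(1/\phi^2)$ and the damping is $\Theta(1/\phi)$, so the $k^2$ is absorbed and the total is $O(\phi/m)$.
\end{itemize}
This is exactly where $\phi\le m/C$ is used. The cap $\beta\le 1-\Theta(1/\sqrt\kappa)$ comes from the bias side, namely keeping the smallest eigenvalue in the real-root regime. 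It does not come from this sum, contrary to your remark.

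With that fix, the rest of your plan is a sound alternative to the paper's argument. Consider a forward renewal inequality for the weighted error $\E\|\Deltab_t\|^2_{\bar\Pib(\I-\bar\Pib)}$, with kernel $\frac{C}{m}\max_i\lambda_i(1-\lambda_i)\|\T_i^k\|^2$. The weighting is what pairs $\lambda_i$ with block $i$. This replaces the paper's backward recursion for $\|\E[\Sigmab_t^\top\Sigmab_t]\|$ in Lemma \ref{l:pt_recursion1}. Your augmented state is also a genuine improvement: it makes the step matrices independent. The paper's $\Y_t$ lacks this property, since it shares $\Pib_{t-1}$ with the previous factor when $\omega>0$.
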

\begin{remark}
    In particular, we get $\E\,\|\Deltab_t\|^2 \leq \epsilon\|\Deltab_0\|^2$ after $t = O((\kappa/m + \sqrt\kappa)\log(\kappa/\epsilon))$ iterations, compared to $t = O(\kappa\log(1/\epsilon))$ without momentum. Thus, for $m\in[1,\Theta(\sqrt\kappa)]$, momentum enables perfect parallelization up to a logarithmic factor (which is caused by the $t^3$ term in the bound).
\end{remark}
\begin{remark}
    In the regime $m\in[1,\Theta(\sqrt\kappa)]$, our result yields a simple problem-independent momentum parameter $\beta = 1 - \Theta(1/m)$. Numerical experiments confirm that this choice works well in practice.
\end{remark}

\subsection{Proof sketch}\label{s:technical}
First, without loss of generality we can assume that $\alpha=1$ (otherwise, we can simply rescale $\bar\Pib$ appropriately). Also, note that the regime of $1\leq m\leq c_1$ follows immediately from Proposition \ref{p:simple-rate}. Next, let $\{\Pib_t^{[m]}\}_{t\geq 0}$ be the stochastic noise matrices associated with $\{\mathcal{A}_t^{[m]}\}_{t\ge 0}$. From Proposition~\ref{p:mini-batching}, these matrices satisfy the variance bound \eqref{eq:mini-batching-variance}. In fact, this is the only property we need out of the mini-batching, so to simplify the notation, we will drop the superscript and consider a sequence $\{\Pib_t\}_{t\geq 0}$ of matrices such that $\E\,(\Pib_t-\bar\Pib)^2\preceq\frac1m\bar\Pib(\I-\bar\Pib)$. With these conventions, after simple calculations the stochastic momentum process \eqref{eq:stochastic-momentum} can be expressed as the following recursion:
\begin{align}
    \Deltab_{t+1} = \big((1+\beta)\I-(1+\beta\omega)\Pib_t\big)\Deltab_{t} -\beta\big(\I-\omega\Pib_{t-1}\big)\Deltab_{t-1}. \label{e:sto_proc}
\end{align}
For notational inconvenience, we restrict ourselves to NAG acceleration in this proof sketch (i.e. $\omega=\alpha=1$). The detailed proof in the appendix covers HBM and NAG momentum simultaneously.
Let $\bar\Pib = \V\Lambdab\V^\top$ be the eigendecomposition of $\bar\Pib$, where $\Lambdab = \diag(\lambda_1,\lambda_2,\cdots,\lambda_d)$ and $\V\V^\top=\V^\top\V=\I$. Let $\rank(\bar\Pib) =r \leq d$, so that $\lambda_{i}=0 $ for $i>r$. Following prior works \cite{jain2017accelerating,bollapragada2022fast,pan2023accelerated}, we study the dynamics of stochastic process $\Deltab_t$ in (\ref{e:sto_proc}) via the following two-step matrix transition rule:
\begin{align}
        \begin{bmatrix}
            \V^\top\Deltab_{t} \\ \V^\top\Deltab_{t-1}
        \end{bmatrix} = \underbrace{\begin{bmatrix} (1+\beta)(\I-\V^\top\Pib_t\V) & -\beta(\I-\V^\top\Pib_{t-1}\V)\\
        \I & \zero\end{bmatrix}}_{:=\Y_t}\cdot \begin{bmatrix}
            \V^\top\Deltab_{{t-1}} \\ \V^\top\Deltab_{t-2}
        \end{bmatrix} \label{e:stoc_proc_two_step}
    \end{align}

We break down the proof of Theorem \ref{t:L2_iterate_0} into two main components, starting with the convergence of the expected iterates (bias), and then analyzing the effect of the stochastic noise (variance).

\subsubsection{Bias: Convergence of $\|\E\Deltab_t\|$} 

The convergence behavior of the expected iterates is characterized by a simpler transition rule:
\begin{align} 
     \E\begin{bmatrix}
            \V^\top\Deltab_{t} \\ \V^\top\Deltab_{t-1}
        \end{bmatrix} = \T^{t} \begin{bmatrix}
            \V^\top\Deltab_{0} \\ \V^\top\Deltab_{-1}
        \end{bmatrix},
\quad\text{where }\ \T:=\E[\Y_t] = \begin{bmatrix} (1+\beta)(\I-\Lambdab) & -\beta(\I-\Lambdab)\\
        \I & \zero\end{bmatrix}.\label{e:exp_yt}
\end{align}
 
The first thing that comes to mind is to use $\big\|\T^t\| \leq \|\T\|^t$, but, unfortunately, for non-zero $\beta$ the spectral norm of $\T$ is greater than $1$, making this naive idea ineffective. However, the spectral radius of $\T$ is less than $1$ and this paves the way ahead, suggesting a two step approach:
\vspace{0.3cm}

~\quad\textbf{Step 1:} Bound the spectral radius of $\T^{t}$, denoted as $\rho(\T^t)$;
\vspace{0.3cm}

~\quad\textbf{Step 2:} Using structural properties of $\T$, bound $\big\|\T^t\|$ in terms of $\rho(\T^t)$.

\paragraph{Step 1.} 
By carefully permuting rows and columns of $\T$, we can transform it into a block-diagonal matrix with $2$ by $2$ blocks on the diagonal. In particular, the following lemma holds:

\begin{lemma}[Block-diagonalization]\label{l:perm_0}
        There exists a permutation matrix $\P$ such that,
        \begin{align*}
           \P\T\P^\top = \diag(\T_1,\T_2,\cdots,\T_d),\quad\text{where }\ 
        \T_i :=  \begin{bmatrix}
            (1+\beta)(1-\lambda_i) & -\beta(1-\lambda_i) \\
            1&0
        \end{bmatrix}.
        \end{align*}
    \end{lemma}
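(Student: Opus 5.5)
The plan is to write down the permutation $\P$ explicitly and check directly that conjugating $\T$ by it produces the block-diagonal form. Every block of $\T$ in \eqref{e:exp_yt} is diagonal: $(1+\beta)(\I-\Lambdab)$, $-\beta(\I-\Lambdab)$, $\I$ and $\zero$. Hence the entry $\T_{jk}$ can be nonzero only when $j$ and $k$ agree modulo $d$. In other words, $\T$ couples coordinate $i$ only with coordinate $d+i$.

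\textbf{Step 1: define the permutation.} For $i=1,\dots,d$, let $\P$ send the standard basis vector $\mathbf e_{2i-1}$ to $\mathbf e_i$ and $\mathbf e_{2i}$ to $\mathbf e_{d+i}$. Equivalently, row $2i-1$ of $\P$ is $\mathbf e_i^\top$ and row $2i$ of $\P$ is $\mathbf e_{d+i}^\top$. This interleaves the index set $(1,\dots,d,d+1,\dots,2d)$ into the order $(1,d+1,2,d+2,\dots,d,2d)$. Each row and each column of $\P$ contains exactly one $1$, so $\P$ is a permutation matrix and $\P^\top\P=\I$.

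\textbf{Step 2: compute the entries.} Using $(\P\T\P^\top)_{ab}=\mathbf e_{\sigma(a)}^\top\T\,\mathbf e_{\sigma(b)}$, where $\sigma(2i-1)=i$ and $\sigma(2i)=d+i$, we compute the entries directly. If $a$ and $b$ lie in different pairs $\{2i-1,2i\}$ and $\{2j-1,2j\}$ with $i\neq j$, then $\sigma(a)$ and $\sigma(b)$ are not congruent modulo $d$, so the entry vanishes. Within a single pair $i$, the four entries are the $i$-th diagonal entries of the four blocks of $\T$:
\begin{align*}
\T_{i,i}=(1+\beta)(1-\lambda_i),\quad \T_{i,d+i}=-\beta(1-\lambda_i),\quad \T_{d+i,i}=1,\quad \T_{d+i,d+i}=0.
\end{align*}
These are exactly the entries of $\T_i$. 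Therefore $\P\T\P^\top=\diag(\T_1,\dots,\T_d)$.

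There is no real obstacle here; the only care needed is bookkeeping of the indices. The same argument shows more generally that any $2\times2$ block matrix whose blocks are all diagonal is permutation-similar to a direct sum of $2\times2$ matrices. We will reuse this fact later for the stochastic matrices $\Y_t$ after conditioning: the blocks of $\E[\Y_t]$ are diagonal in the $\V$-basis, and the same $\P$ block-diagonalizes it.
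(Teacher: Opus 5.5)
Your proof is correct and is essentially the paper's argument: the paper's $\P$ is exactly this interleaving permutation, characterized by $\P\,[\V^\top\Deltab_{t+1};\V^\top\Deltab_t]=[\v_1^\top\Deltab_{t+1};\v_1^\top\Deltab_t;\dots;\v_d^\top\Deltab_{t+1};\v_d^\top\Deltab_t]$, and block-diagonality follows from the same entrywise check you perform. One small slip: the matrix whose row $2i-1$ is $\mathbf e_i^\top$ maps $\mathbf e_i\mapsto\mathbf e_{2i-1}$, so your phrase ``$\P$ sends $\mathbf e_{2i-1}$ to $\mathbf e_i$'' actually describes $\P^\top$, which does not block-diagonalize $\T$ once $d\ge 3$; since Step 2 uses the row description, the argument is unaffected.
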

Due to Lemma \ref{l:perm_0}, taking maximum over the spectral radius of each $\T_i$ will upper bound $\rho(\T)$. Interestingly, the eigenvalues of $\T_i$ depend on interaction between $\lambda_i$ and $\beta$, and can be either real or complex depending upon whether $\lambda_i \leq \frac{(1-\beta)^2}{(1+\beta)^2}$. Let $\gamma_i$ denote the eigenvalue of $\T_i$ with the larger magnitude, so that $|\gamma_i|$ equals the spectral radius of $\T_i$. We prove the following result:
\begin{lemma}[Spectral radius of $\T_i$]\label{l:spec_radius_T_0}
Let $\beta = 1-\frac{1}{\phi}$ with $\phi \geq 2$. Then,
\begin{align*}
|\gamma_i|^2 \le 
\begin{cases}
\big(1-\tfrac{1}{\phi}\big)(1-\lambda_i), 
&\text{if }\ \lambda_i \ge \tfrac{(1-\beta)^2}{(1+\beta)^2}\; \text{ (i.e., when $\gamma_i$ is complex)},\\[4pt]
\big(1-\tfrac{\phi\lambda_i}{2}\big)^2(1-\lambda_i)^2, 
& \text{otherwise}\; \text{ (i.e., when $\gamma_i$ is real).}
\end{cases}
\end{align*}
\end{lemma}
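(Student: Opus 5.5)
The plan is a direct computation with the $2\times 2$ characteristic polynomial of $\T_i$. We may assume $\lambda_i<1$: if $\lambda_i=1$ then $\T_i$ has first row zero, its eigenvalues are both $0$, and both bounds hold trivially. Write $\lambda=\lambda_i$. The characteristic polynomial of $\T_i$ is
\begin{align*}
p(\gamma)=\gamma^2-(1+\beta)(1-\lambda)\gamma+\beta(1-\lambda).
\end{align*}
The product of the roots is $\beta(1-\lambda)$ and their sum is $(1+\beta)(1-\lambda)$. The discriminant is $(1-\lambda)\big[(1+\beta)^2(1-\lambda)-4\beta\big]$. It is negative exactly when $1-\lambda<4\beta/(1+\beta)^2$, that is, when $\lambda>1-\frac{4\beta}{(1+\beta)^2}=\frac{(1-\beta)^2}{(1+\beta)^2}$. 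This identifies the complex and real regimes in the statement; at the boundary the double root can be handled by either case.

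\emph{Complex case.} The two roots are complex conjugates, so $|\gamma_i|^2$ equals their product:
\begin{align*}
|\gamma_i|^2=\beta(1-\lambda)=\big(1-\tfrac1\phi\big)(1-\lambda),
\end{align*}
with equality. Nothing further is needed.

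\emph{Real case.} Here $\lambda\le (1-\beta)^2/(1+\beta)^2=\phi^{-2}(2-1/\phi)^{-2}\le \frac{4}{9\phi^2}$, using $\phi\ge2$. Both roots are nonnegative, since their sum and product are nonnegative. The larger root is
\begin{align*}
\gamma_i=\frac{1-\lambda}{2}\Big[(1+\beta)+\sqrt{D}\Big],\qquad D:=(1+\beta)^2-\frac{4\beta}{1-\lambda}.
\end{align*}
It therefore suffices to show $(1+\beta)+\sqrt D\le 2-\phi\lambda$. Since $1+\beta=2-1/\phi$, this is equivalent to $\sqrt D\le \frac1\phi-\phi\lambda$.

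The right-hand side is nonnegative, because $\phi\lambda\le \frac{4}{9\phi}<\frac1\phi$, so we may square both sides. For the left side, use $(1+\beta)^2-4\beta=(1-\beta)^2=\phi^{-2}$ together with $\frac{1}{1-\lambda}\ge 1+\lambda$. This gives $D\le \phi^{-2}-4\beta\lambda$. The right side squares to $\phi^{-2}-2\lambda+\phi^2\lambda^2$. It then suffices that $4\beta\lambda\ge 2\lambda-\phi^2\lambda^2$, which holds because $\beta=1-1/\phi\ge 1/2$. Squaring the resulting bound $\gamma_i\le(1-\phi\lambda/2)(1-\lambda)$ gives the claim, since both sides are nonnegative.

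The only delicate point is the real case. One must verify that $\frac1\phi-\phi\lambda\ge0$, which is where the assumption $\phi\ge2$ and the threshold on $\lambda$ enter, so that squaring is legitimate. One must also choose the relaxation $\frac{1}{1-\lambda}\ge1+\lambda$, which is loose enough to be simple but tight enough to absorb the $-2\lambda$ term.
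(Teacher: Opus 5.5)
Your proof is correct and follows essentially the paper's route: you compute the characteristic polynomial, read off $|\gamma_i|^2=\beta(1-\lambda_i)$ as the product of the roots in the complex (and double-root) case, and bound the larger real root explicitly by $(1-\frac{\phi\lambda_i}{2})(1-\lambda_i)$. The only difference is in how the square root is bounded in the real case. The paper uses the tangent-line estimate $\sqrt{x^2-y^2}\le x-\frac{y^2}{2x}$ with $x=1-\beta$ and $y^2=\frac{4\beta\lambda_i}{1-\lambda_i}$, which requires no sign check and yields the slightly stronger $\gamma_i\le 1-\lambda_i-\phi\beta\lambda_i$. You instead bound the radicand via $\frac{1}{1-\lambda}\ge 1+\lambda$ and square against the target, which is why you must first verify $\frac1\phi-\phi\lambda\ge 0$.
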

 
 It is easy to see that the maximum is attained at $i=r$, thus yielding an upper bound on $\rho(\T)$.
 
 \paragraph{Step 2.} To go from $\rho(\T^t)$ to $\big\|\T^t\|$, previous work \cite{bollapragada2022fast} assumes that $\T$ is diagonalizable, i.e., there exists an invertible matrix $\C$ such that $\C\T\C^{-1}$ is a diagonal matrix. This assumption implies,
 \begin{align*}
     \big\|\T^t\| \leq \big\|\C\|\cdot\big\|\C^{-1}\|\cdot\rho(\T^t).
 \end{align*}
Unfortunately, $\T$ need not be diagonalizable. In fact, if for any $i$, $\lambda_i = \frac{(1-\beta)^2}{(1+\beta)^2}$, then $\T$ is not diagonalizable. More problematically, it can be shown that
\begin{align*}
    \|\C\|\cdot\|\C^{-1}\| = \Omega \Big(\frac{\phi}{\delta}\Big), \quad  \text{where \ $\delta = \min_{i}\Big\{\big|\lambda_i - \frac{(1-\beta)^2}{(1+\beta)^2}\big|\Big\}$}.
\end{align*}
 
To avoid this blow up, we abandon the diagonalization approach and rely on a fundamental matrix decomposition result in linear algebra, known as Schur's decomposition.
       \begin{lemma}[Schur's decomposition \cite{plemmons1988matrix}]\label{l:schur_0}
    For every $i$, there exists a unitary matrix $\U_i$ such that
    \begin{align*}
        \U^{H}_i\T_i\U_i = \begin{bmatrix}
            \gamma_{i1} &x_i\\
            0 &\gamma_{i2}
        \end{bmatrix},
    \end{align*}
    where $\gamma_{i1}$ and $\gamma_{i2}$ are the two eigenvalues of $\T_i$, $1 \leq |x_i| \leq 3$ and $\U_i^{H}$ denotes the hermitian of $\U_i$.
\end{lemma}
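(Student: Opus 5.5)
The plan is to treat each $2\times 2$ block $\T_i$ from Lemma~\ref{l:perm_0} separately: obtain the triangular form from the standard Schur construction, then read off the off-diagonal entry $x_i$ from the unitary invariance of the Frobenius norm. I will carry out the construction and the upper bound $|x_i|\le 3$ in full. The lower bound $|x_i|\ge 1$ I can establish when $\gamma_i$ is real but not, as stated, for every $\lambda_i$ when $\gamma_i$ is complex; this is the main obstacle, discussed in the last paragraph.

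\textbf{Existence of the triangular form.} Let $\gamma_{i1}$ be an eigenvalue of $\T_i$ and $\mathbf u_1\in\mathbb C^2$ a unit eigenvector for it. Complete $\mathbf u_1$ to an orthonormal basis $\{\mathbf u_1,\mathbf u_2\}$ of $\mathbb C^2$ and set $\U_i=[\mathbf u_1,\mathbf u_2]$. The first column of $\U_i^H\T_i\U_i$ is $\U_i^H(\gamma_{i1}\mathbf u_1)=(\gamma_{i1},0)^\top$, so the matrix is upper triangular. Similarity preserves the spectrum, so its remaining diagonal entry is the other eigenvalue $\gamma_{i2}$. Call the off-diagonal entry $x_i$.

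\textbf{Computing $|x_i|$.} Since the Frobenius norm is unitarily invariant,
\begin{align*}
|\gamma_{i1}|^2+|\gamma_{i2}|^2+|x_i|^2=\|\T_i\|_F^2=a^2+b^2+1,
\end{align*}
where $a:=(1+\beta)(1-\lambda_i)=\tr(\T_i)$ and $b:=\beta(1-\lambda_i)=\det(\T_i)$. Because $\lambda_i\in[0,1]$ and $\beta\in[0,1)$, we have $0\le a\le 2$ and $0\le b<1$.

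\emph{Upper bound.} Dropping the eigenvalue terms gives $|x_i|^2\le a^2+b^2+1\le 4+1+1=6$, so $|x_i|\le\sqrt6\le 3$.

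\emph{Lower bound, real case.} When $\gamma_i$ is real, $\gamma_{i1}^2+\gamma_{i2}^2=a^2-2b$. Hence $|x_i|^2=1+b^2+2b=(1+b)^2\ge 1$.

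\emph{Lower bound, complex case.} When $\gamma_{i1},\gamma_{i2}$ are complex conjugates, $|\gamma_{i1}|^2=|\gamma_{i2}|^2=b$. Hence
\begin{align*}
|x_i|^2=1+a^2+b^2-2b,
\end{align*}
and $|x_i|\ge 1$ is equivalent to $a^2\ge b(2-b)$. Dividing by $1-\lambda_i$, this becomes
\begin{align*}
(1-\lambda_i)\big((1+\beta)^2+\beta^2\big)\ge 2\beta .
\end{align*}
This holds whenever $\lambda_i$ is bounded away from $1$: for $\beta$ near $1$, $\lambda_i\le 3/5$ suffices. For the complex case I would therefore first check whether the paper's setting guarantees such a bound on $\lambda_i$, for example through the normalization $\alpha=1$ with $\bar\Pib\preceq\I$.

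However, the inequality genuinely fails as $\lambda_i\to 1$. The case $\lambda_i=1$ itself is harmless: there $\T_i=\left[\begin{smallmatrix}0&0\\1&0\end{smallmatrix}\right]$ and $|x_i|=1$ exactly. But for $\lambda_i$ slightly below $1$ the eigenvalues are complex and $a^2<b(2-b)$, so $|x_i|<1$. The lower bound is thus not true for every $\lambda_i\in[0,1]$. I would resolve this in one of two ways. One option is to note that only the upper bound $|x_i|\le 3$ is needed later, when powers of the triangular blocks are bounded via $|\gamma|^t$ and $t\,|x_i|\,|\gamma|^{t-1}$. The other is to use the weaker statement $|x_i|\le 3$ with $|x_i|^2\ge a^2+b^2+1-2\max(b,\,a^2/2)$ in place of the lower bound.
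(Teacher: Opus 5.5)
Your analysis is correct. Where you get stuck is a genuine error in the statement, not a gap in your argument.

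\textbf{Comparison with the paper.} The existence part is the construction the paper uses. The explicit $\U_i$ in Lemma~\ref{l:schur} has first column $(\gamma_{i1},1)^\top/\sqrt{1+|\gamma_{i1}|^2}$, a unit eigenvector of $\T_i$, completed by the orthogonal vector $(-1,\bar\gamma_{i1})^\top/\sqrt{1+|\gamma_{i1}|^2}$. Beyond that, the paper gives no argument for $1\le|x_i|\le 3$; it asserts the bounds next to a citation that only provides existence. Your Frobenius-norm identity supplies the missing upper bound.

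A slightly sharper route is to push the paper's explicit $\U_i$ through. This gives the closed form $x_i=-(1+\bar\gamma_{i1}\gamma_{i2})$. Since $|\gamma_{i1}\gamma_{i2}|=\det\T_i=b$, it follows that $1-b\le|x_i|\le 1+b<2$. In the real case $\bar\gamma_{i1}\gamma_{i2}=b\ge 0$, so $|x_i|=1+b$, as you found. In the complex case $\bar\gamma_{i1}\gamma_{i2}=\gamma_{i2}^2$, and $|x_i|<1$ exactly when $\mathrm{Re}(\gamma_{i2}^2)<-b^2/2$, which is your condition $a^2<b(2-b)$.

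\textbf{Your counterexample.} Take $\beta=\tfrac12$ and $\lambda_i=0.9$. Then $a=0.15$ and $b=0.05$, the eigenvalues are complex, and $|x_i|^2=a^2+(1-b)^2=0.925<1$. In fact $\lambda_i=0.9$ gives $|x_i|<1$ for every $\beta\in[\tfrac12,1)$ the paper allows.

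Nothing in the setting excludes this. After normalizing to $\alpha=1$ one only has $\zero\preceq\bar\Pib\preceq\I$, and the hypotheses of Theorem~\ref{t:L2_iterate} constrain only $\lambda_r$. So a top eigenvalue near $1$ is admissible, e.g.\ randomized Kaczmarz with a nearly rank-one $\A$.

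Your first resolution is the right one. The lower bound is never used downstream: Lemma~\ref{l:tct_explicit_bound} and Corollary~\ref{c:spec_norm_ti} only invoke $|x_i|\le 3$. The lemma should simply be stated with the upper bound.

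\textbf{Two minor corrections to your side remarks.}
\begin{itemize}
\item The uniform sufficient condition in the complex case is $\lambda_i\le 2-\sqrt2\approx 0.586$. This is because $\max_{\beta}2\beta/((1+\beta)^2+\beta^2)=\sqrt2-1$, attained at $\beta=1/\sqrt2$. At $\lambda_i=3/5$ one actually gets $|x_i|<1$ for every $\beta\in(\tfrac12,1)$, so $3/5$ is only the limiting threshold as $\beta\to 1$.
\item For the general-$\omega$ blocks of Lemma~\ref{l:perm}, the case $\lambda_i=1$ is not harmless for heavy ball ($\omega=0$): there $|x_i|^2=\beta^2+(1-\beta)^2<1$. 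Your upper-bound argument does carry over verbatim to general $\omega$, since $a=(1+\beta)-(1+\beta\omega)\lambda_i\in[0,2]$ and $b=\beta(1-\omega\lambda_i)\in[0,1)$.
\end{itemize}
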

Using Schur's decomposition it readily follows that for any $i$,
   \begin{align*}
       \|\T_i^t\| = \|\U_i^H\T_i^t\U_i\| \leq O(t)\cdot |\gamma_{i}|^{t-1}.
   \end{align*}
Using the above in conjunction with Lemma \ref{l:spec_radius_T_0} we show that:
  \begin{theorem}[Convergence of $\|\E\Deltab_t\|$]\label{t:exp_iterate_0} For $\rho = \big(1-\frac{\phi}{2\kappa}\big)(1-\frac{1}{\kappa})$, the transition rule (\ref{e:stoc_proc_two_step}) satisfies,
     \begin{align*}
         \|\E\Deltab_{t}\|^2 \leq O(t^2)\cdot\rho^{t-2}\|\Deltab_0\|^2.
     \end{align*} 
\end{theorem}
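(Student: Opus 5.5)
We follow the two-step outline above, working in the rotated coordinates and assuming $\alpha=\omega=1$ and $\beta=1-1/\phi$.

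\textbf{Reduction to small blocks.} Since $\Deltab_{-1}=\zero$, the expected iterate satisfies
\begin{align*}
\E\,\V^\top\Deltab_t=\big[\I\ \ \zero\big]\,\T^t\begin{bmatrix}\V^\top\Deltab_0\\ \zero\end{bmatrix}.
\end{align*}
Because $\Deltab_0\in\mathrm{range}(\bar\Pib)$, only the coordinates $i\le r$ of $\V^\top\Deltab_0$ are nonzero. By Lemma~\ref{l:perm_0}, $\T$ is permutation-similar to $\diag(\T_1,\dots,\T_d)$, so the recursion decouples into $d$ independent $2\times 2$ systems. Hence
\begin{align*}
\|\E\Deltab_t\|^2\le \max_{i\le r}\|\T_i^t\|^2\,\|\Deltab_0\|^2 .
\end{align*}
The blocks with $\lambda_i=0$ never enter. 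This matters, because those blocks have spectral radius $1$.

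\textbf{Bounding each block power.} By Lemma~\ref{l:schur_0}, $\|\T_i^t\|=\|R_i^t\|$, where $R_i=\begin{bmatrix}\gamma_{i1}&x_i\\0&\gamma_{i2}\end{bmatrix}$. Powers of an upper triangular $2\times2$ matrix are explicit: the diagonal entries are $\gamma_{i1}^t$ and $\gamma_{i2}^t$, and the off-diagonal entry is $x_i\sum_{k=0}^{t-1}\gamma_{i1}^k\gamma_{i2}^{t-1-k}$. With $|x_i|\le 3$ and both eigenvalue moduli at most $|\gamma_i|$, this gives
\begin{align*}
\|\T_i^t\|\le 2|\gamma_i|^t+3t|\gamma_i|^{t-1}=O(t)\,|\gamma_i|^{t-1}.
\end{align*}
Squaring yields $O(t^2)|\gamma_i|^{2(t-1)}$. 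It remains to show $|\gamma_i|^2\le\rho$ for every $i\le r$, after which $\rho^{t-1}\le\rho^{t-2}$ gives the stated form.

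\textbf{Uniform bound on the spectral radius.} This is the main obstacle. For $i\le r$ we have $\lambda_i\ge\lambda_r=1/\kappa$, and we apply Lemma~\ref{l:spec_radius_T_0} in two cases.

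In the real case, the bound $(1-\phi\lambda_i/2)^2(1-\lambda_i)^2$ should be monotone decreasing in $\lambda_i$ on the relevant range. This holds because $\lambda_i\le(1-\beta)^2/(1+\beta)^2\le 1/\phi^2$, so $\phi\lambda_i/2\le 1/2$. Hence it is at most its value at $1/\kappa$, namely $(1-\tfrac{\phi}{2\kappa})^2(1-\tfrac1\kappa)^2\le\rho$.

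In the complex case, we need $(1-\tfrac1\phi)(1-\lambda_i)\le\rho$. Since $\rho\approx 1-\tfrac{\phi}{2\kappa}-\tfrac1\kappa$, this requires $\tfrac1\phi\gtrsim\tfrac{\phi}{2\kappa}$, i.e.\ $\phi\lesssim\sqrt{2\kappa}$. This is exactly the regime $\phi=\Theta(\min\{m,\sqrt\kappa\})$ prescribed by Theorem~\ref{t:L2_iterate_0}. Concretely, I would impose $\phi\le\sqrt{\kappa}$ (absorbing constants into $c_2$) and verify
\begin{align*}
1-\tfrac1\phi-\lambda_i+\tfrac{\lambda_i}{\phi}\le 1-\tfrac{\phi}{2\kappa}-\tfrac1\kappa+\tfrac{\phi}{2\kappa^2},
\end{align*}
using $\lambda_i(1-1/\phi)\ge(1-1/\phi)/\kappa$ and $1/\phi\ge\phi/\kappa$. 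The delicate part is keeping this algebra clean and making explicit the admissible range of $\phi$ (including $\phi\ge2$, which also forces $\rho>0$ when $\phi\le\kappa$). Combining the two cases with the block bound gives the claimed estimate.
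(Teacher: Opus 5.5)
Your proposal is correct and follows essentially the same route as the paper. You permute $\T$ into $2\times 2$ blocks, use Schur's decomposition to get $\|\T_i^t\|\le(3t+2)|\gamma_{i1}|^{t-1}$, and then bound the spectral radius uniformly via the spectral-radius lemma together with $\lambda_i\ge\lambda_r=1/\kappa$ and $\phi\lesssim\sqrt\kappa$ (the paper's ``$\phi<1/\sqrt{\lambda_r}$''). Your differences are only minor streamlinings: propagating from $(\Deltab_0,\Deltab_{-1}=\zero)$ instead of $(\E\Deltab_1,\Deltab_0)$ removes the factor $17$, and your explicit monotonicity check in the real case and factorization in the complex case spell out what the paper leaves implicit.
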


\subsubsection{Variance: Convergence of $\E\|\Deltab_t\|^2$}
Note that (\ref{e:stoc_proc_two_step}) implies,
\begin{align*}
    \P\begin{bmatrix}
        \V^\top\Deltab_t \\
        \V^\top\Deltab_{t-1}
    \end{bmatrix} = \Big(\underbrace{\prod_{j=0}^{t-1}\P\Y_{t-j}\P^\top}_{\Sigmab_t}\Big)\cdot\P\begin{bmatrix}
        \V^\top\Deltab_0 \\
        \V^\top\Deltab_{-1}
    \end{bmatrix}.
\end{align*}
To establish convergence of $\E\|\Deltab_t\|^2$, it suffices to bound $\|\E[\Sigmab_t^\top \Sigmab_t]\|$. Since $\Sigmab_t$ is a product of non-symmetric random matrices, upper bounding $\big\|\E[\Sigmab_t^\top\Sigmab_t]\big\|$ is challenging. Prior work \cite{bollapragada2022fast}  uses matrix product concentration \cite{huang2022matrix} and require careful tuning of the momentum parameter to keep $\T$ diagonalizable while controlling $\|\C\|\cdot\|\C^{-1}\|$. This leads to mini-batch requirements scaling as $\|\C\|^2\cdot\|\C^{-1}\|^2$, which can become arbitrarily large near non-diagonalizability, and introduces additional condition number dependence in $m$. Consequently, their guarantees degrade rapidly unless the mini-batch size is sufficiently large. These limitations stem from the reliance on product concentration results and diagonalization. In contrast, we avoid both and directly bound $\|\E[\Sigmab_t^\top \Sigmab_t]\|$.

\paragraph{Recursive approach to bound $\|\E[\Sigmab_t^\top\Sigmab_t]\|$.} Let $\bar\Sigmab_t := \E[\Sigmab_t] $ denote the matrix mean and $\M_t := \E[\Sigmab_t^\top\Sigmab_t] - \bar\Sigmab_t^\top\bar\Sigmab_t$ denote the matrix variance of $\Sigmab_t$. Moreover, let $\bar\Y = \E[\P\Y_t\P^\top]$ and $\Rb_t :=\P(\Y_t-\E[\Y_t])\P^\top$. Then, for any $p_{t-1}\geq\big\|\E[\Sigmab_{t-1}^\top\Sigmab_{t-1}]\big\|$, we have the following matrix recursion:
\begin{align}
    \M_t \preceq 
     p_{t-1}\cdot\E[\Rb_t^\top\Rb_t] + \bar\Y^\top\M_{t-1}\bar\Y.\label{e:pib_t_recursion_0}
\end{align}
Note that, $\bar\Y$ and $\bar\Sigmab_t$ are block-diagonal due to Lemma \ref{l:perm_0}. If one can show that $\E[\Rb_t^\top\Rb_t]$ is upper bounded by block-diagonal matrices, then via the matrix recursion, so is $\M_t$. We show this in the following lemma.

\begin{lemma}\label{l:second_moment_0}
 Let $\P$ be the permutation matrix from Lemma \ref{l:perm_0}. Then we have,
    \begin{align*}
        \E[\Rb_t^\top\Rb_t] &\preceq 4\cdot\diag(\W_1,\W_2,\cdots,\W_n),  \qquad \text{where} \qquad \W_i= \diag\Big(\frac{\lambda_i(1-\lambda_i)}{m}, \frac{\lambda_i(1-\lambda_i)}{m}\Big).
    \end{align*}
\end{lemma}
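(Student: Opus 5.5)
The plan is to write $\Rb_t$ explicitly and bound each block of $\Rb_t^\top\Rb_t$ by the mini-batch variance bound of Proposition \ref{p:mini-batching}. In the NAG case ($\omega=1$, the general case is analogous), set $\Z_t := \V^\top(\Pib_t-\bar\Pib)\V$. Then
\begin{align*}
    \Y_t - \E[\Y_t] = \begin{bmatrix} -(1+\beta)\Z_t & \beta\Z_{t-1}\\ \zero & \zero\end{bmatrix}.
\end{align*}
Since $\P$ is a permutation (hence orthogonal), we have $\E[\Rb_t^\top\Rb_t] = \P\,\E\big[(\Y_t-\E\Y_t)^\top(\Y_t-\E\Y_t)\big]\P^\top$. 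It therefore suffices to bound the $2d\times 2d$ matrix
\begin{align*}
    \E\begin{bmatrix} (1+\beta)^2\Z_t^2 & -\beta(1+\beta)\Z_t\Z_{t-1}\\ -\beta(1+\beta)\Z_{t-1}\Z_t & \beta^2\Z_{t-1}^2\end{bmatrix}.
\end{align*}

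\textbf{Step 1: the cross terms.} Rather than using independence of $\Z_t$ and $\Z_{t-1}$ to kill the cross terms (independence is available, but the cross term sits inside a non-symmetric product, so it is cleaner to avoid it), I would use the elementary psd inequality
\begin{align*}
    \begin{bmatrix}\A^\top\A & \A^\top\B\\ \B^\top\A & \B^\top\B\end{bmatrix} \preceq 2\begin{bmatrix}\A^\top\A & \zero\\ \zero & \B^\top\B\end{bmatrix},
\end{align*}
with $\A = -(1+\beta)\Z_t$ and $\B=\beta\Z_{t-1}$. This follows from $(\A\x+\B\y)^\top(\A\x+\B\y)\le 2\|\A\x\|^2+2\|\B\y\|^2$. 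The resulting bound is $2\,\diag\big((1+\beta)^2\E\Z_t^2,\ \beta^2\E\Z_{t-1}^2\big)$.

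\textbf{Step 2: the diagonal blocks.} By Proposition \ref{p:mini-batching},
\begin{align*}
    \E\Z_t^2 = \V^\top\E(\Pib_t-\bar\Pib)^2\V \preceq \tfrac1m\Lambdab(\I-\Lambdab),
\end{align*}
and the same holds for $\Z_{t-1}$. Using $(1+\beta)^2\le 4$ and $\beta^2\le 1$, the whole matrix is bounded by
\begin{align*}
    2\,\diag\big(4\Lambdab(\I-\Lambdab)/m,\ \Lambdab(\I-\Lambdab)/m\big) \preceq 8\,\diag\big(\Lambdab(\I-\Lambdab),\Lambdab(\I-\Lambdab)\big)/m.
\end{align*}
Conjugating by $\P$, which groups coordinates $i$ and $d+i$ into the $i$-th $2\times2$ block exactly as in Lemma \ref{l:perm_0}, turns this into $\diag(\W_1,\dots,\W_d)$ times a constant.

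\textbf{Main obstacle: the constant.} This route gives $8$, not the stated $4$. To recover $4$, I would exploit independence of $\Pib_t$ and $\Pib_{t-1}$ (the $\mathcal A_t$ are independent). Since both are centered, $\E[\Z_t\Z_{t-1}]=\E\Z_t\,\E\Z_{t-1}=\zero$, so the cross blocks vanish exactly and the bound becomes $\diag\big((1+\beta)^2,\beta^2\big)\otimes\Lambdab(\I-\Lambdab)/m \preceq 4\,\diag(\cdots)$.

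One subtlety is that $\Rb_t$ and $\Rb_{t-1}$ share $\Pib_{t-1}$, so consecutive $\Rb$'s are not independent. This does not affect the present lemma, which concerns a single $\Rb_t$, but it will matter when deriving the recursion \eqref{e:pib_t_recursion_0}.

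For general $\omega\le 1$ the coefficients become $(1+\beta\omega)$ and $\beta\omega$, and the same bound holds since $(1+\beta\omega)^2\le 4$.
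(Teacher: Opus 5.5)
Your final argument is correct and is exactly the paper's proof: write $\Y_t-\E\Y_t$ in block form, use independence of $\Pib_t$ and $\Pib_{t-1}$ to get $\E[\Z_t\Z_{t-1}]=\E\Z_t\,\E\Z_{t-1}=\zero$ so the off-diagonal blocks vanish, bound the diagonal blocks via Proposition~\ref{p:mini-batching} with $(1+\beta\omega)^2\le 4$, and conjugate by $\P$. The factor-$2$ detour in Step 1 is unnecessary, since non-symmetry is no obstacle to factoring the expectation of a product of independent matrices, though it still gives a valid bound with constant $8$; your closing caveat that $\Rb_t$ and $\Rb_{t-1}$ share $\Pib_{t-1}$ is apt for the later recursion rather than for this lemma.
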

Using Lemma \ref{l:second_moment_0} and the repeated use of (\ref{e:pib_t_recursion_0}), it follows that both $\M_t$ and $\E[\Sigmab_t^\top\Sigmab_t]$ are also upper bounded by block-diagonal matrices. By taking a maximum over those blocks, we prove the following recursive expression for $\big\|\E[\Sigmab_t^\top\Sigmab_t]\big\|$.
\begin{lemma}\label{l:pt_recursion1_0}
For any $t\geq 1$, if there exist $p_0,p_1,p_2,\cdots,p_{t-1}$ such that $ \big\| \E[\Sigmab_{k}^\top\Sigmab_{k}]\big\| \leq p_k$ for $0\leq k \leq t-1$, then there exists $p_t$ such that for $q_i =4\lambda_i(1-\lambda_i)/m$,
     \begin{align*}
     \big\| \E[\Sigmab_t^\top\Sigmab_t]\big\| \leq   p_t \leq \max_{1\leq i \leq r}\Big(\big\|(\T_i^\top)^t\T_i^t\big\| + q_i\cdot\sum_{j=0}^{t-1}{p_j\cdot\big\|(\T_i^\top)^{t-1-j}\T_i^{t-1-j}\big\|}\Big).
    \end{align*}
\end{lemma}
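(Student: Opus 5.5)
We will prove Lemma \ref{l:pt_recursion1_0} by unrolling the matrix recursion \eqref{e:pib_t_recursion_0} and using the block-diagonal structure from Lemmas \ref{l:perm_0} and \ref{l:second_moment_0}.

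\textbf{Step 1: the mean part.} Since the $\Y_{t-j}$ are driven by independent rate matrices, and $\Y_t$ depends on $\Pib_t$ and $\Pib_{t-1}$, some care is needed with the overlapping dependence. The recursion \eqref{e:pib_t_recursion_0} is taken as given from the paper, and it already encodes this. In any case, $\bar\Sigmab_t=\bar\Y^t$, where $\bar\Y=\P\T\P^\top=\diag(\T_1,\dots,\T_d)$ by Lemma \ref{l:perm_0}. Hence $\bar\Sigmab_t^\top\bar\Sigmab_t=\diag\big((\T_i^\top)^t\T_i^t\big)_i$ is block diagonal.

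\textbf{Step 2: unroll the variance recursion.} For $\M_0=\zero$ (since $\Sigmab_0=\I$ is deterministic), iterating \eqref{e:pib_t_recursion_0} with the given bounds $p_0,\dots,p_{t-1}$ gives
\begin{align*}
\M_t\preceq\sum_{j=0}^{t-1}p_j\,(\bar\Y^\top)^{t-1-j}\,\E[\Rb_{j+1}^\top\Rb_{j+1}]\,\bar\Y^{t-1-j}.
\end{align*}
Here we use that congruence $\X\mapsto\bar\Y^\top\X\bar\Y$ preserves the Loewner order, and we index so that the factor at level $j$ pairs with $p_j$. Lemma \ref{l:second_moment_0} then lets us replace each $\E[\Rb^\top\Rb]$ by $\diag(q_i\I_2)_i$, with $q_i=4\lambda_i(1-\lambda_i)/m$. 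Congruence by the block-diagonal $\bar\Y^{k}$ keeps this block diagonal, with $i$-th block $q_i(\T_i^\top)^k\T_i^k$.

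\textbf{Step 3: combine and take norms.} This gives
\begin{align*}
\E[\Sigmab_t^\top\Sigmab_t]=\bar\Sigmab_t^\top\bar\Sigmab_t+\M_t\preceq\diag\Big((\T_i^\top)^t\T_i^t+q_i\sum_{j=0}^{t-1}p_j(\T_i^\top)^{t-1-j}\T_i^{t-1-j}\Big)_i.
\end{align*}
The norm of a psd matrix is at most the norm of a psd upper bound, and the norm of a block-diagonal matrix is the maximum over its blocks. Applying the triangle inequality within each block, we define $p_t$ as the resulting maximum, which is exactly the claimed bound. Blocks with $i>r$ have $\lambda_i=0$, so $q_i=0$. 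The vector $\Deltab_0$ lies in $\mathrm{range}(\bar\Pib)$, and $\Pib\preceq\I$ with $\E\Pib=\bar\Pib$ forces $\Pib$ to act trivially off that range. These coordinates are therefore invariant and contribute nothing, which justifies restricting the maximum to $1\le i\le r$.

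\textbf{Main obstacle.} The delicate point is the unrolling in Step 2. Each application of \eqref{e:pib_t_recursion_0} at level $k$ needs $p_{k-1}$, so we must apply it inductively with the shifted products $\Sigmab_k$. We also need each congruence step to be taken by the mean $\bar\Y$ rather than by a random matrix. I would verify this by writing $\Sigmab_k=(\bar\Y+\Rb_k)\Sigmab_{k-1}$ and checking that the cross terms vanish in expectation conditionally. If the one-step dependence between $\Y_t$ and $\Pib_{t-1}$ breaks this, I would instead rely on the stated recursion as a black box, which already accounts for it.
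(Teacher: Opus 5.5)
\textbf{What matches the paper.} Your Steps 2 and 3 are the paper's argument. The appendix proof introduces block-diagonal $\D_k=p_{k-1}\D_1+\bar\Y^\top\D_{k-1}\bar\Y$ and shows $\M_k\preceq\D_k$ by induction. It then unrolls $\D_{t,i}=\sum_{j=0}^{t-1}p_j(\T_i^\top)^{t-1-j}\D_{1,i}\T_i^{t-1-j}$ and finishes with the triangle inequality and $\|\D_{1,i}\|\le q_i$. That is exactly your direct unrolling of the Loewner inequality.

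\textbf{The gap.} You import \eqref{e:pib_t_recursion_0}, whereas the paper derives it inside the proof of this very lemma, and it is the step that carries the probabilistic content. The verification you sketch would not produce it. With $\Sigmab_k=(\bar\Y+\Rb_k)\Sigmab_{k-1}$ you get $\Sigmab_k^\top\Sigmab_k=\Sigmab_{k-1}^\top\Y_k^\top\Y_k\Sigmab_{k-1}$. The new factor sits in the middle, so you obtain a congruence by the random $\Sigmab_{k-1}$, never a term $\bar\Y^\top\M_{k-1}\bar\Y$.

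To get a congruence by $\bar\Y$ you must peel the factor that acts first: $\Sigmab_k=\widetilde{\Sigmab}\,\Y_1$ with $\widetilde{\Sigmab}=\Y_k\cdots\Y_2$. Suppose $\Y_1=\bar\Y+\Rb_1$ is independent of $\widetilde{\Sigmab}$, and $\widetilde{\Sigmab}$ has the law of $\Sigmab_{k-1}$ (true when the $\Pib_t$ are i.i.d.). Set $\mathbf G:=\E[\widetilde{\Sigmab}^\top\widetilde{\Sigmab}]$. Then the cross terms vanish and $\M_k=\bar\Y^\top\M_{k-1}\bar\Y+\E[\Rb_1^\top\mathbf G\Rb_1]\preceq\bar\Y^\top\M_{k-1}\bar\Y+p_{k-1}\E[\Rb_1^\top\Rb_1]$.

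\textbf{Where the dependence matters.} That independence is exactly where your worry about the overlap bites. Your assertion that the stated recursion ``already accounts for'' it is not supported by the paper. For heavy ball ($\omega=0$), each $\Y_s$ depends on $\Pib_s$ alone, and the argument above completes your proof.

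For $\omega>0$, however, $\Y_2$ contains $\Pib_1$, which also drives $\Rb_1$. The paper's derivation simply writes $\E[\Sigmab_t^\top\Sigmab_t]=\E\big[\Y_t^\top\,\E[\Sigmab_{t-1}^\top\Sigmab_{t-1}]\,\Y_t\big]$ (up to the $\Q\P$ conjugation). In other words, it treats the outer factor as independent of the rest of the product. This identity is not exact. Already for $d=1$ and $t=2$, the $(1,1)$ entry of $\M_2$ equals that of the paper's right-hand side $\E[\Rb_2^\top\E[\Sigmab_1^\top\Sigmab_1]\Rb_2]+\bar\Y^\top\M_1\bar\Y$ minus $2\beta\omega(1+\beta\omega)\big((1+\beta)-(1+\beta\omega)\lambda\big)\sigma^2$, where $\sigma^2$ is the variance of the scalar $\Pib_t$. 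The sign happens to be harmless there, but nothing in the paper controls such correlation terms for general $t$.

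So for $\omega>0$, the inequality you use as a black box is established neither by your proposal nor, strictly, by the paper's own proof. A complete argument must do one of two things:
\begin{enumerate}
\item handle the shared $\Pib_{t-1}$ explicitly; or
\item reparametrize the second state coordinate, e.g.\ as $(\I-\omega\Pib_{t-1})\Deltab_{t-1}$. Each transition then involves a single $\Pib_t$, at the cost of replacing $\T_i$ by a similar matrix.
\end{enumerate}
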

\paragraph{Unfolding the recursion.} The next step is to unfold the recursive expression for $p_t$. By coupling with a simpler recursion, we show that, defining $\rho_t:=\max_{1\leq i\leq r}\big(|\gamma_{i}| + q_i \cdot\ell_i(t)\big)$,
\begin{align}
   p_{t}
    &\leq \rho_t^{t} + \sum_{j=0}^{t-1}{\rho_t^{j}\cdot\max_{1\leq i \leq r}\big\|(\T_i^\top)^{t-j}\T_i^{t-j}\big\|}, \label{e:gt_bound0}
\end{align}
for a carefully defined function $\ell_i(t)$, see \eqref{e:ell_i_def}. We can now use the bias analysis (Lemmas \ref{l:spec_radius_T_0} and \ref{l:schur_0}) to obtain tight bounds for the terms $\big\|(\T_i^\top)^{t-j}\T_i^{t-j}\big\|$. The final challenging task is to bound $\rho_t$.

\paragraph{Upper bounding $\rho_t$.} We provide a fine-grained analysis of $\rho_t$ by carefully analyzing $|\gamma_{i}| + q_i\cdot\ell_i(t)$ for every single $i \leq r$. Recall that, depending upon the interaction between $\lambda_i$ and $\beta$, the eigenvalue $\gamma_i$ can be real or complex (Lemma \ref{l:spec_radius_T_0}).

\paragraph{Case 1: $\lambda_i - \frac{(1-\beta)^2}{(1+\beta)^2}$ is sufficiently positive.} This is the easy case where $\gamma_i$ is complex, and we get
\begin{align}
    |\gamma_{i}| + q_i\cdot\ell_i(t) < 1-\frac{1}{32\phi}, \qquad \text{for} \ \phi \leq\frac{m}{C}.\label{e:rate_1}
\end{align}
\paragraph{Case 2: $\lambda_i - \frac{(1-\beta)^2}{(1+\beta)^2}$ is close to zero.} In this case, we enter the regime of near non-diagonalizability of $\T$. A crude analysis here would lead to a blow up factor in $m$. However, we are able to avoid this by exploiting the specific form of $\ell_i(t)$ and combine it with $\lambda_i \approx \frac{(1-\beta)^2}{(1+\beta)^2}$ to show that (\ref{e:rate_1}) still holds.

\paragraph{Case 3: $\lambda_i - \frac{(1-\beta)^2}{(1+\beta)^2}$ is sufficiently negative.}
Previous works such as \cite{bollapragada2022fast,pan2023accelerated, lee2022trajectory} select large values for the momentum parameter $\beta$ and for the mini-batch size $m$, because they enforce the condition that  $\lambda_i \geq \frac{(1-\beta)^2}{(1+\beta)^2}$. Our crucial insight is that, for smaller values of $\beta$, the quantity $\lambda_i - \frac{(1-\beta)^2}{(1+\beta)^2} $ can be negative and sufficiently bounded away from $0$ for some $i$. In fact, this is the case that determines the convergence rate for $\E\|\Deltab_t\|^2$ and allows us to handle SGD with classical momentum even under small mini-batches (see details in Appendix \ref{s:l2norm}). In particular, in this case we show that for $\phi \leq \frac{m}{C}$,
\begin{align}
    |\gamma_{i}| + q_i\cdot\ell_i(t) \leq  1-\frac{\phi\lambda_i}{4}. \label{e:rate_3}
\end{align}
Combining (\ref{e:rate_1}), (\ref{e:rate_3}) and maximizing over $i$, bounds $\rho_t$. Substituting in (\ref{e:gt_bound0}) and bounding $\big\|(\T_i^\top)^{t-j}\T_i^{t-j}\big\|$ bounds $p_t$. Noting the relation between $m$ and $\phi$ finishes the proof of Theorem~\ref{t:L2_iterate_0}.

\section{Experiments}\label{s:experiment}
For our experiments, we consider kernel ridge regression on four benchmark datasets sourced from Scikit-learn \cite{pedregosa2011scikit} and OpenML \cite{vanschoren2014openml} (more details and results in Appendix \ref{s:add_experiments}). For each dataset, we solve the resulting psd linear system $(\K + \lambda \I)\w = \y$, where $\K$ is the kernel matrix and $\lambda\ge 0$ is the ridge parameter,  using randomized block coordinate descent  \eqref{eq:cd} with uniform sampling as the base update. We then incorporate mini-batching by averaging $m$ independently generated update vectors in each iteration. Note that this mini-batching is separate from block sampling, which is less readily parallelizable due to the block inverse step. Each of the $m$ update vectors in the mini-batch is produced using an independently sampled random block. Finally, we consider  four different configurations of the algorithm, depending on the type of momentum acceleration: 

~\quad\textbf{1.~CD}: Vanilla block coordinate descent \eqref{eq:cd} using mini-batches of size $m$ and varying block sizes.

~\quad\textbf{2.~CDpp}: For comparison, we include a tuned state-of-the-art acceleration scheme for CD (i.e., not classical momentum) based on a Lyapunov-based approach with two auxiliary sequences \cite{gower2018accelerated,derezinski2025fine,derezinski2025randomized}.

~\quad\textbf{3.~CD+NAG }$(\beta = 1-1/m)$: CD with classical NAG momentum (i.e., $\omega=1$ in our framework), using a simple choice of momentum parameter motivated by our theory.

~\quad\textbf{4.~CD+NAG }$(\beta = \text{adaptive})$:  We implement a heuristic strategy that learns the near-optimal $\beta$ for any $m$ by executing an adaptive binary search procedure at runtime (see Appendix \ref{s:add_experiments}). 

\begin{figure}[t]
  \centering
\vspace{-2mm}
  \begin{subfigure}[t]{0.49\linewidth}
    \centering
    \includegraphics[width=\linewidth]{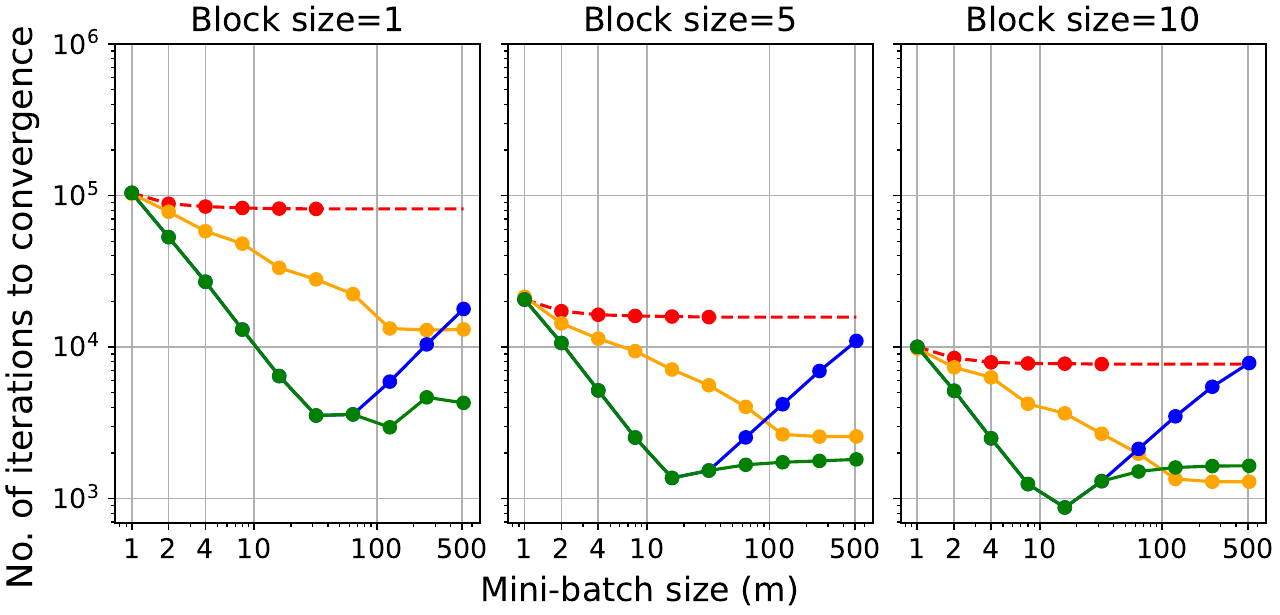}
    \caption{Covtype: iterations vs mini-batch size}
  \end{subfigure}
  \hfill
  \begin{subfigure}[t]{0.49\linewidth}
    \centering
    \includegraphics[width=\linewidth]{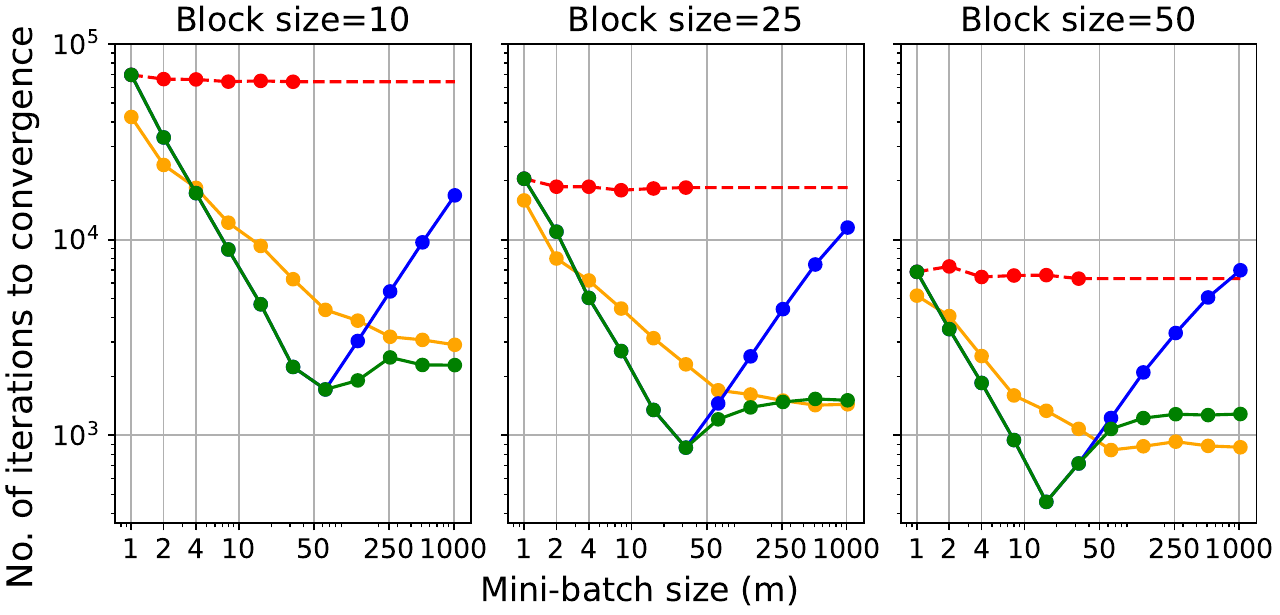}
    \caption{California Housing: iterations vs mini-batch size}
  \end{subfigure}

  \vspace{0.3em}

  \begin{subfigure}[t]{0.49\linewidth}
    \centering
    \includegraphics[width=\linewidth]{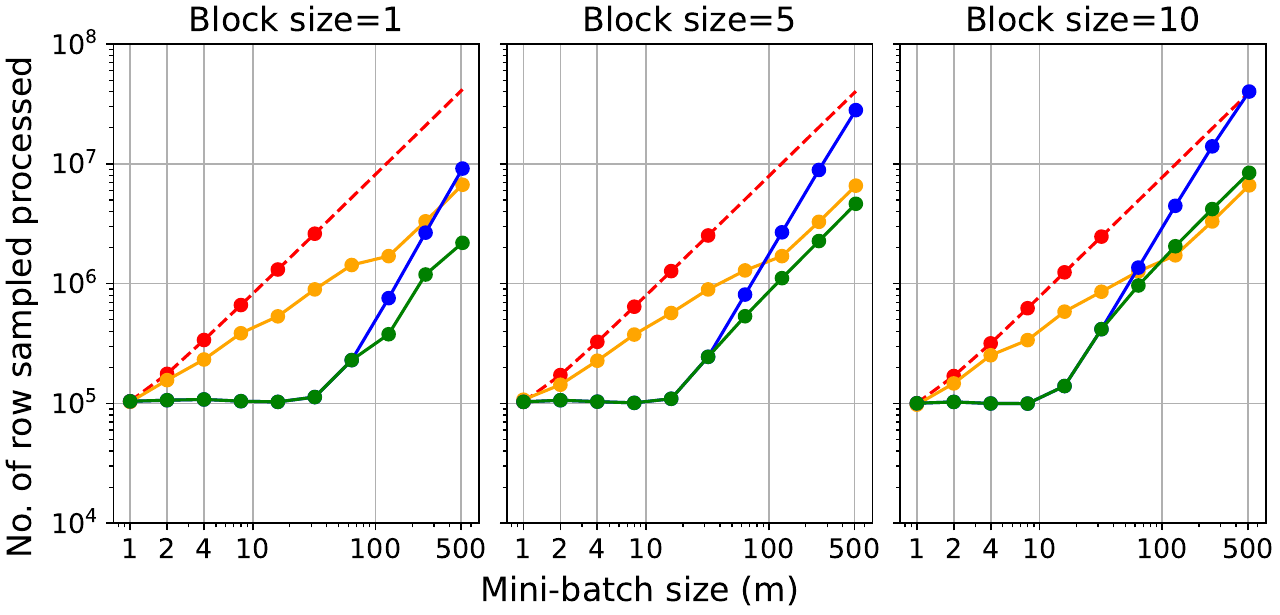}
    \caption{Covtype: row samples vs mini-batch size}
  \end{subfigure}
  \hfill
  \begin{subfigure}[t]{0.49\linewidth}
    \centering
    \includegraphics[width=\linewidth]{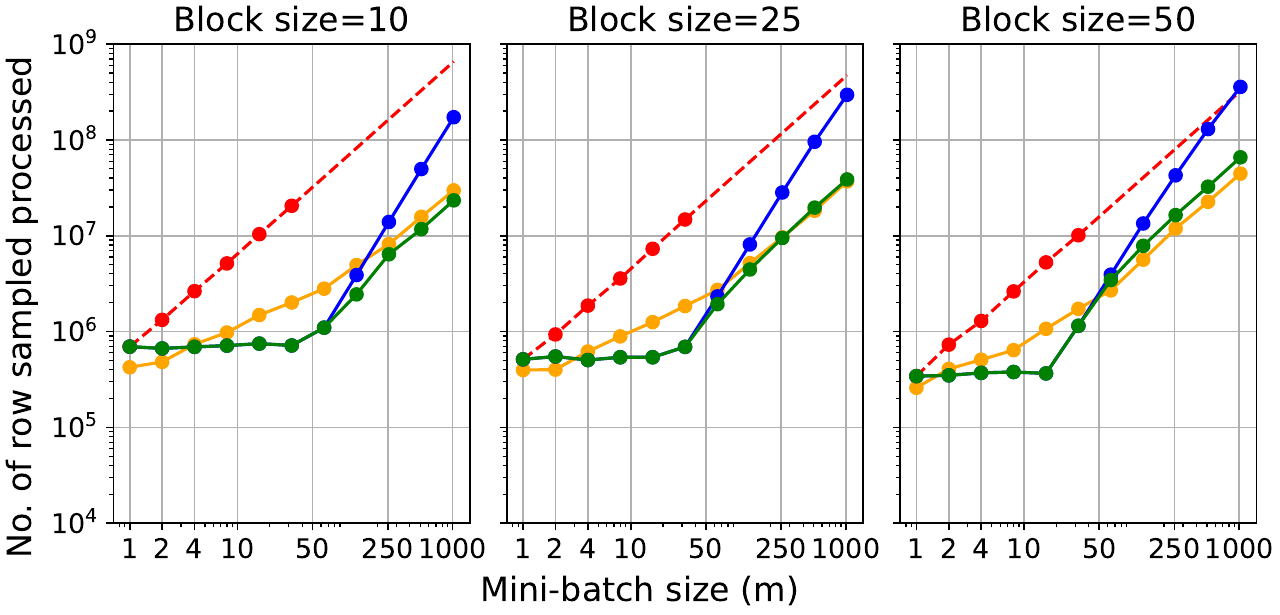}
    \caption{California Housing: row samples vs mini-batch size}
  \end{subfigure}

  \vspace{0.5em}

  \begin{minipage}{0.9\linewidth}
    \centering
    {\small
    \textcolor{myred}{\rule{1em}{1em}} CD \quad
    \textcolor{myorange}{\rule{1em}{1em}} CDpp \quad
    \textcolor{myblue}{\rule{1em}{1em}} CD+NAG $(\beta=1-1/m)$ \quad
    \textcolor{mygreen}{\rule{1em}{1em}}  CD+NAG $(\beta=\text{adaptive})$}
  \end{minipage}

  \caption{
  Convergence of block coordinate descent with NAG momentum and baselines.
  Top row: iterations to reach error $10^{-7}$.
  Bottom row: total work, measured by the number of sampled rows.
  }\vspace{-1mm}
  \label{fig:california_covtype}
\end{figure}

\begin{figure}[t]
  \centering

  \begin{subfigure}[t]{0.49\linewidth}
    \centering
    \includegraphics[width=\linewidth]{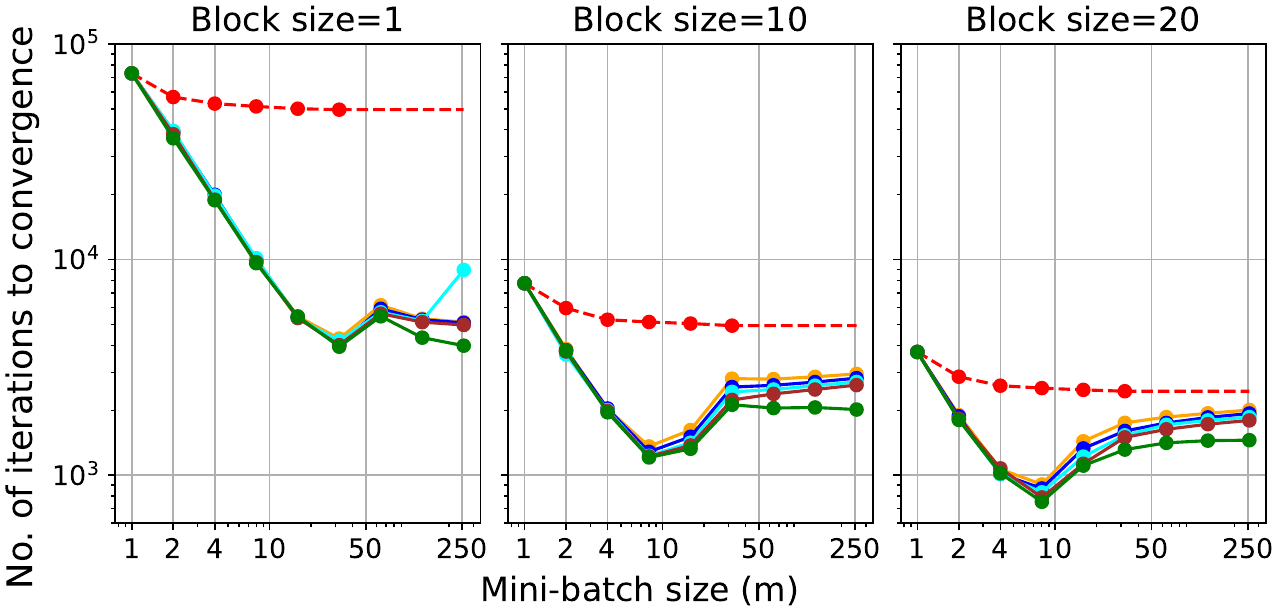}
    \caption{Abalone: iterations vs mini-batch size}
  \end{subfigure}
  \hfill
  \begin{subfigure}[t]{0.49\linewidth}
    \centering
    \includegraphics[width=\linewidth]{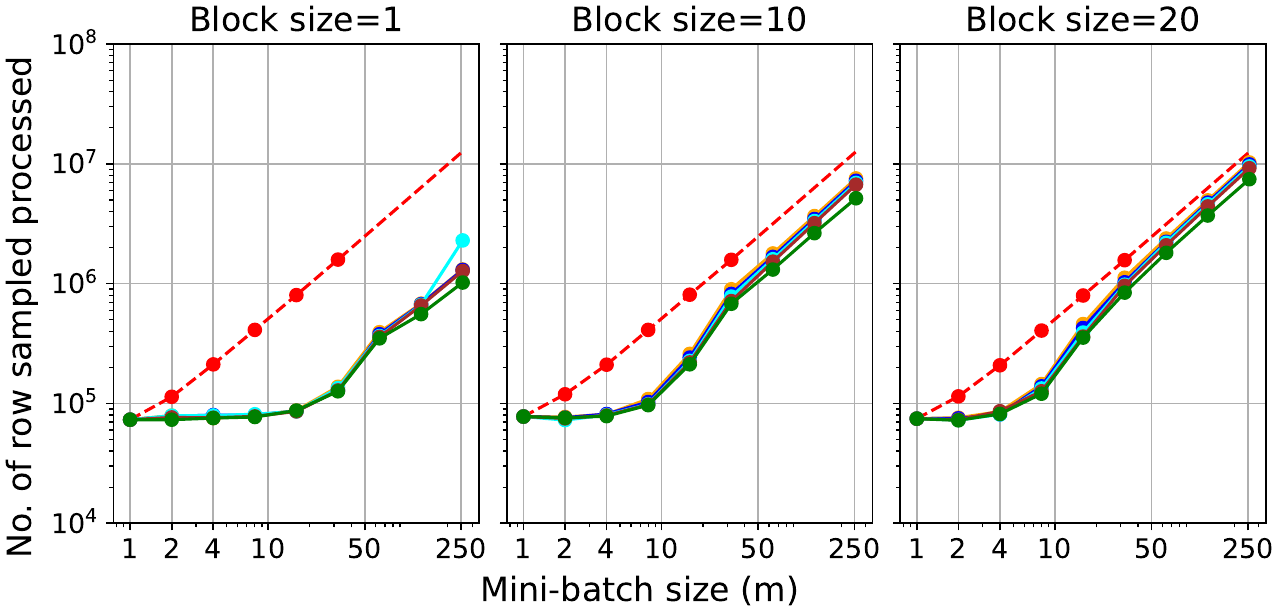}
    \caption{Abalone: row samples vs mini-batch size}
  \end{subfigure}

  \vspace{0.5em}

  \begin{minipage}{1\linewidth}
    \centering
    \scriptsize
    \textcolor{myred}{\rule{1em}{1em}} CD \hspace{0.6em}
    \textcolor{myorange}{\rule{1em}{1em}} CDM : $\omega=0$ (HBM) \hspace{0.6em}
    \textcolor{myblue}{\rule{1em}{1em}} CDM : $\omega=0.25$ \hspace{0.6em}
    \textcolor{mycyan}{\rule{1em}{1em}} CDM : $\omega=0.50$ \hspace{0.6em}
    \textcolor{mybrown}{\rule{1em}{1em}} CDM : $\omega=0.75$ \hspace{0.6em}
    \textcolor{mygreen}{\rule{1em}{1em}} CDM : $\omega=1$ (NAG) \hspace{0.6em}
  \end{minipage}

  \caption{
  Convergence behavior for CDM with different $\omega$ across block sizes.
  Left column: iterations to reach error $10^{-10}$.
  Right column: total work, measured by the number of sampled rows.
  }
  \label{fig:abalone_all_omega}
\end{figure}
We also separately compare several variants of classical momentum by interpolating between \textbf{CD+HBM} ($\omega=0$) and \textbf{CD+NAG} ($\omega=1$), see Fig.~\ref{fig:abalone_all_omega}. Although our theory (Theorem \ref{t:L2_iterate_0}) guarantees identical convergence rates for all $\omega \in [0,1]$, empirical performance can depend on the choice of $\omega$, with larger values often yielding improved results. This observation is also noted in the remark following Lemma \ref{l:spec_radius_T}. Furthermore, as shown in Fig.~\ref{fig:abalone_all_omega}, all variants of classical momentum perform very similarly; nevertheless, $\omega = 1$ consistently achieves the best results, which is why we chose NAG momentum in our main experiments.

In Fig.~\ref{fig:california_covtype}(a)-(b) we can see that, as expected, mini-batching by itself does not improve the convergence of \textbf{CD}. (Due to this phenomenon, we ran \textbf{CD} only up to mini-batch size $m=32$, and extrapolated for larger values of $m$.) However, incorporating momentum into the update unlocks significant speed-up from mini-batching. Remarkably, while the specialized acceleration scheme of \textbf{CDpp} tends to attain better acceleration than \textbf{CD+NAG} for $m=1$, the classical momentum does a better job of exploiting large mini-batches. This can be seen even more clearly in Fig.~\ref{fig:california_covtype}(c)-(d), which shows total work performed by the algorithm. For \textbf{CD+NAG}, the total work stays constant as we increase $m$, until reaching a saturation point, which allows for perfect parallelization of mini-batches. Notably, this perfect parallelization phenomenon is not observed for \textbf{CDpp}.

Next, comparing \textbf{CD+NAG} with $\beta=1-1/m$ to its adaptive variant, we observe that in the perfect parallelization regime the simple choice of momentum parameter is essentially optimal. For larger $m$, as suggested by our theory, $\beta$ should be capped at $1-\Theta(1/\sqrt\kappa)$, which is why the adaptive selection procedure becomes more effective.

\section{Conclusions}
We develop a theoretical framework for quantifying the effect of classical momentum schemes like heavy ball on the convergence of mini-batch SGD. Our results demonstrate that momentum  enables perfect parallelization of mini-batch computations, which is confirmed by numerical~experiments.

\subsection*{Acknowledgments}
This work was supported in part by NSF CAREER Grant CCF-233865 and a Google ML and Systems Junior Faculty Award. The work was done in part while MD was visiting the Simons Institute for the Theory of Computing.

\bibliography{references}
\bibliographystyle{plain}

\appendix
\section{Convergence analysis for expected iterates}
We start by recalling the matrix transition rule (\ref{e:stoc_proc_two_step}),
 \begin{align*}
        \begin{bmatrix}
            \V^\top\Deltab_{t+1} \\ \V^\top\Deltab_t
        \end{bmatrix} = \underbrace{\begin{bmatrix} (1+\beta)\I-(1+\beta\omega)\V^\top\Pib_t\V & -\beta\cdot(\I-\omega\V^\top\Pib_{t-1}\V)\\
        \I & \zero\end{bmatrix}}_{=\Y_t}\cdot \begin{bmatrix}
            \V^\top\Deltab_{t} \\ \V^\top\Deltab_{t-1}
        \end{bmatrix}
    \end{align*}
 where $\omega \in [0,1]$ and $\Deltab_{-1} =0$. As $\text{range}(\Pib_t) \subset \text{range}(\bar\Pib)$, we note that if $\Deltab_0, \Deltab_1 \in \text{range}(\bar\Pib)$, then for all $t$, $\Deltab_t \in  \text{range}(\bar\Pib)$. In this section, we analyze the update rule (\ref{e:stoc_proc_two_step}) in expectation.
 Let $\T$ denote $\E\Y_t$. We have,
    \begin{align*}
         \T = \begin{bmatrix} (1+\beta)\I - (1+\beta\omega)\V^\top\bar\Pib\V & -\beta\cdot(\I-\omega\V^\top\bar\Pib\V)\\
        \I &\zero\end{bmatrix}.
    \end{align*}
     It is useful to transform $\T$ into a block-diagonal form. The following lemma guarantees that a permutation matrix $\P$ exists such that $\T$ can be transformed into a block-diagonal matrix.
    \begin{lemma}[Block-diagonalization via permutation]\label{l:perm}
        There exists a permutation matrix $\P$ such that,
        \begin{align*}
           \P\T\P^\top = \diag(\T_1,\T_2,\cdots,\T_n),
        \end{align*}
        where $\T_i =  \begin{bmatrix}
            (1+\beta)-(1+\beta\omega)\lambda_i & -\beta(1-\omega\lambda_i) \\
            1&0
        \end{bmatrix},$ and 
        \begin{align*}
            \P \begin{bmatrix} \V^\top\Deltab_{t+1} \\
            \V^\top\Deltab_t\end{bmatrix} = \begin{bmatrix}
            \v_1^\top\Deltab_{t+1} \\
            \v_1^\top\Deltab_{t}\\
            \cdots\\
            \v_d^\top\Deltab_{t+1}\\
            \v_d^\top\Deltab_t
                \end{bmatrix}.
        \end{align*}
        
    \end{lemma}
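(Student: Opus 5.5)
Since $\V^\top\bar\Pib\V=\Lambdab=\diag(\lambda_1,\dots,\lambda_d)$, every block of $\T$ is diagonal:
\begin{align*}
\T=\begin{bmatrix}\D_{11}&\D_{12}\\ \I&\zero\end{bmatrix},\qquad \D_{11}=(1+\beta)\I-(1+\beta\omega)\Lambdab,\quad \D_{12}=-\beta(\I-\omega\Lambdab).
\end{align*}
The plan is to write down the perfect-shuffle permutation that interleaves the two halves of the coordinates and verify the claimed conjugation entrywise. Index the rows and columns of the $2d\times 2d$ matrix $\T$ by $1,\dots,2d$. Define $\P$ via the map $\pi$ that sends position $i\le d$ to $2i-1$ and position $d+i$ to $2i$. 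In other words, $\P=\sum_{i=1}^d\big(\mathbf e_{2i-1}\mathbf e_i^\top+\mathbf e_{2i}\mathbf e_{d+i}^\top\big)$, where $\mathbf e_k$ denotes the standard basis vectors of $\R^{2d}$ (these are written out explicitly, since the paper has no macro for them). This $\P$ is clearly a permutation matrix.

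\textbf{Step 1: the vector identity.} Applying $\P$ to $\big[\V^\top\Deltab_{t+1};\V^\top\Deltab_t\big]$ places entry $i$ of the first half, $\v_i^\top\Deltab_{t+1}$, in position $2i-1$. It places entry $i$ of the second half, $\v_i^\top\Deltab_t$, in position $2i$. This is exactly the claimed interleaved vector.

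\textbf{Step 2: the matrix identity.} The $(\pi(a),\pi(b))$ entry of $\P\T\P^\top$ equals $T_{ab}$. The only nonzero entries of $\T$ are
\begin{align*}
T_{i,i}&=(\D_{11})_{ii}=(1+\beta)-(1+\beta\omega)\lambda_i, & T_{i,d+i}&=(\D_{12})_{ii}=-\beta(1-\omega\lambda_i),\\
T_{d+i,i}&=1, & T_{d+i,d+i}&=0.
\end{align*}
Each of these involves only the index pair $\{i,d+i\}$, so it maps into the $2\times2$ diagonal block on positions $\{2i-1,2i\}$. The four entries land in that block in the order $(1,1),(1,2),(2,1),(2,2)$, which reproduces $\T_i$. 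All other entries of $\P\T\P^\top$ vanish, giving $\P\T\P^\top=\diag(\T_1,\dots,\T_d)$. The statement writes $n$ for the number of blocks; this should be $d$, the dimension of $\V$.

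\textbf{Main obstacle.} None of this is deep. The only real care needed is in the index bookkeeping: checking that $\P\T\P^\top$ and not $\P^\top\T\P$ is the right conjugation for the chosen $\pi$, and that the vector ordering agrees with the block ordering. A quick consistency check is that the two claims together must agree with the transition rule (\ref{e:stoc_proc_two_step}) under expectation: block $i$ should map $(\v_i^\top\Deltab_t,\v_i^\top\Deltab_{t-1})$ to $(\v_i^\top\Deltab_{t+1},\v_i^\top\Deltab_t)$ via $\T_i$.
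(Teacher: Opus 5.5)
Your proof is correct. The paper states this lemma without proof, and the perfect-shuffle permutation you construct is exactly the one implied by the statement's vector identity; your entrywise check $(\P\T\P^\top)_{\pi(a),\pi(b)}=T_{ab}$ supplies the omitted verification, and your remark that there should be $d$ blocks rather than $n$ agrees with the main-text version, Lemma~\ref{l:perm_0}.
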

The resulting block-diagonal structure helps us throughout the analysis. As $(\P\T\P^\top)^t = \P\T^t\P^\top$ and $\|\P\T^t\P^\top\| = \|\T^t\|$, we need to upper bound $\|(\P\T\P^\top)^t\|$. Due to the block-diagonal form of $\P\T\P^\top$, we have $\|(\P\T\P^\top)^t\| = \max_{i}\|\T_i^t\|$. For any fixed $i$, let $\gamma_{i1},\gamma_{i2} \in \mathbb{C}$ denote the two eigenvalues of $\T_i$. We always take $\gamma_{i1}$ to be the eigenvalue with larger magnitude. In the next lemma, we provide a tight upper bound on the magnitude of eigenvalues of $\T_i$.

    \begin{lemma}[Bounding spectral radius of transition matrix]\label{l:spec_radius_T}
      Let $\omega \in[0,1]$ and $\beta = 1-\frac{1}{\phi}$ for some $\phi \geq 2$. Then, the following holds:
      
       \vspace{2mm}
       \noindent If $\T_i$ has complex eigenvalues then,
       \begin{align*}
           |\gamma_{i2}|^2 = |\gamma_{i1}|^2 = \Big(1-\frac{1}{\phi}\Big)(1-\omega\lambda_i).
       \end{align*}
      
       \noindent If $\T_i$ has real and equal eigenvalues then,
       \begin{align*}
           |\gamma_{i2}|^2 = |\gamma_{i1}|^2 = \Big(1-\frac{1}{\phi}\Big)(1-\omega\lambda_i).
       \end{align*}

       \noindent If $\T_i$ has real and distinct eigenvalues then,
       \begin{align*}
           |\gamma_{i2}|^2 < |\gamma_{i1}|^2 < \Big(1-\frac{\phi\lambda_i}{2}\Big)^2(1-\omega\lambda_i)^2.
       \end{align*}
    \end{lemma}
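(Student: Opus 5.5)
The plan is to work directly with the characteristic polynomial of the $2\times 2$ block $\T_i$ from Lemma~\ref{l:perm}. Write $a_i := (1+\beta)-(1+\beta\omega)\lambda_i$ and $b_i := \beta(1-\omega\lambda_i)$. Then $\T_i$ has trace $a_i$ and determinant $b_i$, so its eigenvalues are the roots of
\begin{align*}
p_i(x) = x^2 - a_i x + b_i .
\end{align*}
Since $\lambda_i\in[0,1]$ and $\omega\in[0,1]$, we have $b_i\ge 0$. The three cases are read off from the discriminant $D_i = a_i^2-4b_i$.

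\textbf{Cases $D_i<0$ and $D_i=0$.} If $D_i<0$, the roots are complex conjugates, so $|\gamma_{i1}|^2=|\gamma_{i2}|^2=\gamma_{i1}\gamma_{i2}=b_i$. If $D_i=0$, the root $a_i/2$ is double and its square equals $b_i$. In both cases the value is $b_i=(1-\tfrac1\phi)(1-\omega\lambda_i)$, using $\beta = 1-1/\phi$. These cases are immediate.

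\textbf{Case $D_i>0$.} The roots are real and distinct with product $b_i\ge 0$, hence of the same sign.

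First I check $a_i\ge 0$. Indeed,
\begin{align*}
a_i = (1-\lambda_i) + \beta(1-\omega\lambda_i) \ge 0,
\end{align*}
so both roots are nonnegative. Then $\gamma_{i1} = \big(a_i+\sqrt{D_i}\big)/2$ is the larger one, and $|\gamma_{i2}|<|\gamma_{i1}|$ is automatic.

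For the upper bound, set $x_0 := (1-\tfrac{\phi\lambda_i}{2})(1-\omega\lambda_i)$. Because $p_i$ is an upward parabola, it suffices to show two things: $x_0 > a_i/2$ (the vertex), and $p_i(x_0) > 0$. Together these force $\gamma_{i1}<x_0$. One also needs $x_0\ge 0$ to square the inequality.

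To obtain these, I first extract from $D_i>0$ a quantitative smallness bound on $\lambda_i$. Expanding $a_i^2>4b_i$ with $\beta=1-1/\phi$ and solving the quadratic inequality in $\lambda_i$, I expect $\lambda_i \lesssim (1-\beta)^2/(1+\beta)^2\le 1/(4\phi^2)\cdot O(1)$. In particular $\phi\lambda_i\le O(1/\phi)\le 1/2$, which gives $x_0>0$.

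With this bound in hand, the remaining step is to expand $p_i(x_0)$ as a polynomial in $\lambda_i$ and show it is positive:
\begin{align*}
p_i(x_0) = x_0^2 - a_i x_0 + b_i .
\end{align*}
At $\lambda_i=0$ this vanishes, since the roots are then $1$ and $\beta$ and $x_0=1$. So I look at the linear term in $\lambda_i$. A rough calculation suggests the coefficient is $\lambda_i$ times a positive multiple of $\phi(1-\beta)/2 - \beta\omega/(1-\beta)\cdot(\cdot)$, minus lower-order terms of size $O(\lambda_i)$. The quadratic terms are then controlled using $\lambda_i=O(1/\phi^2)$. The vertex condition $x_0>a_i/2$ is similar but easier: $a_i/2\le (1+\beta)/2 = 1-\tfrac1{2\phi}$, while $x_0\ge 1-O(1/\phi)$ with a better constant thanks to $\lambda_i = O(1/\phi^2)$.

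\textbf{Main obstacle and the $\lambda_i=0$ caveat.} I expect the main difficulty to be the sign analysis of $p_i(x_0)$ uniformly in $\omega\in[0,1]$. The cross terms in $\omega\lambda_i$ and $\phi\lambda_i$ must be balanced using the precise threshold implied by $D_i>0$. My first attempt would be to substitute $\lambda_i = s/\phi^2$ with $s\le s_{\max}$, factor out $\lambda_i$, and verify that the remaining expression is bounded below by a positive constant times $1/\phi$ for all $\phi\ge 2$ and $\omega\in[0,1]$.

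The case $\lambda_i=0$ gives $\gamma_{i1}=1=x_0$, so there the strict inequality degenerates to equality. The strict claim should therefore be understood for $i\le r$, that is $\lambda_i>0$, which is the only range used later since $\Deltab_t\in\mathrm{range}(\bar\Pib)$.
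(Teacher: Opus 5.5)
Your treatment of the complex and double-root cases is correct. So are the nonnegativity of both real roots (which makes $|\gamma_{i2}|<|\gamma_{i1}|$ automatic), the reduction ``$x_0$ lies right of the vertex and $p_i(x_0)>0$ $\Rightarrow$ $\gamma_{i1}<x_0$'', and the $\lambda_i=0$ caveat. The caveat is a genuine correction: at $\lambda_i=0$ the roots are $1$ and $\beta$ and equality holds, and the paper's argument implicitly needs $y>0$.

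\textbf{The gap.} The only nontrivial part of the lemma, $\gamma_{i1}<x_0$ in the real-distinct case, is left as an expectation. The perturbative plan you sketch for it does not work as stated.
\begin{itemize}
\item The coefficient of $\lambda_i$ in $p_i(x_0)$ is exactly $\tfrac12-\tfrac{\omega}{\phi}$, which vanishes at $\phi=2$, $\omega=1$. There $p_i(x_0)=(1-\lambda_i)\lambda_i^2(\tfrac32-\lambda_i)$ for every $\lambda_i\in(0,\tfrac19)$, which is precisely the real-distinct range. So after factoring out $\lambda_i$, no lower bound of the form $c/\phi$ survives uniformly; the sign is decided by the quadratic term.
\item Your vertex check ($a_i/2\le 1-\tfrac1{2\phi}$ against $x_0\ge 1-(\tfrac\phi2+\omega)\lambda_i$) closes only if you know $\lambda_i<c\,\phi^{-2}$ with $c<\tfrac12$, for example the exact threshold $(2\phi-1)^{-2}$. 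An unspecified $O(\phi^{-2})$ is not enough, and that threshold is never actually derived.
\end{itemize}

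\textbf{How to close it.} The route can be completed exactly rather than perturbatively. With $w_i=1-\omega\lambda_i$ one has $p_i(x)=(x-\beta w_i)(x-(1-\lambda_i))+\beta w_i\lambda_i$, which gives
\begin{align*}
p_i(x_0)=w_i\lambda_i\Big[\beta+\big(\tfrac1\phi-\tfrac{\phi\lambda_i}{2}\big)\big((1-\tfrac\phi2)-\omega(1-\tfrac{\phi\lambda_i}{2})\big)\Big],\qquad x_0-\tfrac{a_i}{2}=\tfrac{(1-\omega)\lambda_i}{2}+w_i\big(\tfrac1{2\phi}-\tfrac{\phi\lambda_i}{2}\big).
\end{align*}
Assume $0<\lambda_i<\phi^{-2}$. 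Then the first factor inside the bracket lies in $(0,\tfrac1\phi)$ and the second in $[-\tfrac\phi2,0]$. Hence the bracket exceeds $\tfrac12-\tfrac1\phi\ge0$, and also $x_0>a_i/2$.

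The needed threshold comes from the discriminant. Since $a_i=(1-\lambda_i)+\beta w_i\ge0$, we have $D_i>0$ iff $\sqrt{\lambda_i}+\sqrt{\beta w_i}<1$, which implies $\sqrt{\lambda_i}+\sqrt{\beta(1-\lambda_i)}<1$. The left side is concave in $\lambda_i$ and equals $1$ at $\lambda_i=(1-\beta)^2/(1+\beta)^2=(2\phi-1)^{-2}$ and at $\lambda_i=1$. Therefore $\lambda_i<(2\phi-1)^{-2}\le\phi^{-2}$.

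\textbf{Comparison with the paper.} The paper needs none of this. It writes $\gamma_{i1}=\tfrac{w_i}{2}\big(u_i+\beta+\sqrt{(u_i-\beta)^2-y^2}\big)$ with $u_i=(1-\lambda_i)/w_i$ and $y^2=4\beta\lambda_i/w_i$. It then applies $\sqrt{z^2-y^2}\le z-\tfrac{y^2}{2z}$ with $z=u_i-\beta\in(0,1-\beta]$, together with $u_i\le1$ and $\beta\ge\tfrac12$. This bounds the explicit root directly, with no a priori control of $\lambda_i$ and no sign analysis of the characteristic polynomial.
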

    \begin{proof}
    We have,
    \begin{align*}
        \T_i = \begin{bmatrix}
            (1+\beta)-(1+\beta\omega)\lambda_i & -\beta(1-\omega\lambda_i) \\
            1&0
        \end{bmatrix}
    \end{align*}
   It is easy to verify that the eigenvalues of $\T_i$, i.e., $\gamma_{i1}$ and $\gamma_{i2}$ are determined by the following expression.
        \begin{align*}
            (\gamma_{i1},\gamma_{i2}) = \frac{1}{2}\Big((1-\lambda_i) + \beta(1-\omega\lambda_i) \pm \sqrt{\big((1-\lambda_i) + \beta(1-\omega\lambda_i)\big)^2-4\beta(1-\omega\lambda_i)}\Big).
        \end{align*}
        The expression inside the square root can be negative, zero, or positive. So we consider three cases:
        \begin{enumerate}
        \item $\big((1-\lambda_i) + \beta(1-\omega\lambda_i)\big)^2-4\beta(1-\omega\lambda_i)<0$: In this case, both $\gamma_{i1}$ and $\gamma_{i2} $ are complex and have the same magnitude. We get,
        \begin{align}
           |\gamma_{i2}|^2 = |\gamma_{i1}|^2 = \beta(1-\omega\lambda_i) = \Big(1-\frac{1}{\phi}\Big)(1-\omega\lambda_i).\label{e:exp_nesterov_1}
        \end{align}
        \item $\big((1-\lambda_i) + \beta(1-\omega\lambda_i)\big)^2-4\beta(1-\omega\lambda_i)=0$. In this case, $\gamma_{i1}^2 =\gamma_{i2}^2 = \frac{1}{4}\Big((1-\lambda_i) + \beta(1-\omega\lambda_i)\Big)^2 = \beta(1-\omega\lambda_i)$  . Therefore,
        \begin{align}
            \gamma_{i2}^2 = \gamma_{i1}^2 = \Big(1-\frac{1}{\phi}\Big)(1-\omega\lambda_i).\label{e:exp_nesterov_2}
        \end{align}
       
        \item $\big((1-\lambda_i) + \beta(1-\omega\lambda_i)\big)^2-4\beta(1-\omega\lambda_i)>0$. The larger eigenvalue is given as,
        \begin{align}
            \gamma_{i1} &=\frac{1}{2}\Big((1-\lambda_i) + \beta(1-\omega\lambda_i) \pm \sqrt{\big((1-\lambda_i) + \beta(1-\omega\lambda_i)\big)^2-4\beta(1-\omega\lambda_i)}\Big). \label{e:exp_nesterov_3}
        \end{align}
       Let $u_i = \frac{1-\lambda_i}{1-\omega\lambda_i}$,
\begin{align*}
    \gamma_{i1} &= \frac{1}{2}\Big(u_i +\beta + \sqrt{\big(u_i+\beta\big)^2 - \frac{4\beta}{1-\omega\lambda_i}}\Big)(1-\omega\lambda_i)\\
    &=\frac{1}{2}\Big(u_i+\beta +\sqrt{u_i^2+\beta^2 +\frac{2\beta(1-\lambda_i)}{1-\omega\lambda_i} - \frac{4\beta}{1-\omega\lambda_i}}\Big)(1-\omega\lambda_i)\\
    &=\frac{1}{2}\Big(u_i+\beta + \sqrt{u_i^2 +\beta^2 - \frac{2\beta(1+\lambda_i)}{1-\omega\lambda_i}}\Big)(1-\omega\lambda_i)\\
    &= \frac{1}{2}\Big(u_i+\beta +\sqrt{u_i^2+\beta^2 - \frac{2\beta(1-\lambda_i)}{1-\omega\lambda_i} - \frac{4\beta\lambda_i}{1-\omega\lambda_i}}\Big)(1-\omega\lambda_i)\\
    &=\frac{1}{2}\Big(u_i+\beta + \sqrt{(u_i-\beta)^2 - \frac{4\beta\lambda_i}{1-\omega\lambda_i}}\Big)(1-\omega\lambda_i)
\end{align*}
        Let $x=u_i-\beta$, $y=\sqrt{\frac{4\beta\lambda_i}{1-\omega\lambda_i}}$, then we have $x=\frac{yr}{2}$, where $r=\frac{2x}{y}>2$. Now, if $x,y \in \R$, $x>y>0$ and $x\leq \frac{yr}{2}$ for some $r> 2$, then $\sqrt{x^2-y^2} \leq x-\frac{y}{r}$. Therefore,
        \begin{align*}
            \gamma_{i1} &< \frac{1}{2}\Big(u_i+\beta + u_i-\beta - \frac{y}{r}\Big)(1-\omega\lambda_i) \nonumber\\
            &= \Big(u_i-\frac{y^2}{4x}\Big)(1-\omega\lambda_i) \nonumber\\
          &=\Big(1-\frac{y^2}{4(1-\beta)}\Big)(1-\omega\lambda_i).
        \end{align*}
        The last inequality holds as $u_i \leq 1$ and $x \leq 1-\beta$. Therefore, we get
        \begin{align*}
            \gamma_{i1} < \Big(1-\frac{\phi\beta\lambda_i}{1-\omega\lambda_i}\Big)(1-\omega\lambda_i)
        \end{align*}
        Now for $\beta \geq 0.5$, we get
        \begin{align}
            \gamma_{i1} < \Big(1-\frac{\phi\lambda_i}{2}\Big)(1-\omega\lambda_i) \label{e:exp_nesterov_4}
        \end{align}
      \end{enumerate}
       Combining the relations (\ref{e:exp_nesterov_1}), (\ref{e:exp_nesterov_2}) and (\ref{e:exp_nesterov_4}) we finish the proof. 
    \end{proof}
    \begin{remark}\label{r:NAG_over_HBM}
        Lemma \ref{l:spec_radius_T} implies that NAG momentum can perform slightly better than HBM as $\omega=1$ for NAG and $\omega=0$ for HBM. We demonstrate this phenomenon empirically in Fig~\ref{fig:abalone_all_omega}, where we compare iteration complexity of block coordinate descent with momentum for various values of $\omega$.
    \end{remark}

     We are now ready for the analysis of expected iterates. As $\Deltab_t \in \text{range}(\bar\Pib)$ for all $t$, if for some $i$, $\v_i \in \text{Null}(\bar\Pib)$, then $\v_i^\top\Deltab_t=0$. Let $r$ denote the rank of $\bar\Pib$ and note that $\v_i^\top\Deltab_{t} =0$ for all $t$ and $i >r$. Let $\phi$ be any fixed constant satisfying $2\leq  \phi \leq \min\big\{\frac{m}{C}, \frac{1}{3\sqrt{\lambda_r}}\big\}$, where $C$ is some absolute constant (to be determined later) and $m$ is as it appears in Proposition \ref{p:mini-batching}. Let the momentum parameter $\beta=1-\frac{1}{\phi}$. We prove the following result:
    \begin{theorem}\label{t:exp_iterate}
     Let $\omega \in [0,1]$. Then, the matrix transition rule (\ref{e:stoc_proc_two_step}) satisfies,
     \begin{align*}
         \|\E\Deltab_{t+1}\|^2 \leq 17\cdot(3t+2)^2\cdot\rho^{t-1}.\|\Deltab_0\|^2,
     \end{align*} 
     where $\rho = \big(1-\frac{\phi\lambda_r}{2}\big)(1-\omega\lambda_r)$.
    \end{theorem}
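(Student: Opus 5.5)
We follow the two-step outline of the proof sketch (Step 1 and Step 2). Taking expectations in (\ref{e:stoc_proc_two_step}) and using independence gives $\E[\V^\top\Deltab_{t+1};\V^\top\Deltab_t] = \T^{t+1}[\V^\top\Deltab_0;\zero]$, since $\Deltab_{-1}=\zero$. Conjugating by the permutation $\P$ of Lemma \ref{l:perm} makes $\T$ block diagonal. Because $\Deltab_0\in\mathrm{range}(\bar\Pib)$, only the blocks $i\le r$ carry nonzero mass. Hence
\[
\|\E\Deltab_{t+1}\|^2\le \max_{i\le r}\|\T_i^{t+1}\|^2\,\|\Deltab_0\|^2,
\]
and it remains to bound $\|\T_i^{t+1}\|$ uniformly over $i\le r$.

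For each $i$ we take a Schur form $\U_i^H\T_i\U_i=\begin{bmatrix}\gamma_{i1}&x_i\\0&\gamma_{i2}\end{bmatrix}$. First we bound $|x_i|$. Unitary invariance of the Frobenius norm gives $|x_i|^2=\|\T_i\|_F^2-|\gamma_{i1}|^2-|\gamma_{i2}|^2\le \|\T_i\|_F^2$. The entries of $\T_i$ are $a=(1+\beta)-(1+\beta\omega)\lambda_i\in[0,2]$, $b=\beta(1-\omega\lambda_i)\in[0,1]$ and $1$, using $\lambda_i\le 1$, $\omega\le1$ and $\beta<1$. So $|x_i|\le\sqrt{6}\le 3$.

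Next we compute powers of the upper triangular form. The off-diagonal entry of the $n$-th power is $x_i\sum_{k=0}^{n-1}\gamma_{i1}^k\gamma_{i2}^{n-1-k}$, whose modulus is at most $3n|\gamma_{i1}|^{n-1}$. The diagonal entries are at most $|\gamma_{i1}|^n\le|\gamma_{i1}|^{n-1}$. Bounding the spectral norm by the Frobenius norm then gives
\[
\|\T_i^{n}\|^2\le (2+9n^2)|\gamma_{i1}|^{2(n-1)}.
\]
With $n=t+1$ this is of the stated form $17(3t+2)^2$ up to adjusting constants. More precisely, one gets $\|\T_i^{t+1}\|\le |\gamma_{i1}|^{t}\,(1+3(t+1))$, and squaring yields $(3t+4)^2$. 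To land exactly on $(3t+2)^2$ and the exponent $t-1$, we would track powers more carefully, for example bounding one factor of $|\gamma_{i1}|$ by $1$ to trade an exponent for the constant $17$. I expect this bookkeeping to be harmless.

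The main step is showing $|\gamma_{i1}|^2\le\rho$ for all $i\le r$, using Lemma \ref{l:spec_radius_T}.

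In the real, distinct case we have $|\gamma_{i1}|^2<(1-\phi\lambda_i/2)^2(1-\omega\lambda_i)^2\le(1-\phi\lambda_i/2)(1-\omega\lambda_i)$, since both factors lie in $[0,1]$. Here $\phi\lambda_i/2\le 1$ needs checking; it holds because this case forces $\lambda_i$ small relative to $1/\phi^2$. The bound is decreasing in $\lambda_i$, so it is at most $\rho$ because $\lambda_i\ge\lambda_r$.

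In the complex or repeated case we have $|\gamma_{i1}|^2=(1-1/\phi)(1-\omega\lambda_i)\le(1-1/\phi)(1-\omega\lambda_r)$. So we need $1-1/\phi\le 1-\phi\lambda_r/2$, that is, $\phi^2\lambda_r\le 2$. This follows from the hypothesis $\phi\le 1/(3\sqrt{\lambda_r})$, which gives $\phi^2\lambda_r\le 1/9$.

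The delicate point, and the one I expect to be the main obstacle, is making these two monotonicity comparisons uniform across the real/complex boundary without losing constants. Combining the cases gives $|\gamma_{i1}|^{2}\le\rho$, and therefore $\|\E\Deltab_{t+1}\|^2\le 17(3t+2)^2\rho^{t-1}\|\Deltab_0\|^2$.
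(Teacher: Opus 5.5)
Your proof is correct and follows essentially the paper's route. You block-diagonalize $\T$ via the permutation, use the Schur form to get $\|\T_i^n\|=O(n)\,|\gamma_{i1}|^{n-1}$, and then use Lemma \ref{l:spec_radius_T} with $\phi\le 1/(3\sqrt{\lambda_r})$ to get $|\gamma_{i1}|^2\le\rho$ for every $i\le r$. The bookkeeping you flag is not an obstacle, and neither is uniformity across the real/complex boundary, since each $i$ is handled in its own case. Propagating $\T^{t+1}$ from $[\V^\top\Deltab_0;\zero]$ already gives $(9(t+1)^2+2)\rho^{t}\|\Deltab_0\|^2$, which implies the stated bound because $0\le\rho\le 1$ and $9(t+1)^2+2\le 17(3t+2)^2$.
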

    We will need the following auxiliary lemmas to finish the proof of Theorem \ref{t:exp_iterate}.
    \begin{lemma}[Two-step matrix transition rule]\label{l:exp_mat}
    We have
     \begin{align*}
            \|\E\Deltab_{t+1}\|^2 \leq 17\cdot \max_{1\leq i \leq r}\|\T_i^t\|\cdot\|\Deltab_0\|^2.
        \end{align*}
    \end{lemma}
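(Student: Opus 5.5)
The plan is to write the expected iterate as a deterministic matrix power applied to an explicit initial vector, use the block-diagonal structure of Lemma \ref{l:perm}, and then deal with the fact that the natural argument gives the square of $\max_i\|\T_i^t\|$ rather than the first power. That last point is the main obstacle, and I do not currently see how to close it.

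\textbf{Step 1 (initial vector).} Taking expectations in (\ref{e:stoc_proc_two_step}), using independence of the $\Pib_t$, gives
\begin{align*}
\E\begin{bmatrix}\V^\top\Deltab_{t+1}\\ \V^\top\Deltab_t\end{bmatrix}=\T^t\,\E\begin{bmatrix}\V^\top\Deltab_{1}\\ \V^\top\Deltab_0\end{bmatrix}.
\end{align*}
Since $\Deltab_{-1}=\zero$, the recursion gives $\Deltab_1=\big((1+\beta)\I-(1+\beta\omega)\Pib_0\big)\Deltab_0$. Hence $\E\Deltab_1=\big((1+\beta)\I-(1+\beta\omega)\bar\Pib\big)\Deltab_0$. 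The eigenvalues of this symmetric matrix are $(1+\beta)-(1+\beta\omega)\lambda_i\in[0,2]$, because $0\le\lambda_i\le1$ and $\omega\le1$. So $\|\E\Deltab_1\|\le 2\|\Deltab_0\|$, and the stacked initial vector has squared norm at most $5\|\Deltab_0\|^2\le 17\|\Deltab_0\|^2$. The constant $17=16+1$ suggests the authors used the cruder bound $\|\E\Deltab_1\|\le 4\|\Deltab_0\|$, which also suffices.

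\textbf{Step 2 (block reduction).} Bound $\|\E\Deltab_{t+1}\|$ by the norm of the stacked vector. Conjugate by the permutation $\P$ of Lemma \ref{l:perm}, so that $\P\T^t\P^\top=\diag(\T_1^t,\dots,\T_n^t)$. All iterates lie in $\mathrm{range}(\bar\Pib)$, so only the blocks $i\le r$ act on nonzero coordinates. This yields
\begin{align*}
\|\E\Deltab_{t+1}\|^2\le \max_{1\le i\le r}\|\T_i^t\|^2\cdot 17\|\Deltab_0\|^2 .
\end{align*}

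\textbf{Step 3 (main obstacle: first power versus square).} Steps 1 and 2 produce $\max_i\|\T_i^t\|^2$, whereas the statement asks for the first power $\max_i\|\T_i^t\|$. Passing from one to the other needs a uniform bound $\|\T_i^t\|\le c$ for an absolute constant $c$, with the slack absorbed into the constant 17. The first thing I would try is the Schur form of Lemma \ref{l:schur_0}. It gives
\begin{align*}
\|\T_i^t\|\le |\gamma_{i1}|^t+|x_i|\,t\,|\gamma_{i1}|^{t-1}.
\end{align*}
Maximizing the second term over $t$ gives roughly $3/\big(e(1-|\gamma_{i1}|)\big)$. Using Lemma \ref{l:spec_radius_T}, this is of order $\phi$ in the complex case and of order $1/(\phi\lambda_i)$ in the real case. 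So it is not an absolute constant, and this route does not deliver the first-power bound.

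I have no argument that closes this gap, so the plan cannot honestly be completed as stated. Beyond the Schur bound, the only remaining option I see is a direct argument on the $2\times2$ blocks showing that the specific initial vector $(\v_i^\top\E\Deltab_1,\v_i^\top\Deltab_0)$ is only mildly amplified by $\T_i^t$. I would attempt this next but am not confident it works.

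One observation points to what the lemma is meant to say. Theorem \ref{t:exp_iterate} has the form $17(3t+2)^2\rho^{t-1}$ with $\rho$ bounding $|\gamma_{i1}|^2$. This is exactly the square of the Schur bound $(3t+2)|\gamma_{i1}|^{t-1}$, which is consistent with Steps 1–2 rather than with the first-power statement. So what Steps 1–2 actually establish is the squared bound $\|\E\Deltab_{t+1}\|^2\le 17\max_{i\le r}\|\T_i^t\|^2\|\Deltab_0\|^2$. The first-power version holds only if $\max_i\|\T_i^t\|\le1$, which is not true in general; whether it holds for this particular initial vector is the open question of Step 3.
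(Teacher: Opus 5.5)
Your Steps 1--2 are the paper's proof. You pass to the $2\times2$ blocks via the permutation of Lemma~\ref{l:perm}, bound the per-block initial pair $(\E\v_i^\top\Deltab_1,\v_i^\top\Deltab_0)$, and sum over $i\le r$. The paper uses the cruder bound $|(1+\beta)-(1+\beta\omega)\lambda_i|\le 4$, which is where $17=16+1$ comes from; your bound by $2$ is also valid.

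The paper reaches $\|\E\v_i^\top\Deltab_{t+1}\|^2\le 17\|\T_i^t\|^2\|\v_i^\top\Deltab_0\|^2$ just as you do, then drops the square in the final sum over $i$. That is a typo, not an idea you are missing. The lemma is used in the proof of Theorem~\ref{t:exp_iterate} through $\|\T_i^t\|^2\le(3t+2)^2|\gamma_{i1}|^{2(t-1)}$, which is the squared form you proved. So Step~3 is not a gap in your argument. The correct statement has $\max_{i\le r}\|\T_i^t\|^2$, and your proof of it is complete.

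Your open question has a negative answer: the first-power bound fails even for this particular initial vector. Take $\omega=0$, a deterministic rank-one $\Pib_t\equiv\bar\Pib=\lambda\v\v^\top$ with $\|\v\|=1$, and $\Deltab_0=\v$. The noise is zero, so $m$ can be taken as large as needed. Then:
\begin{itemize}
\item $\E\Deltab_k=x_k\v$, where $x_{k+1}=(1+\beta-\lambda)x_k-\beta x_{k-1}$, $x_0=1$, $x_{-1}=0$.
\item By induction, $\T_1^t=\begin{bmatrix}x_t&-\beta x_{t-1}\\ x_{t-1}&-\beta x_{t-2}\end{bmatrix}$.
\item At $\lambda=0$ the recursion gives $x_k=\phi(1-\beta^{k+1})\in[0,\phi)$. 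Hence $\|\T_1^t\|\le\|\T_1^t\|_F<2\phi$ for every $t\ge1$.
\item For $k+1=\lceil 2\phi\rceil$ we have $\beta^{k+1}\le e^{-2}$, so $x_k^2\ge(1-e^{-2})^2\phi^2>0.74\,\phi^2$.
\end{itemize}
For this fixed $k$, both quantities are continuous in $\lambda$. Small $\lambda>0$ is permitted, since the only constraint is $\phi\le 1/(3\sqrt{\lambda_r})$. So for all sufficiently small $\lambda>0$,
\[
\|\E\Deltab_k\|^2>0.74\,\phi^2\|\Deltab_0\|^2
\quad\text{while}\quad
17\|\T_1^{k-1}\|\,\|\Deltab_0\|^2<34\phi\,\|\Deltab_0\|^2 .
\]
This contradicts the first-power statement whenever $\phi\ge 46$. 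Your suspicion in Step~3 was therefore right: the statement as printed is false, and the squared version is what holds and what the paper actually uses.
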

    \begin{proof}
        Let $t\geq 1$. The update rule (\ref{e:stoc_proc_two_step}) satisfies,
     \begin{align*}
            \E\P\begin{bmatrix}
                \V^\top\Deltab_{t+1} \\ \V^\top\Deltab_t
            \end{bmatrix} = \P\T\P^\top\cdot \E\P\begin{bmatrix}
                \V^\top\Deltab_{t} \\ \V^\top\Deltab_{t-1}
            \end{bmatrix}.
        \end{align*}
        Due to the block diagonal structure of $\P\T\P^\top$, we get
        \begin{align*}
            \E\begin{bmatrix}
                \v_i^\top\Deltab_{t+1}\\
                \v_i^\top\Deltab_t
            \end{bmatrix} = \T_i\cdot\E\begin{bmatrix}
                \v_i^\top\Deltab_{t}\\
                \v_i^\top\Deltab_{t-1} 
            \end{bmatrix}
        \end{align*}
        Recursively applying the above relation we get
 \begin{align*}
            \E\begin{bmatrix}
                \v_i^\top\Deltab_{t+1}\\
                \v_i^\top\Deltab_t
            \end{bmatrix} = \T_i^{t}\cdot\E\begin{bmatrix}
                \v_i^\top\Deltab_{t}\\
                \v_i^\top\Deltab_{t-1} 
            \end{bmatrix}
            \end{align*}
Therefore,
\begin{align}
    \|\E\v_i^\top\Deltab_{t+1}\|^2 \leq \|\E\v_i^\top\Deltab_{t+1}\|^2 + \|\E\v_i^\top\Deltab_t\|^2 \leq \|\T_i^t\|^2\cdot\big(\|\E\v_i^\top\Deltab_1\|^2 + \|\v_i^\top\Deltab_0\|^2\big). \label{e:exp_mat_1}
\end{align}
As by notation $\Deltab_{-1}=0$, we have
\begin{align*}
    &\E\v_i^\top\Deltab_1 = \big(1+\beta - (1+\beta\omega)\lambda_i\big)\v_i^\top\Deltab_0\\
    &\|\E\v_i^\top\Deltab_1\|^2 \leq 16\|\v_i^\top\Deltab_0\|^2.
\end{align*}
Substituting the bound for $\|\E\v_i^\top\Deltab_1\|^2$ in (\ref{e:exp_mat_1}) we get
\begin{align*}
     \|\E\v_i^\top\Deltab_{t+1}\|^2 \leq 17\|\T_i^t\|^2\cdot\|\v_i^\top\Deltab_0\|^2.
\end{align*}
Now summing over $i$ and noting that $\v_i^\top\Deltab_{t} =0$ for all $i >r$, we get
\begin{align*}
    \|\E\V^\top\Deltab_{t+1}\|^2 \leq 17\cdot \max_{1\leq i \leq r}\|\T_i^t\|\cdot\|\V^\top\Deltab_0\|^2.
\end{align*}
Nothing that $\V$ has orthonormal columns finishes the proof.
    \end{proof}
To complete the proof of Theorem \ref{t:exp_iterate}, we need to bound the spectral norm of $\T_i^t$. Note that Lemma~\ref{l:spec_radius_T} bounds the spectral radius of $\T_i^t$, not the spectral norm. For bounding the spectral norm, we use the following fundamental matrix decomposition result. For any complex valued matrix $\U$, let $\U^{H}$ denote the conjugate transpose of $\U$.
\begin{lemma}[Schur's decomposition, Theorem 2.3.1 in \cite{plemmons1988matrix}]\label{l:schur}
    For every $i$, there exists a unitary matrix $\U_i$ such that
    \begin{align*}
        \U^{H}_i\T_i\U_i = \begin{bmatrix}
            \gamma_{i1} &x_i\\
            0 &\gamma_{i2}
        \end{bmatrix},
    \end{align*}
    where $\gamma_{i1}$ and $\gamma_{i2}$ are the two eigenvalues of $\T_i$ and $x_i$ is a constant such that $1 \leq |x_i| \leq 3$. The unitary matrix, $\U_i$ is explicitly given as
    \begin{align*}
        \U_i = \frac{1}{\sqrt{1+|\gamma_{i1}|^2}}\cdot\begin{bmatrix} \gamma_{i1} &-1\\
        1 & \bar\gamma_{i1}\end{bmatrix},
    \end{align*}
    where $\bar\gamma_{i1}$ denotes complex conjugate of $\gamma_{i1}$. Furthermore, for any $k\geq 1$,
    \begin{align*}
         \U^{H}_i\T_i^k\U_i = \begin{bmatrix} \gamma_{i1}^k & x_i\sum_{j=0}^{k-1}{\gamma_{i2}^{j}\gamma_{i1}^{k-1-j}}\\
         0 &\gamma_{i2}^k\end{bmatrix}
    \end{align*}
\end{lemma}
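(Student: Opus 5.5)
The plan is to verify the explicit unitary $\U_i$ directly, compute the off-diagonal entry $x_i$ in closed form in terms of the eigenvalues, and then get the power formula by induction. Write $\T_i=\begin{bmatrix} a_i & -b_i\\ 1&0\end{bmatrix}$ with $a_i=(1+\beta)-(1+\beta\omega)\lambda_i$ and $b_i=\beta(1-\omega\lambda_i)$. The characteristic polynomial is $\gamma^2-a_i\gamma+b_i$, so by Vieta $a_i=\gamma_{i1}+\gamma_{i2}$ and $b_i=\gamma_{i1}\gamma_{i2}$. Since $\beta\in[0,1)$, $\omega\in[0,1]$ and $\lambda_i\in[0,1]$, we have $0\le b_i\le 1$.

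\textbf{Step 1: $\U_i$ is unitary and triangularizes $\T_i$.}
\begin{itemize}
\item Let $n_i=\sqrt{1+|\gamma_{i1}|^2}$, $\mathbf u_1=[\gamma_{i1};1]/n_i$ and $\mathbf u_2=[-1;\bar\gamma_{i1}]/n_i$.
\item Both columns have unit norm, and $\mathbf u_1^H\mathbf u_2=(-\bar\gamma_{i1}+\bar\gamma_{i1})/n_i^2=0$, so $\U_i$ is unitary.
\item $\mathbf u_1$ is an eigenvector for $\gamma_{i1}$: $\T_i[\gamma_{i1};1]=[a_i\gamma_{i1}-b_i;\gamma_{i1}]=\gamma_{i1}[\gamma_{i1};1]$, using $\gamma_{i1}^2=a_i\gamma_{i1}-b_i$.
\item Hence the first column of $\U_i^H\T_i\U_i$ is $(\gamma_{i1},0)^\top$.
\item Because the trace is invariant under unitary similarity, the $(2,2)$ entry equals $a_i-\gamma_{i1}=\gamma_{i2}$.
\end{itemize}

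\textbf{Step 2: closed form for $x_i$.} We have $x_i=\mathbf u_1^H\T_i\mathbf u_2$ and $\T_i[-1;\bar\gamma_{i1}]=[-a_i-b_i\bar\gamma_{i1};-1]$. This gives
\begin{align*}
x_i=-\frac{1+\bar\gamma_{i1}a_i+b_i\bar\gamma_{i1}^2}{1+|\gamma_{i1}|^2}.
\end{align*}
Substituting $a_i=\gamma_{i1}+\gamma_{i2}$ and $b_i=\gamma_{i1}\gamma_{i2}$, the numerator factors as $(1+|\gamma_{i1}|^2)(1+\bar\gamma_{i1}\gamma_{i2})$. Therefore $x_i=-(1+\bar\gamma_{i1}\gamma_{i2})$.

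\textbf{Step 3: bounding $|x_i|$.}
\begin{itemize}
\item Upper bound: $|x_i|\le 1+|\gamma_{i1}\gamma_{i2}|=1+b_i\le 2\le 3$.
\item Lower bound, real eigenvalues: $\bar\gamma_{i1}\gamma_{i2}=b_i\ge0$, so $|x_i|\ge1$.
\item Lower bound, complex eigenvalues: here $\gamma_{i2}=\bar\gamma_{i1}$, so $x_i=-(1+\bar\gamma_{i1}^2)$. Since $\mathrm{Re}\,\gamma_{i1}=a_i/2$ and $|\gamma_{i1}|^2=b_i$, we get $\mathrm{Re}(\bar\gamma_{i1}^2)=a_i^2/2-b_i$. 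This is nonnegative exactly when $a_i^2\ge 2b_i$, which yields $|x_i|\ge1$.
\end{itemize}

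\textbf{Step 4: the power formula.} Induct on $k$ using $\U_i^H\T_i^k\U_i=(\U_i^H\T_i\U_i)^k$. Multiplying two upper triangular matrices shows that the off-diagonal entry satisfies $s_{k+1}=\gamma_{i1}s_k+x_i\gamma_{i2}^k$ with $s_1=x_i$. This recursion is solved by $s_k=x_i\sum_{j=0}^{k-1}\gamma_{i2}^j\gamma_{i1}^{k-1-j}$.

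\textbf{Main obstacle.} The delicate point is the lower bound $|x_i|\ge1$ in the complex case, since it needs $a_i^2\ge2b_i$. This holds when $\lambda_i$ is near the diagonalizability threshold $a_i^2=4b_i$, but it is not obviously true for all large $\lambda_i$. I would first try to establish it using the standing constraints $\phi\ge2$ and $\lambda_i\le 1$. If that fails, I would weaken the claim to $|x_i|\le 3$ alone. Only this upper bound is used later, in bounding $\|\T_i^t\|\le O(t)|\gamma_{i1}|^{t-1}$, so the weakened claim would not affect the subsequent argument.
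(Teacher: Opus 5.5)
Everything you actually prove is correct: the unitarity of $\U_i$, the eigenvector check, the closed form $x_i=-(1+\bar\gamma_{i1}\gamma_{i2})$, the bound $|x_i|\le 1+b_i\le 2$, the real-eigenvalue lower bound, and the induction for $\U_i^H\T_i^k\U_i$. This goes beyond the paper, which offers no argument and only cites Schur's theorem. The obstacle you flagged cannot be overcome, however, because the claimed lower bound $|x_i|\ge 1$ is false in the complex case.

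Your own formula makes this explicit. When $\gamma_{i2}=\bar\gamma_{i1}$,
\begin{align*}
|x_i|^2 = 1+2\,\mathrm{Re}(\bar\gamma_{i1}^2)+|\gamma_{i1}|^4 = a_i^2+(1-b_i)^2 .
\end{align*}
Equivalently, $|x_i|^2=\|\T_i\|_F^2-|\gamma_{i1}|^2-|\gamma_{i2}|^2$, so the value does not depend on which unitary triangularization is used. The exact criterion is therefore $|x_i|\ge 1 \iff a_i^2\ge b_i(2-b_i)$, which is weaker than your sufficient condition $a_i^2\ge 2b_i$.

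The criterion still fails in general. Take HBM ($\omega=0$) with $\phi=2$ and $\lambda_i=1$. Then $\T_i=\begin{bmatrix}1/2&-1/2\\ 1&0\end{bmatrix}$ has eigenvalues $\tfrac14\pm\tfrac{\sqrt7}{4}\mathrm{i}$ and $|x_i|=1/\sqrt2$. For NAG ($\omega=1$) it fails whenever $0<1-\lambda_i<2\beta/\big((1+\beta)^2+\beta^2\big)$.

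The standing constraints do not rescue it. The condition $\phi\le 1/(3\sqrt{\lambda_r})$ restricts only the smallest nonzero eigenvalue, while the lemma is asserted for every $i$, and nothing prevents $\lambda_1$ from being close to $1$. So no argument from $\phi\ge 2$ and $\lambda_i\le 1$ can succeed.

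Your fallback is the right resolution and should be the stated claim: $|x_i|\le 2$ (hence $\le 3$), with no lower bound. As you anticipated, this costs nothing downstream. Corollary~\ref{c:spec_norm_ti} and Lemma~\ref{l:tct_explicit_bound} invoke only $|x_i|\le 3$. The inequality $h_i(k)\ge 1$ used in the latter comes from $|\gamma_{i1}|\le 1$ and $|\gamma_{i1}-\gamma_{i2}|\le 2$, not from any lower bound on $|x_i|$. With that correction, your proposal is a complete proof of the lemma as it is actually needed.
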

\begin{corollary}
    If $\gamma_{i1} \ne \gamma_{i2}$, then
     $ x_i\sum_{j=0}^{k-1}{\gamma_{i2}^{j}\gamma_{i1}^{k-1-j}} = x_i \gamma_{i1}^{k-1}\cdot \frac{1-(\gamma_{i2}/\gamma_{i1})^k}{1-(\gamma_{i2}/\gamma_{i1})}$, implying \begin{align*}
         |x_i\sum_{j=0}^{k-1}{\gamma_{i2}^{j}\gamma_{i1}^{k-1-j}}| < |x_i|\cdot|\gamma_{i1}|^k\cdot\frac{3}{|\gamma_{i1}-\gamma_{i2}|}.
     \end{align*}
\end{corollary}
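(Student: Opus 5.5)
The plan is to use the finite geometric-series identity and then bound the numerator with the triangle inequality. The only structural fact needed is the convention from Lemma~\ref{l:spec_radius_T} that $\gamma_{i1}$ is the eigenvalue of larger magnitude, so $|\gamma_{i2}|\le|\gamma_{i1}|$.

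\textbf{Step 1: $\gamma_{i1}\neq 0$.} If $\gamma_{i1}=0$, then $|\gamma_{i2}|\le|\gamma_{i1}|=0$ forces $\gamma_{i2}=0=\gamma_{i1}$, which contradicts the hypothesis $\gamma_{i1}\ne\gamma_{i2}$. So $\gamma_{i1}\ne0$, and the ratio $q:=\gamma_{i2}/\gamma_{i1}$ is well defined. It satisfies $q\neq1$ and $|q|\le 1$.

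\textbf{Step 2: the identity.} Factor $\gamma_{i1}^{k-1}$ out of the off-diagonal entry of $\U_i^H\T_i^k\U_i$ from Lemma~\ref{l:schur}:
\begin{align*}
\sum_{j=0}^{k-1}\gamma_{i2}^{j}\gamma_{i1}^{k-1-j}=\gamma_{i1}^{k-1}\sum_{j=0}^{k-1}q^j .
\end{align*}
Since $q\ne1$, the finite geometric sum equals $\frac{1-q^k}{1-q}$. Multiplying by $x_i$ gives the stated formula. Multiplying numerator and denominator by $\gamma_{i1}$ rewrites the same quantity as
\begin{align*}
x_i\,\frac{\gamma_{i1}^k-\gamma_{i2}^k}{\gamma_{i1}-\gamma_{i2}} .
\end{align*}

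\textbf{Step 3: the inequality.} Apply the triangle inequality to the numerator and use $|\gamma_{i2}|\le|\gamma_{i1}|$:
\begin{align*}
|\gamma_{i1}^k-\gamma_{i2}^k|\le|\gamma_{i1}|^k+|\gamma_{i2}|^k\le 2|\gamma_{i1}|^k .
\end{align*}
Because $\gamma_{i1}\neq0$, we have $2|\gamma_{i1}|^k<3|\gamma_{i1}|^k$. Dividing by $|\gamma_{i1}-\gamma_{i2}|>0$ and multiplying by $|x_i|$ gives
\begin{align*}
\Big|x_i\sum_{j=0}^{k-1}\gamma_{i2}^{j}\gamma_{i1}^{k-1-j}\Big|<|x_i|\cdot|\gamma_{i1}|^k\cdot\frac{3}{|\gamma_{i1}-\gamma_{i2}|},
\end{align*}
which is the claimed bound. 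The constant $2$ would already suffice; the strictness comes from enlarging $2$ to $3$.

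The only point needing care is Step 1, which justifies dividing by $\gamma_{i1}$. The rest is routine algebra.
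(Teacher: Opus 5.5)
Your proof is correct and follows the route the paper intends. The paper uses the geometric-series identity, then $|\gamma_{i1}^k-\gamma_{i2}^k|\le 2|\gamma_{i1}|^k$ (equivalently $|1-(\gamma_{i2}/\gamma_{i1})^k|\le 2$) because $\gamma_{i1}$ is the larger-magnitude eigenvalue, which gives the constant $2$ and hence $3$. One thing to state explicitly: the inequality stays strict after multiplying by $|x_i|$ only because $x_i\neq 0$ (asserted in Lemma~\ref{l:schur}); if $x_i=0$, both sides would be zero.
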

\begin{corollary}
    If $\gamma_{i1} = \gamma_{i2}$, then
  $ x_i\sum_{j=0}^{k-1}{\gamma_{i2}^{j}\gamma_{i1}^{k-1-j}} = x_{i}\gamma_{i1}^{k-1}\cdot k$, implying,
  \begin{align*}
        |x_i\sum_{j=0}^{k-1}{\gamma_{i2}^{j}\gamma_{i1}^{k-1-j}}| < |x_i|\cdot|\gamma_{i1}|^k\cdot\frac{k}{|\gamma_{i1}|}.
  \end{align*}
\end{corollary}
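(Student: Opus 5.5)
The plan is a direct substitution into the off-diagonal entry of $\U_i^H\T_i^k\U_i$ given by Lemma~\ref{l:schur}, followed by taking absolute values. When $\gamma_{i1}=\gamma_{i2}=:\gamma$, every term of the geometric-type sum is the same: $\gamma_{i2}^{j}\gamma_{i1}^{k-1-j}=\gamma^{j}\gamma^{k-1-j}=\gamma^{k-1}$ for each $j=0,\dots,k-1$. Summing the $k$ identical terms gives
\begin{align*}
x_i\sum_{j=0}^{k-1}\gamma_{i2}^{j}\gamma_{i1}^{k-1-j}=x_i\,k\,\gamma_{i1}^{k-1},
\end{align*}
which is the claimed identity.

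For the bound, I would take moduli to get $|x_i|\,k\,|\gamma_{i1}|^{k-1}$. I would then rewrite this as $|x_i|\,|\gamma_{i1}|^{k}\cdot k/|\gamma_{i1}|$, which is legitimate provided $\gamma_{i1}\neq 0$. Strictly speaking this is an equality, so the displayed ``$<$'' should be read as ``$\le$''; the subsequent use only needs the non-strict version.

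The one point needing justification is $\gamma_{i1}\neq0$. In the repeated-root case, case 2 of the proof of Lemma~\ref{l:spec_radius_T} gives
\begin{align*}
\gamma_{i1}^2=\beta(1-\omega\lambda_i)=\Big(1-\frac1\phi\Big)(1-\omega\lambda_i).
\end{align*}
Here $\phi\ge2$ gives $\beta\ge 1/2$, so it remains to show $1-\omega\lambda_i>0$.

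I would argue this as follows. Since $\omega\in[0,1]$ and $\zero\preceq\bar\Pib\preceq\I$, we have $\omega\lambda_i\le1$. Equality $\omega\lambda_i=1$ forces $\omega=\lambda_i=1$. In that case the discriminant equals $\big((1-\lambda_i)+\beta(1-\omega\lambda_i)\big)^2-4\beta(1-\omega\lambda_i)=0$ trivially, and $\T_i$ is nilpotent. That degenerate block contributes $\T_i^k=\zero$ for $k\ge2$, so it can be handled separately (or excluded) without affecting any later bound. Outside this case $|\gamma_{i1}|>0$ and the division is valid.

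This nondegeneracy check is the only mild obstacle; everything else is immediate algebra.
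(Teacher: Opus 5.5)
Your argument is correct and is essentially the paper's own: each of the $k$ summands equals $\gamma_{i1}^{k-1}$, so the sum is $k\,x_i\gamma_{i1}^{k-1}$, and taking moduli gives the bound. Your two refinements also hold: the displayed bound is in fact an equality, so ``$<$'' should be ``$\le$'' (the non-strict form is all that Corollary~\ref{c:spec_norm_ti} and Lemma~\ref{l:tct_explicit_bound} use), and dividing by $|\gamma_{i1}|$ requires $\gamma_{i1}\neq 0$, which fails only for the nilpotent block with $\omega=\lambda_i=1$.
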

\begin{corollary}\label{c:spec_norm_ti}
     For any $i$,
   \begin{align*}
       \|\T_i^k\| = \|\U_i^H\T_i^k\U_i\| \leq (3k+2)\cdot |\gamma_{i1}|^{k-1}.
   \end{align*}
\end{corollary}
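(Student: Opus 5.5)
The plan is to work entirely in the Schur basis from Lemma~\ref{l:schur}. Since $\U_i$ is unitary, $\|\T_i^k\| = \|\U_i^H\T_i^k\U_i\|$ for every $k$, which gives the equality in the statement. So it suffices to bound the spectral norm of the explicit upper triangular matrix
\begin{align*}
\begin{bmatrix} \gamma_{i1}^k & s_k\\ 0 & \gamma_{i2}^k\end{bmatrix},\qquad s_k := x_i\sum_{j=0}^{k-1}\gamma_{i2}^{j}\gamma_{i1}^{k-1-j}.
\end{align*}
I will split it as the diagonal part plus the strictly upper triangular part and use the triangle inequality:
\begin{align*}
\|\U_i^H\T_i^k\U_i\| \le \max\{|\gamma_{i1}|^k,|\gamma_{i2}|^k\} + |s_k|.
\end{align*}

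For the off-diagonal term, I will not use the two preceding corollaries, since their denominators $|\gamma_{i1}-\gamma_{i2}|$ or $|\gamma_{i1}|$ could be small. Instead I will use the ordering convention $|\gamma_{i2}|\le|\gamma_{i1}|$ and bound the sum crudely. Each of the $k$ summands satisfies $|\gamma_{i2}^j\gamma_{i1}^{k-1-j}|\le|\gamma_{i1}|^{k-1}$. Together with $|x_i|\le 3$ from Lemma~\ref{l:schur}, this gives
\begin{align*}
|s_k|\le 3k\,|\gamma_{i1}|^{k-1}.
\end{align*}
This bound is uniform in whether the eigenvalues are real, complex, or coincide, which is exactly the point of using the Schur form.

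For the diagonal term, $\max\{|\gamma_{i1}|^k,|\gamma_{i2}|^k\} = |\gamma_{i1}|^k$, and I want this to be at most $|\gamma_{i1}|^{k-1}$, i.e.\ $|\gamma_{i1}|\le 1$. This is the only step needing a genuine check, and I will take it from Lemma~\ref{l:spec_radius_T}.
\begin{itemize}
\item In the complex and repeated-eigenvalue cases, $|\gamma_{i1}|^2=(1-\frac1\phi)(1-\omega\lambda_i)\le 1$, since $\lambda_i\in[0,1]$ and $\omega\in[0,1]$.
\item In the real distinct case, $|\gamma_{i1}|<|1-\frac{\phi\lambda_i}{2}|(1-\omega\lambda_i)$. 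This case occurs only when $\lambda_i<\frac{(1-\beta)^2}{(1+\beta)^2}\le\frac{1}{\phi^2}$. Hence $\phi\lambda_i\le 1/\phi\le 1/2$, so $0\le 1-\frac{\phi\lambda_i}{2}\le 1$ and again $|\gamma_{i1}|\le1$.
\end{itemize}
(Alternatively, one can use $\gamma_{i1}\gamma_{i2}=\beta(1-\omega\lambda_i)$ together with the sign structure, but the lemma already suffices.)

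Combining the two pieces gives $\|\T_i^k\|\le |\gamma_{i1}|^{k-1}+3k|\gamma_{i1}|^{k-1}\le(3k+2)|\gamma_{i1}|^{k-1}$. The only mild obstacle is confirming $|\gamma_{i1}|\le 1$ in the real-eigenvalue regime. Everything else is the triangle inequality in the triangular Schur form.
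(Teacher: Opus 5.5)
Your proposal is correct and matches the argument the paper leaves implicit. In the triangular Schur form of Lemma \ref{l:schur}, the triangle inequality together with $|\gamma_{i2}|\le|\gamma_{i1}|$ and $|x_i|\le 3$ bounds the off-diagonal entry by $3k|\gamma_{i1}|^{k-1}$, and $|\gamma_{i1}|\le 1$ absorbs the diagonal part; taking the maximum diagonal entry even gives $3k+1$.

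Your explicit check of $|\gamma_{i1}|\le 1$ fills a step the paper does not spell out. One caveat: at $\lambda_i=0$ the strict inequality of Lemma \ref{l:spec_radius_T} fails, since there $\gamma_{i1}=1$, although the non-strict bound you need still holds. You can also get it directly: in the complex case $|\gamma_{i1}|^2=\det\T_i=\beta(1-\omega\lambda_i)\le 1$, and in the real case the characteristic polynomial takes the value $\lambda_i\ge 0$ at $1$ while its vertex $\tr(\T_i)/2\le(1+\beta)/2<1$.
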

\paragraph{Finishing proof of Theorem \ref{t:exp_iterate}.} Using Corollary \ref{c:spec_norm_ti} we get $\|\T_i^t\|^2 \leq (3t+2)^2\cdot|\gamma_{i1}|^{2(t-1)}$. Furthermore, if $\lambda_r$ is such that $\big((1-\lambda_r) + \beta(1-\omega\lambda_r)\big)^2-4\beta(1-\omega\lambda_r)<0$, then we have $|\gamma_{i1}|^2 \leq \Big(1-\frac{1}{2\phi}\Big)(1-\omega\lambda_r)$ for  $i \leq r$. On the other hand if, $\big((1-\lambda_i) + \beta(1-\omega\lambda_i)\big)^2-4\beta(1-\omega\lambda_i)<0$, then $|\gamma_{i1}|^2 < \Big(1-\frac{\phi\lambda_r}{2}\Big)(1-\omega\lambda_r)$ for $i \leq r$. Now the statement follows by noting that $\phi < \frac{1}{\sqrt{\lambda_r}}$.
\section{Convergence in $\ell_2$-norm with mini-batch averaging}\label{s:l2norm}
In this section, we aim to recover a result similar to Theorem \ref{t:exp_iterate} but in $\ell_2$ norm. Recall the transformed update rule,
\begin{align*}
    \P\begin{bmatrix}
       \V^\top \Deltab_{t+1} \\
       \V^\top \Deltab_t 
    \end{bmatrix}&=\P\Y_t\P^\top \cdot\P\begin{bmatrix}
       \V^\top \Deltab_{t} \\
       \V^\top \Deltab_{t-1}
    \end{bmatrix},
\end{align*}
As $\v_i^\top\Deltab_t =0$ for $i>r$, we can restrict the dynamics to the top-$r$ dimensional space. Let $\Q \in \R^{2r\times 2d}$ consist of first $2r$ rows of $2d \times 2d$ identity matrix. We get,
\begin{align*}
    \Q \P\begin{bmatrix}
       \V^\top \Deltab_{t+1} \\
       \V^\top \Deltab_t 
    \end{bmatrix} = \Q\P\Y_t^\top\P^\top\Q^\top\cdot\Q\P\begin{bmatrix}
       \V^\top \Deltab_{t} \\
       \V^\top \Deltab_{t-1}
    \end{bmatrix}
\end{align*}
Recursively we get,
\begin{align*}
    \Q\P\begin{bmatrix}
        \V^\top\Deltab_{t+1} \\
        \V^\top\Deltab_t 
    \end{bmatrix} = \Big(\prod_{j=0}^{j={t-1}}{\Q\P\Y_{t-j}\P^\top\Q^\top}\Big)\cdot\Q\P\begin{bmatrix}
       \V^\top \Deltab_{1} \\
       \V^\top \Deltab_0 
    \end{bmatrix}.
\end{align*}
Therefore,
\begin{align*}
    \E\|\Deltab_{t+1}\|^2 \leq 17\cdot\Big\|\E\Big[\Big(\prod_{j=1}^{t}{\Q\P\Y_j^\top\P^\top\Q^\top}\Big)\Big(\prod_{j=0}^{t-1}{\Q\P\Y_{t-j}\P^\top\Q^\top}\Big)\Big]\Big\|\cdot \|\Deltab_0\|^2.
\end{align*}
Our task is to bound $\Big\|\E\Big[\Big(\prod_{j=1}^{t}{\Q\P\Y_j^\top\P^\top\Q^\top}\Big)\Big(\prod_{j=0}^{t-1}{\Q\P\Y_{t-j}\P^\top\Q^\top}\Big)\Big]\Big\|$.
We prove the following theorem, which is only a restatement of Theorem \ref{t:L2_iterate_0}.
\begin{theorem}[General result]\label{t:L2_iterate}
Let $\phi$ satisfy $2\leq  \phi \leq \min\big\{\frac{m}{C}, \frac{1}{3\sqrt{\lambda_r}}\big\}$ for $C>76800$.  Then, update rule (\ref{e:stoc_proc_two_step}) with $\beta= 1-\frac{1}{\phi}$ satisfies,
    \begin{align*}
        \E\|\Deltab_{t}\|^2  \leq 170t^3\cdot\rho^{t-1}\cdot\|\Deltab_0\|^2,
    \end{align*}
    where $\rho = 1- \frac{\phi\lambda_r}{4}$.
\end{theorem}

Note that Theorem \ref{t:L2_iterate_0} follows from Theorem \ref{t:L2_iterate} by taking $c_1 =2\cdot 76800$, $c_2 = c_1/3$.

\subsection{Proof of Theorem \ref{t:L2_iterate}}
We prove a series of results, building towards the proof of Theorem \ref{t:L2_iterate}. Let $\Z_t = \V^\top\Pib_t\V$, and $\bar\Z=\E\Z_t$. Note that $\bar\Z =\Lambdab$, where $\Lambdab = \diag(\lambda_1,\lambda_2,\cdots\lambda_d)$. We have,
    \begin{align*}
        \E(\Z_t-\bar\Z)^2 = \E(\Z_t^2)-\bar\Z^2 &= \E(\Z_t^2) - \Lambdab^2\\
        &=\V^\top\E(\Pib_t^2)\V-\Lambdab^2
    \end{align*}
    As by Definition \ref{d:stochastic-contraction}, $\Pib_t \preceq \I$, we always have $\E(\Pib_t^2) \preceq \bar\Pib$, implying $\E(\Z_t-\bar\Z)^2 \preceq \Lambdab-\Lambdab^2$. However, upper bounding $\E(\Pib_t^2)$ by $\E\Pib_t$ can be quite suboptimal, for instance, when $\Pib_t$ is itself an average of multiple independent contractions. For proving a tighter bound on the second moment of $\Z_t$, we use Proposition \ref{p:mini-batching}. Due to Proposition \ref{p:mini-batching}, we have,
    \begin{align}
        \E\big(\Z_t-\bar\Z\big)^2 = \V^\top\E\big(\Pib_t-\bar\Pib\big)^2\V \preceq \frac{1}{m}\cdot\V^\top\bar\Pib(\I-\bar\Pib)\V = \frac{1}{m}\cdot\Lambdab(\I-\Lambdab) \label{e:variance}
    \end{align}
    
\begin{lemma}\label{l:second_moment}
 Let $\P$ be the permutation matrix as in Lemma \ref{l:perm}. Then, we have,
    \begin{align*}
        \P\cdot\E\big[(\Y_t-\E\Y_t)^\top(\Y_t-\E\Y_t)\big]\cdot\P^\top &\preceq 4\cdot\diag(\W_1,\W_2,\cdots,\W_n),
    \end{align*}
   where $\W_i= \diag\Big(\frac{\lambda_i(1-\lambda_i)}{m}, \frac{\omega^2\lambda_i(1-\lambda_i)}{m}\Big)$.
\end{lemma}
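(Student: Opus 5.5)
We compute $\Y_t-\E\Y_t$ explicitly and exploit the fact that only the first block row is random. Write $\Z_t=\V^\top\Pib_t\V$, $\bar\Z=\Lambdab$, and set $\D_t:=\Z_t-\Lambdab$ and $\D_{t-1}:=\Z_{t-1}-\Lambdab$. From the form of $\Y_t$ in the appendix,
\begin{align*}
\Y_t-\E\Y_t=\begin{bmatrix} -(1+\beta\omega)\D_t & \beta\omega\,\D_{t-1}\\ \zero & \zero\end{bmatrix},
\end{align*}
so that
\begin{align*}
(\Y_t-\E\Y_t)^\top(\Y_t-\E\Y_t)=\begin{bmatrix} (1+\beta\omega)^2\D_t^2 & -(1+\beta\omega)\beta\omega\,\D_t\D_{t-1}\\ -(1+\beta\omega)\beta\omega\,\D_{t-1}\D_t & \beta^2\omega^2\D_{t-1}^2\end{bmatrix}.
\end{align*}

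\textbf{Step 1: remove the cross terms.} We have two options. If $\Pib_t$ and $\Pib_{t-1}$ are independent with mean $\bar\Pib$, the off-diagonal blocks have zero expectation, since $\E[\D_t\D_{t-1}]=\E[\D_t]\,\E[\D_{t-1}]=\zero$. To be safe, and to avoid relying on the exact indexing convention, we can also use the elementary psd inequality $\begin{bmatrix}\A^\top\\ \B^\top\end{bmatrix}\begin{bmatrix}\A & \B\end{bmatrix}\preceq 2\,\diag(\A^\top\A,\B^\top\B)$. This gives
\begin{align*}
(\Y_t-\E\Y_t)^\top(\Y_t-\E\Y_t)\preceq 2\,\diag\big((1+\beta\omega)^2\D_t^2,\ \beta^2\omega^2\D_{t-1}^2\big),
\end{align*}
with no independence assumption needed. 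Either route yields a block-diagonal upper bound.

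\textbf{Step 2: take expectations and apply the variance bound.} Using \eqref{e:variance}, $\E\D_t^2\preceq\frac1m\Lambdab(\I-\Lambdab)$, and likewise for $\D_{t-1}$. We also have $(1+\beta\omega)^2\le 4$ and $\beta^2\omega^2\le\omega^2$, since $\beta,\omega\in[0,1]$.

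\textbf{Step 3: absorb the constants.} Under the independence route the factor is exactly $4$ on the first block and $\omega^2$ on the second, which matches the claimed constant $4$. Under the safe route we get $8$ and $2\omega^2$. To reach the stated constant $4$ we therefore need the zero-mean cross terms; I would argue independence of consecutive $\Pib$'s, which holds by Definition \ref{d:stochastic-contraction} since the $\mathcal A_t$ are independent. Either route gives $\E[(\Y_t-\E\Y_t)^\top(\Y_t-\E\Y_t)]\preceq 4\,\diag\big(\frac1m\Lambdab(\I-\Lambdab),\frac{\omega^2}m\Lambdab(\I-\Lambdab)\big)$ up to that constant. Both blocks are diagonal.

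\textbf{Step 4: apply the permutation.} Conjugating by $\P$ from Lemma \ref{l:perm} interleaves the coordinates $(\v_i^\top\cdot\,,\v_i^\top\cdot)$. A diagonal matrix $\diag(\A,\B)$ with $\A,\B$ diagonal maps to $\diag(\W_1,\dots,\W_n)$, where $\W_i=\diag(A_{ii},B_{ii})$. Since congruence by the orthogonal matrix $\P$ preserves the Loewner order, the claim follows.

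\textbf{Main obstacle.} The cross terms are the delicate point, because $\D_t$ and $\D_{t-1}$ are generally not diagonal and do not commute. The argument hinges on the cross terms vanishing in expectation, which requires independence. Getting the precise constant also requires care with the time indexing of $\Pib_{t-1}$ inside $\Y_t$.
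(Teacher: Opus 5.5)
Your proposal is correct and follows essentially the same route as the paper: expand $(\Y_t-\E\Y_t)^\top(\Y_t-\E\Y_t)$, kill the off-diagonal blocks using independence of $\Pib_t$ and $\Pib_{t-1}$, apply the variance bound \eqref{e:variance} together with $(1+\beta\omega)^2\le 4$ and $\beta^2\omega^2\le\omega^2$, and then conjugate by $\P$. The factor-$2$ ``safe'' bound is not needed and, as you note, only gives the constant $8$ instead of $4$, so your final argument should rest on the independence step alone.
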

\begin{proof}
We have $\Z_t = \V^\top\Pib_t\V$, $\Z_{t-1} = \V^\top\Pib_{t-1}\V$ and $\bar\Z=\E\Z_t = \E\Z_{t-1}$. We have,
    \begin{align*}
        \Y_t = \begin{bmatrix}
            (1+\beta)\I - (1+\beta\omega)\Z_t &-\beta(\I-\omega\Z_{t-1})\\
            \I&\zero
        \end{bmatrix}, \ \ \E\Y_t =\begin{bmatrix}
            (1+\beta)\I - (1+\beta\omega)\bar\Z &-\beta(\I-\omega\bar\Z)\\
            \I&\zero
        \end{bmatrix},
    \end{align*}
    and,
    \begin{align*}
       \E[ (\Y_t-\E\Y_t)^\top(\Y_t-\E\Y_t)] &= \E\begin{bmatrix}
            (1+\beta\omega)\big(\bar\Z-\Z_t) & \zero\\
            -\omega\beta(\bar\Z-\Z_{t-1}) &\zero
        \end{bmatrix}\cdot\begin{bmatrix}
            (1+\beta\omega)\big(\bar\Z-\Z_t) & -\omega\beta(\bar\Z-\Z_{t-1})\\
            \zero &\zero
        \end{bmatrix}\\
        &=\E\begin{bmatrix}
             (1+\beta\omega)^2(\Z_t-\bar\Z)^2 & -\omega\beta(1+\beta\omega)(\Z_t-\bar\Z)(\Z_{t-1}-\bar\Z) \\
             -\omega\beta(1+\beta\omega)(\Z_{t-1}-\bar\Z)(\Z_t-\bar\Z) & \omega^2\beta^2(\Z_{t-1}-\bar\Z)^2
             \end{bmatrix}\\
        &=\begin{bmatrix}
            (1+\beta\omega)^2\cdot\E\big(\Z_t-\bar\Z\big)^2 &\zero\\
            \zero & \omega^2\beta^2\cdot\E(\Z_{t-1}-\bar\Z)^2
        \end{bmatrix}.
    \end{align*}
Using relation (\ref{e:variance}) we get,
\begin{align*}
     \E[ (\Y_t-\E\Y_t)^\top(\Y_t-\E\Y_t)] & \preceq \frac{4}{m}\cdot\begin{bmatrix}\Lambdab(\I-\Lambdab) &\zero\\
     \zero & \omega^2\Lambdab(\I-\Lambdab) \end{bmatrix}
\end{align*}
 The last step is to observe that permuting the rows and columns using permutation matrix $\P$ yields the result.
 \end{proof}
In the next result, we construct a recursive framework that will help us in upper bounding the spectral norm of the product of random matrices $\Y_t$. As already exploited in Lemma \ref{l:spec_radius_T} and  Lemma \ref{l:exp_mat}, it is convenient to do the analysis on the transformed matrices (by using permutation $\P$). Consider the following notations:
\begin{align*}
    \Sigmab_t = \prod_{j=0}^{t-1}{\Q\P\Y_{t-j}\P^\top\Q^\top},  \ \ \ \bar\Sigmab_t = \prod_{j=0}^{t-1}{\Q\P\E\Y_t\P^\top\Q^\top}, \ \ \Rb_t = \Q\P(\Y_t-\E\Y_t)\P^\top\Q^\top, \bar\Y = \Q\P\E\Y_t\P^\top\Q^\top,
\end{align*}
where $\P$ is the permutation matrix such that $\P\E\Y_t\P^\top = \diag(\T_1,\T_2,\cdots,\T_n)$.
We prove the following lemma:
\begin{lemma}\label{l:pt_recursion1}
For any $t\geq 1$, there exist $p_0,p_1,...,p_t$ such that $p_0=1$ and,
     \begin{align*}
     \big\| \E[\Sigmab_t^\top\Sigmab_t]\big\| \leq   p_t \leq \max_{1\leq i \leq r}\Big(\big\|(\T_i^\top)^t\T_i^t\big\| + \frac{4\lambda_i(1-\lambda_i)}{m}\cdot\sum_{j=0}^{t-1}{p_j\cdot\big\|(\T_i^\top)^{t-1-j}\T_i^{t-1-j}\big\|}\Big).
    \end{align*}
\end{lemma}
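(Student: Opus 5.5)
We argue by induction on $t$, setting $p_0=1$ (since $\Sigmab_0=\I$) and defining $p_t$ to be the right-hand side of the claimed inequality once $p_0,\dots,p_{t-1}$ are available. The starting point is the one-step decomposition $\Sigmab_t=(\bar\Y+\Rb_t)\Sigmab_{t-1}$. Here $\Rb_t$ has mean zero, but it is \emph{not} independent of $\Sigmab_{t-1}$, because $\Y_t$ involves $\Pib_{t-1}$, which also appears in $\Y_{t-1}$. The only nonzero entry of $\Rb_t$ that depends on $\Pib_{t-1}$ sits in the second block column, namely $-\omega\beta(\bar\Z-\Z_{t-1})$. We therefore first check whether the cross term $\E[\Sigmab_{t-1}^\top\bar\Y^\top\Rb_t\Sigmab_{t-1}]$ vanishes. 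Conditioning on $\Pib_{t-1},\Pib_{t-2},\dots$, the $\Z_t$ part has mean zero and disappears. The $\Z_{t-1}$ part does not disappear, and this is the delicate point. If it cannot be dropped, the fallback is to absorb it with a Cauchy--Schwarz/Young split $(\bar\Y+\Rb)^\top(\bar\Y+\Rb)\preceq(1+\eta)\bar\Y^\top\bar\Y+(1+1/\eta)\Rb^\top\Rb$, which only changes constants. In the main line we aim for
\[
\E[\Sigmab_t^\top\Sigmab_t]\preceq \E[\Sigmab_{t-1}^\top\bar\Y^\top\bar\Y\Sigmab_{t-1}]+\E[\Sigmab_{t-1}^\top\Rb_t^\top\Rb_t\Sigmab_{t-1}].
\]

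\textbf{Noise term.} We bound the second term by $p_{t-1}\,\E[\Rb_t^\top\Rb_t]$ in the sense of quadratic forms, using independence of $\Z_t$ from the past for the dominant block. By Lemma~\ref{l:second_moment} and the restriction via $\Q$, this is at most $p_{t-1}\cdot 4\,\diag(\W_1,\dots,\W_r)$. Since $\omega\le 1$, we have $\W_i\preceq\frac{\lambda_i(1-\lambda_i)}{m}\I_2$, so the noise term is $\preceq p_{t-1}\cdot\diag(q_1\I_2,\dots,q_r\I_2)$ with $q_i=4\lambda_i(1-\lambda_i)/m$.

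\textbf{Unrolling.} Next we unroll the mean part, following the recursion \eqref{e:pib_t_recursion_0}. Write $\E[\Sigmab_{t-1}^\top\bar\Y^\top\bar\Y\Sigmab_{t-1}]$ and peel off one more factor $\Sigmab_{t-1}=(\bar\Y+\Rb_{t-1})\Sigmab_{t-2}$. At each level $k$ the same argument produces a mean part and a noise contribution. The noise contribution has the form $\E[\Sigmab_{k-1}^\top\Rb_k^\top(\bar\Y^{t-k})^\top\bar\Y^{t-k}\Rb_k\Sigmab_{k-1}]$. The matrix $\bar\Y=\diag(\T_1,\dots,\T_r)$ is block diagonal, so $(\bar\Y^{s})^\top\bar\Y^{s}=\diag\big((\T_i^\top)^s\T_i^s\big)\preceq\diag\big(\|(\T_i^\top)^s\T_i^s\|\I_2\big)$. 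Because $\E[\Rb_k^\top\Rb_k]$ is also block diagonal, its $i$-th block is weighted by $\|(\T_i^\top)^{t-k}\T_i^{t-k}\|$. Using $\E[\Rb_k^\top\D\Rb_k]\preceq\|\D\|$-blockwise bounds (the noise in block $i$ only touches coordinates of block $i$, since $\Z_t-\bar\Z$ expressed in the $\V$ basis is controlled through \eqref{e:variance}), each level yields a term of the form $p_{k-1}\,q_i\,\|(\T_i^\top)^{t-k}\T_i^{t-k}\|$ in the $i$-th block. The fully deterministic leftover is $\diag\big((\T_i^\top)^t\T_i^t\big)$.

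Summing over $k=1,\dots,t$ with $j=k-1$ gives a block-diagonal upper bound whose $i$-th block has norm at most $\|(\T_i^\top)^t\T_i^t\|+q_i\sum_{j=0}^{t-1}p_j\|(\T_i^\top)^{t-1-j}\T_i^{t-1-j}\|$. Taking the maximum over $i$ gives the claim.

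\textbf{Main obstacle.} The hard step is justifying the blockwise passage $\E[\Rb_k^\top\D\Rb_k]\preceq$ block-diagonal with block weights $\|\D_i\|$. The matrix $\Z_t-\bar\Z$ is generally not diagonal in the $\V$ basis, so noise couples eigendirections. We would first try to use only the scalar bound $\D\preceq\|\D\|\I$ together with a careful ordering in the recursion, so that the weights $\|(\T_i^\top)^{s}\T_i^{s}\|$ appear before the noise operator is applied. Failing that, we would accept a maximum over $i$ in the weight. Handling the $\Z_{t-1}$ correlation noted in the first step is the secondary obstacle.
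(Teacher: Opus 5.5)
There is a genuine gap. You expand in the true order, $\Sigmab_t=(\bar\Y+\Rb_t)\Sigmab_{t-1}$. In that order the random matrix $\Sigmab_{k-1}$ sits on the \emph{outside} of every noise term $\E[\Sigmab_{k-1}^\top\Rb_k^\top(\bar\Y^{t-k})^\top\bar\Y^{t-k}\Rb_k\Sigmab_{k-1}]$, and the argument breaks there, before the step you flag.

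Your noise step bounds $\E[\Sigmab_{t-1}^\top\Rb_t^\top\Rb_t\Sigmab_{t-1}]$ by $p_{t-1}\E[\Rb_t^\top\Rb_t]$ in the Loewner order, and this is invalid. From $\|\E[\Sigmab_{t-1}^\top\Sigmab_{t-1}]\|\le p_{t-1}$ you only get $\E[\Sigmab_{t-1}^\top\B\Sigmab_{t-1}]\preceq\|\B\|\,\E[\Sigmab_{t-1}^\top\Sigmab_{t-1}]$, not $\preceq p_{t-1}\B$; take $\B=e_1e_1^\top$ and $\Sigmab_{t-1}$ the swap of $e_1,e_2$.

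Since $\Sigmab_{k-1}$ is not block diagonal (each $\Rb_k$ contains $\Z_k-\bar\Z$, which couples eigendirections), the level-$k$ terms do not add up to a block-diagonal majorant. All you can extract is a bound with the maximum over $i$ taken separately in each summand. That is your fallback, but it is strictly weaker than the lemma, and the coupling matters downstream. Lemma~\ref{l:rho_bound} absorbs the $O(1/m)$ noise term of a complex-regime eigenvalue into the margin $\frac{1}{2\phi}$ of \emph{its own} $|\gamma_{i1}|$. After decoupling, that term would have to fit into the margin $O(\phi\lambda_r)$ of $\max_i|\gamma_{i1}|$. In general this only works for $m\gtrsim\sqrt\kappa$, so it loses the perfect-parallelization regime.

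The inner blockwise passage you worry about is also false in general. For $\Z-\bar\Z=\epsilon s\left[\begin{smallmatrix}0&1\\1&0\end{smallmatrix}\right]$ with $s=\pm1$, one has $(\Z-\bar\Z)\diag(a_1,a_2)(\Z-\bar\Z)=\epsilon^2\diag(a_2,a_1)$. So the noise can only be controlled through $\E(\Z-\bar\Z)^2$, i.e., with a scalar weight in the middle.

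The missing idea is to order the recursion so that the \emph{deterministic} matrix sits in the middle. With $\Y'_t:=\Q\P\Y_t\P^\top\Q^\top=\bar\Y+\Rb_t$, the paper writes $\E[\Sigmab_t^\top\Sigmab_t]=\E\bigl[(\Y'_t)^\top\,\E[\Sigmab_{t-1}^\top\Sigmab_{t-1}]\,\Y'_t\bigr]$. The noise term is then $\E\bigl[\Rb_t^\top\E[\Sigmab_{t-1}^\top\Sigmab_{t-1}]\Rb_t\bigr]\preceq p_{t-1}\E[\Rb_t^\top\Rb_t]\preceq p_{t-1}\D_1$. This uses only the plain second moment, which Lemma~\ref{l:second_moment} majorizes by a diagonal matrix, so mixing of eigendirections never enters. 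Subtracting $\bar\Sigmab_t^\top\bar\Sigmab_t$ gives $\M_t\preceq p_{t-1}\D_1+\bar\Y^\top\M_{t-1}\bar\Y$. Because $\bar\Y$ is deterministic and block diagonal, the majorant stays block diagonal with blocks $\D_{t,i}=\sum_{j=0}^{t-1}p_j(\T_i^\top)^{t-1-j}\D_{1,i}\T_i^{t-1-j}$. Norms and the maximum over $i$ are taken only at the end, which keeps $|\gamma_{i1}|$ and $q_i$ paired.

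Note, however, that this identity is the one-step expansion of the \emph{reversed} product $\Y'_1\cdots\Y'_t$. That product has the same law as $\Sigmab_t$ only when the factors are i.i.d., i.e.\ for $\omega=0$. Your secondary concern is therefore well founded: for $\omega>0$ consecutive factors share $\Z_{t-1}$, and the paper's proof applies the factorization without addressing this. For NAG an extra argument is needed. For instance, carrying $(\I-\omega\Pib_{t-1})\Deltab_{t-1}$ in the state instead of $\Deltab_{t-1}$ makes each factor depend on $\Pib_t$ alone, at the price of diagonally rescaling the blocks $\T_i$. Your Young-inequality patch would not supply this cheaply, since a factor $1+\eta$ on the mean part at every step compounds to $(1+\eta)^t$.
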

\begin{proof}
We provide a proof by doing induction over $t$. Note that $t=0$ forms the base case, and as $\Sigmab_0 =\I$, the base case hold for $p_0=1$. Now for $k <t$, let $ \big\| \E[\Sigmab_k^\top\Sigmab_k]\big\| \leq p_k$, for $p_k$ satisfying the the inequality in statement of Lemma \ref{l:pt_recursion1}, and let $\M_k=\E[\Sigmab_k^\top\Sigmab_k] - \bar\Sigmab_k^\top\bar\Sigmab_k $. For $\Sigmab_t$ we have,
\begin{align*}
    \E[\Sigmab_t^\top\Sigmab_t] &= \E\big[\Q\P\Y_t^\top\P^\top\Q^\top\E[\Sigmab_{t-1}^\top\Sigmab_{t-1}]\Q\P\Y_t\P^\top\Q^\top\big]\\
    &=  \E\big[\Q\P\E\Y_t^\top\P^\top\Q^\top\E[\Sigmab_{t-1}^\top\Sigmab_{t-1}]\Q\P\E\Y_t\P^\top\Q^\top\big] + \E\big[\Rb_t^\top\E[\Sigmab_{t-1}^\top\Sigmab_{t-1}]\Rb_t\big]\\
    &= \E\big[\bar\Y^\top\E[\Sigmab_{t-1}^\top\Sigmab_{t-1}]\bar\Y\big] + \E\big[\Rb_t^\top\E[\Sigmab_{t-1}^\top\Sigmab_{t-1}]\Rb_t\big]\\
    &= \bar\Sigmab_t^\top\bar\Sigmab_t + \bar\Y^\top\big(\E[\Sigmab_{t-1}^\top\Sigmab_{t-1}]-\bar\Sigmab_{t-1}^\top\bar\Sigmab_{t-1}\big)\bar\Y + \E\big[\Rb_t^\top\E[\Sigmab_{t-1}^\top\Sigmab_{t-1}]\Rb_t\big]
\end{align*}
Rearranging the terms, we get,
\begin{align*}
    \underbrace{\E[\Sigmab_t^\top\Sigmab_t] - \bar\Sigmab_t^\top\bar\Sigmab_t}_{\M_t} = \E\big[\Rb_t^\top\E[\Sigmab_{t-1}^\top\Sigmab_{t-1}^\top]\Rb_t\big] +\bar\Y^\top\big(\E[\Sigmab_{t-1}^\top\Sigmab_{t-1}]-\bar\Sigmab_{t-1}^\top\bar\Sigmab_{t-1}\big)\bar\Y.
\end{align*}
By induction hypothesis $\big\|\E[\Sigmab_{t-1}^\top\Sigmab_{t-1}]\big\| \leq p_{t-1}$, we get,
\begin{align}
    \M_{t} &\preceq p_{t-1}\cdot\E[\Rb_t^\top\Rb_t] + \bar\Y^\top\M_{t-1}\bar\Y, \label{e:m_t_recursion} \\
    \E[\Sigmab_t^\top\Sigmab_t] & \preceq \bar\Sigmab_t^\top\bar\Sigmab_t + p_{t-1}\cdot\E[\Rb_t^\top\Rb_t] + \bar\Y^\top\M_{t-1}\bar\Y,\label{e:pib_t_recursion}
\end{align}
Note that $\E[\Rb_t^\top\Rb_t] = \M_1$. Using Lemma \ref{l:second_moment}, it follows that $\M_1$ is upper bounded by a block-diagonal matrix. Let us define $\D_1 := 4\cdot\diag(\W_1,\W_2,\cdots\W_r)$ where $\W_i's$ are as given in Lemma~\ref{l:second_moment}, and for $k>1$, $\D_k := p_{k-1}\cdot\D_1 + \bar\Y^\top\D_{k-1}\bar\Y$. Note that $\D_1$ is block-diagonal and $\M_1 \preceq \D_1$. Let's proceed inductively and assume that $\D_{k-1}$ is block-diagonal and $\M_{k-1} \preceq \D_{k-1}$. By definition, $\D_k$ is block-diagonal as $\bar\Y$ is block-diagonal and by induction hypothesis $\D_{k-1}$ is block-diagonal. Furthermore, by (\ref{e:m_t_recursion}), $\M_{k} \preceq \D_k$ as $\M_1 \preceq \D_1$ and $\M_{k-1} \preceq \D_{k-1}$ by induction hypothesis. Therefore, by induction, $\forall k, \ \M_{k} \preceq \D_k$, and $\D_k's$ are block-diagonal satisfying $\D_k = p_{k-1}\cdot\D_1 + \bar\Y^\top\D_{k-1}\bar\Y$.
We now write the recursive expression for diagonal blocks in $\D_t$. Let the $i^{th}$ diagonal block of $\bar\Y$ be $\T_i$, and $i^{th}$ diagonal block of $\D_t$ be $\D_{t,i}$. This yields,
\begin{align*}
     \D_{t,i} = p_{t-1}\cdot\D_{1,i}+ \T_i^\top\D_{t-1,i}\T_i.
\end{align*}
Unfolding the above recursion, we get,
\begin{align*}
    \D_{t,i} &=  p_{t-1}\cdot\D_{1,i}  + \T_i^\top(p_{t-2}\cdot\D_{1,i}  + \T_i^\top\D_{t-2,i}\T_i)\T_i\\
    &=p_{t-1}\cdot\D_{1,i} +p_{t-2}\cdot\T_i^\top\D_{1,i}\T_i + (\T_i^\top)^2\D_{t-2,i}\T_i^2.
\end{align*}
Continuing the recursion, we get
\begin{align*}
    \D_{t,i} &= \sum_{j=0}^{t-1}{p_j\cdot(\T_i^\top)^{t-1-j}\D_{1,i}\T_i^{t-1-j}}, \text{or}\\
    \D_{t-1,i} &= \sum_{j=0}^{t-2}{p_j\cdot(\T_i^\top)^{t-2-j}\D_{1,i}\T_i^{t-2-j}}.
\end{align*}
Substituting in (\ref{e:pib_t_recursion}), we bound $\big\|\E\Sigmab_t^\top\Sigmab_t\big\|$ as 
\begin{align*}
    \big\|\E\Sigmab_t^\top\Sigmab_t\big\| &\preceq \max_{1\leq i \leq r}\Big\|(\T_i^\top)^t\T_i^t + p_{t-1}\cdot\D_{1,i} +  \sum_{j=0}^{t-2}{p_j\cdot(\T_i^\top)^{t-1-j}\D_{1,i}\T_i^{t-1-j}}\Big\|\\
    &\preceq \max_{1\leq i \leq r}\Big\|(\T_i^\top)^t\T_i^t + \sum_{j=0}^{t-1}{p_j\cdot(\T_i^\top)^{t-1-j}\D_{1,i}\T_i^{t-1-j}}\Big\|.
\end{align*}
Using triangle inequality and norm submultiplicativity, we get,
\begin{align*}
    \big\|\E\Sigmab_t^\top\Sigmab_t\big\| \leq p_t = \max_{1\leq i \leq r}\Big(\big\|(\T_i^\top)^t\T_i^t\big\| + \|\D_{1,i}\|\cdot\sum_{j=0}^{t-1}{p_j\cdot\big\|(\T_i^\top)^{t-1-j}\T_i^{t-1-j}}\big\|\Big).
\end{align*}
The proof follows by substituting for $\|\D_{1,i}\|$ using Lemma \ref{l:second_moment}.
\end{proof}
Now the task is to upper bound the recursive expression for $p_t$ in Lemma \ref{l:pt_recursion1}. In the next lemma, we explicitly bound the spectral norm of $(\T_i^\top)^k\T_i^k$ for any $i$ and any $ k\geq 1$.
\begin{lemma}\label{l:tct_explicit_bound}
For any $k\geq 0$,
\begin{align*}
    \big\|(\T_i^\top)^k\T_i^k\big\| &\leq 11\cdot|\gamma_{i1}|^{2k}\cdot h_i^2(k).
\end{align*}
where $h_i(k) = \min\big\{\frac{k}{|\gamma_{i1}|}, \frac{2}{|\gamma_{i1}-\gamma_{i2}|} \big\}$ if $\gamma_{i1} \ne \gamma_{i2}$, and $h_i(k) = \frac{k}{|\gamma_{i1}|}$ if $\gamma_{i1} = \gamma_{i2}$.
\end{lemma}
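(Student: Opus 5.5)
The plan is to pass to the Schur form of Lemma \ref{l:schur}. Since $\U_i$ is unitary,
\begin{align*}
\big\|(\T_i^\top)^k\T_i^k\big\| = \|\T_i^k\|^2 = \|\U_i^H\T_i^k\U_i\|^2 .
\end{align*}
Here we use that $\T_i$ is real, so $\T_i^\top = \T_i^H$ and $\|\A^H\A\| = \|\A\|^2$. By Lemma \ref{l:schur}, the matrix $\U_i^H\T_i^k\U_i$ is upper triangular. Its diagonal entries are $\gamma_{i1}^k$ and $\gamma_{i2}^k$, and its off-diagonal entry is $s_k := x_i\sum_{j=0}^{k-1}\gamma_{i2}^j\gamma_{i1}^{k-1-j}$. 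We bound the spectral norm by the Frobenius norm:
\begin{align*}
\|\T_i^k\|^2 \le |\gamma_{i1}|^{2k} + |\gamma_{i2}|^{2k} + |s_k|^2 \le 2|\gamma_{i1}|^{2k} + |s_k|^2,
\end{align*}
using $|\gamma_{i2}|\le|\gamma_{i1}|$.

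Next we bound $|s_k|$ in two ways.
\begin{itemize}
\item \emph{Trivial bound.} Each term of the sum has modulus at most $|\gamma_{i1}|^{k-1}$, and $|x_i|\le 3$. Hence $|s_k| \le 3k|\gamma_{i1}|^{k-1} = 3|\gamma_{i1}|^k\cdot \frac{k}{|\gamma_{i1}|}$. This holds always, and in particular covers the equal-eigenvalue case.
\item \emph{Geometric-series bound.} When $\gamma_{i1}\ne\gamma_{i2}$, write $s_k = x_i(\gamma_{i1}^k-\gamma_{i2}^k)/(\gamma_{i1}-\gamma_{i2})$. Then $|s_k| \le 3\cdot 2|\gamma_{i1}|^k/|\gamma_{i1}-\gamma_{i2}|$. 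The previous corollaries contain essentially this.
\end{itemize}
Together these give $|s_k| \le 3|\gamma_{i1}|^k h_i(k)$, so $|s_k|^2 \le 9|\gamma_{i1}|^{2k}h_i(k)^2$.

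It remains to absorb the diagonal contribution $2|\gamma_{i1}|^{2k}$ into $2|\gamma_{i1}|^{2k}h_i(k)^2$, which requires $h_i(k)\ge 1$. Combined with the above, that gives the constant $2+9=11$. For the first branch, $k/|\gamma_{i1}|\ge 1$ for $k\ge1$, since $|\gamma_{i1}|\le 1$ ($\T_i$ has spectral radius at most 1 by Lemma \ref{l:spec_radius_T}). For the second branch, we need $|\gamma_{i1}-\gamma_{i2}|\le 2$, which follows from both moduli being at most $1$.

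The main obstacle is the edge case $k=0$. There the left side is $\|\I\| = 1$ while $h_i(0)=0$, so the stated inequality fails literally. I would treat it by convention, interpreting $h_i$ as at least $1$ (equivalently, stating the bound for $k\ge1$ and noting that the $k=0$ term is $1$ and is handled separately wherever the lemma is used). A secondary check is the precise value of $|x_i|$ from the explicit unitary $\U_i$. I will simply invoke the stated bound $|x_i|\le 3$ from Lemma \ref{l:schur} rather than recompute it.
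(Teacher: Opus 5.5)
Your proposal is correct and follows the paper's proof step for step. Both pass to the Schur form of Lemma \ref{l:schur}, bound the spectral norm by the Frobenius norm, and control the off-diagonal entry by the trivial bound and the geometric-series bound with $|x_i|\le 3$; the diagonal terms are then absorbed via $h_i(k)\ge 1$ to get $2+9=11$. You are also right about $k=0$: the paper's claim that $h_i(k)\ge 1$ always holds fails there, since the left side is $1$ while $h_i(0)=0$. Your convention $h_i(0):=1$ is the harmless fix needed where the lemma feeds into $\ell_i(t)$ for the $j=t$ term.
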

\begin{proof}
We use Lemma \ref{l:schur} to upper bound $\|\T_i^k\|_F^2$. Note that,
    \begin{align*}
        \big\|(\T_i^\top)^k\T_i^k\big\|
        &\leq \|\T_i^k\|_F^2\\
        &=\Big(|\gamma_{i1}|^{2k} + |x_i|^2|g_i(k)|^2 + |\gamma_{i2}|^{2k}\Big),
    \end{align*}
    where $g_i(k) = \sum_{j=0}^{k-1}{\gamma_{i1}^j\gamma_{i2}^{k-1-j}}$. We have, $|g_i(k)| = |\gamma_{i1}|^{k}\cdot\min\big\{\frac{k}{|\gamma_{i1}|}, \frac{|1-(\gamma_{i2}/\gamma_{i1})^k|}{|\gamma_{i1}-\gamma_{i2}|} \big\} \leq |\gamma_{i1}|^k\cdot h_i(k)$. Note that $h_i(k)$ is always larger than $1$. Moreover, as
    $|x_i| \leq 3$, we get,
    \begin{align*}
        \big\|(\T_i^\top)^k\T_i^k\big\| &\leq 11\cdot|\gamma_{i1}|^{2k}\cdot h_i^2(k).
    \end{align*}
\end{proof}
Now suppose $t>0$ and our aim is to upper bound $p_{t+1}$. In the following analysis, we construct an upper bound for $p_{t+1}$, by defining $f_{t+1}$, which itself has a recursive definition, but a simpler one compared to $p_{t+1}$. We will use Lemma \ref{l:pt_recursion1} and Lemma \ref{l:tct_explicit_bound} to build this recursion. For convenience, consider the following notations: 
\begin{align}
\alpha_i(t) := \big\|(\T_i^\top)^t\T_i^t\big\|, \ \  q_i :=\frac{4\lambda_i(1-\lambda_i)}{m},  \ \ \ell_i(t) := \max_{0\leq k\leq t}\Big\{11\cdot|\gamma_{i1}|^k\cdot h_i^2(k)\Big\}, \label{e:ell_i_def}
\end{align}
where $h_i(k)$ is defined in Lemma \ref{l:tct_explicit_bound}.
We have $p_0=1$, and 
\begin{align*}
    p_{t+1} &= \max_{1\leq i \leq r}\Big\{\big\|(\T_i^\top)^{t+1}\T_i^{t+1}\big\| + q_i\cdot\sum_{j=0}^{t}{p_j\cdot\big\|(\T_i^\top)^{t-j}\T_i^{t-j}\big\|}\Big\}\\
    &< \max_{1\leq i \leq r}\Big\{\alpha_i(t+1) +q_i\cdot\sum_{j=0}^{t}{p_j\cdot|\gamma_{i1}|^{t-j}\cdot\ell_i(t-j)}\Big\}\\
    &\leq  \max_{1\leq i \leq r}\Big\{\alpha_i(t+1) +q_i\cdot\ell_i(t)\cdot\sum_{j=0}^{t}{p_j\cdot|\gamma_{i1}|^{t-j}}\Big\}.
\end{align*}
Informed by the above relations, we construct upper bounds for $p_k$ for $0\leq k\leq t+1$, denoted by $f_k$ and defined as follows:
\begin{align*}
    &f_0=1,\\
    &f_{k+1} = \max_{1\leq i \leq r}\Big(\alpha_{i}(k+1) +q_i\cdot\ell_i(t)\cdot\sum_{j=0}^{k}{f_j\cdot|\gamma_{i1}|^{k-j}}\Big) \ \ \text{for $0\leq k \leq t$.}
\end{align*}
Note that $p_{k+1} \leq f_{k+1}$ for all $0 \leq k \leq t $. For any fixed $k$ such that $0\leq k\leq t$,
\begin{align*}
    f_{k+1} &= \max_{1\leq i \leq r}\Big(\alpha_i(k+1) + q_i\cdot\ell_i(t)\cdot\sum_{j=0}^{k}{f_j\cdot|\gamma_{i1}|^{k-j}}\Big)\\
    & = \max_{1\leq i \leq r}\Big(\alpha_i(k+1) + q_i\cdot\ell_i(t)\cdot f_k + |\gamma_{i1}|\cdot \underbrace{q_i\cdot\ell_i(t)\cdot\sum_{j=0}^{k-1}{f_j\cdot|\gamma_{i1}|^{k-1-j}}}_{*}\Big)
\end{align*}
Now note, irrespective of the index $i$, the term marked as $*$ is always upper bounded by $f_k$. Therefore,
\begin{align*}
    f_{k+1} &< \max_{1\leq i\leq r}\Big(\alpha_i(k+1) + \big(|\gamma_{i1}| + q_i \cdot\ell_i(t)\big)\cdot f_k\Big)\\
    &\leq \max_{1\leq i \leq r}\big(|\gamma_{i1}| + q_i \cdot\ell_i(t)\big)\cdot f_k + \max_{1\leq i \leq r}\alpha_i(k+1)\\
    &\leq \rho\cdot f_k + \max_{1\leq i \leq r} \alpha_i(k+1).
\end{align*}
where we let $\rho:=\max_{1\leq i\leq r}\big(|\gamma_{i1}| + q_i \cdot\ell_i(t)\big)$. Now, using the above recursive relation we can upper bound $f_{t+1}$ as,
\begin{align}
    f_{t+1} &\leq \rho\cdot f_t + \max_{1\leq i\leq r}\alpha_i(t+1) \nonumber\\
    &\leq \rho^2\cdot f_{t-1} + \rho\cdot\max_{1 \leq i \leq r}\alpha_i(t) + \max_{1\leq i\leq r}\alpha_i(t+1) \nonumber\\
    &\leq \rho^{t+1} + \sum_{j=0}^{t}{\rho^{j}\cdot\max_{1\leq i \leq r}\alpha_i(t+1-j)}. \label{e:gt_bound}
\end{align}
We need to bound $\rho$. In the following lemma, we provide a bound on the term $|\gamma_{i1}| + q_i\cdot\ell_i(t)$ for any $i$. For simplicity, we provide the proof for $\omega \in \{0,1\}$ (which covers both HBM and NAG). The proof for $\omega \in (0,1)$ is completely analogous and follows by using a more general version of Lemma \ref{l:helper}. 
\begin{lemma}\label{l:rho_bound}
   Let $2\leq \phi \leq \min\big\{\frac{m}{C}, \frac{1}{3\sqrt{\lambda_r}}\big\}$ for $C> 76800$ and $\omega \in [0,1]$. Then for any $1\leq i\leq r$,
   \begin{align*}
        |\gamma_{i1}| +r\cdot \ell_i(t) &\leq 1-\frac{\phi\lambda_r}{4}.
    \end{align*}
\end{lemma}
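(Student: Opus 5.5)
The plan is to bound the two summands of $|\gamma_{i1}| + r\cdot\ell_i(t)$ separately, by splitting into the three eigenvalue regimes of $\T_i$ from Lemma~\ref{l:spec_radius_T}, and to show their sum stays below $1-\frac{\phi\lambda_r}{4}$. One point has to be settled first. Taken literally, with $r=\rank(\bar\Pib)\ge 1$ as the multiplier, the inequality cannot hold uniformly. From \eqref{e:ell_i_def} we get $\ell_i(t)\ge 11|\gamma_{i1}|h_i^2(1)$. Here $h_i(1)=\min\{1/|\gamma_{i1}|,\,2/|\gamma_{i1}-\gamma_{i2}|\}\ge 1$, because $|\gamma_{i1}|\le 1$ and $|\gamma_{i1}-\gamma_{i2}|\le 2$. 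Hence the left-hand side is at least $12|\gamma_{i1}|$. By Lemma~\ref{l:spec_radius_T}, $|\gamma_{i1}|^2\approx \beta(1-\omega\lambda_i)$ is close to $1$, so this exceeds $1$.

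The only way the bound can be used is in defining $\rho=\max_i(|\gamma_{i1}|+q_i\ell_i(t))$ in \eqref{e:gt_bound}. So I will prove the estimate with the multiplier of $\ell_i(t)$ equal to the variance weight, of size $O(\lambda_i(1-\lambda_i)/m)$. This is exactly what Lemma~\ref{l:pt_recursion1} supplies. I will flag the printed $r$ as needing to be read this way.

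\textbf{Step 1: bounding $\ell_i(t)$.} I bound $\ell_i(t)$ uniformly in $t$ using the two branches of $h_i$.
\begin{itemize}
\item With the branch $h_i(k)\le k/|\gamma_{i1}|$, I get $\ell_i\le 11\max_k k^2|\gamma_{i1}|^{k-2}\lesssim (1-|\gamma_{i1}|)^{-2}$.
\item With the branch $h_i(k)\le 2/|\gamma_{i1}-\gamma_{i2}|$, I get $\ell_i\lesssim |\gamma_{i1}-\gamma_{i2}|^{-2}$.
\end{itemize}
I then use $1-|\gamma_{i1}|\gtrsim 1/\phi$ in the complex and repeated-root cases, which follows from $|\gamma_{i1}|^2=(1-\frac1\phi)(1-\omega\lambda_i)$. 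In the real-distinct case I use $1-\gamma_{i1}\gtrsim \phi\lambda_i$.

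\textbf{Step 2: case analysis on $\delta_i=\lambda_i-\frac{(1-\beta)^2}{(1+\beta)^2}$.}
\begin{itemize}
\item \emph{$\delta_i$ clearly positive (complex eigenvalues).} Here $\ell_i\lesssim\phi^2$, so the noise term is at most $\lambda_i\phi^2/m$. This is handled with $\lambda_i\le 1$ together with $\phi\le m/C$; I also use $\lambda_i\phi^2\lesssim\phi$, treating large $\lambda_i$ through the factor $(1-\lambda_i)$. The result is the bound $1-\frac1{32\phi}\le 1-\frac{\phi\lambda_r}{4}$, where the last inequality uses $\phi\le \frac{1}{3\sqrt{\lambda_r}}$.
\item \emph{$\delta_i$ near zero.} Here $\lambda_i\approx 1/(4\phi^2)$. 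This small $\lambda_i$ compensates the $\ell_i\lesssim\phi^2$ blow-up, so $q_i\ell_i=O(1/m)=O(1/(C\phi))$.
\item \emph{$\delta_i$ clearly negative (real eigenvalues).} By Lemma~\ref{l:spec_radius_T}, $|\gamma_{i1}|\le 1-\frac{\phi\lambda_i}{2}$. Also $|\gamma_{i1}-\gamma_{i2}|^2\gtrsim\lambda_i$-scale, so $q_i\ell_i\lesssim\lambda_i/(m\lambda_i)\cdot\phi^{\,}$-type terms, which are $\le\frac{\phi\lambda_i}{4}$ once $C$ is large. Finally, $\lambda_i\ge\lambda_r$ gives the claim.
\end{itemize}

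\textbf{Main obstacle.} The hard part is the near-degenerate regime, where the two branches of $h_i$ must be balanced. I would first try to interpolate $\ell_i\lesssim\min\{(1-|\gamma_{i1}|)^{-2},|\gamma_{i1}-\gamma_{i2}|^{-2}\}$ against the exact value of $\lambda_i$. If that fails, I would split $\delta_i$ at a threshold of order $1/(C'\phi^2)$ and choose $C>76800$ accordingly.
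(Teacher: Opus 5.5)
\paragraph{Verdict: genuine gap.} You are right to read the multiplier $r$ as the variance weight $q_i=4\lambda_i(1-\lambda_i)/m$; the paper's own proof alternates between $r$ and $q$ for the same quantity. Your skeleton is also the paper's: split by the sign and size of $\lambda_i-\lambda_\star$, where $\lambda_\star=\frac{(1-\beta)^2}{(1+\beta)^2}=\frac{1}{(2\phi-1)^2}$, use the two branches of $h_i$, and finish with $1-\frac{c}{\phi}\le 1-\frac{\phi\lambda_r}{4}$ via $\phi\le\frac{1}{3\sqrt{\lambda_r}}$. However, the regime you call the main obstacle is the easy one. Near $\lambda_\star$ one has $\lambda_i\asymp\phi^{-2}$ and $1-|\gamma_{i1}|\gtrsim\phi^{-1}$, so the crude bound $\ell_i\lesssim\phi^2$ already gives $q_i\ell_i=O(1/m)$. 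The actual gaps are in the two regimes far from $\lambda_\star$, and your estimates fail in both.

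\emph{Complex side, $\lambda_i\gg\lambda_\star$.} The slack between $|\gamma_{i1}|$ and the target is only of order $1/\phi$; for HBM, $|\gamma_{i1}|^2=1-\frac1\phi$ no matter how large $\lambda_i$ is. The bound $\ell_i\lesssim\phi^2$ only yields $q_i\ell_i\lesssim\lambda_i\phi^2/m\le\lambda_i\phi/C$. This exceeds that slack when, e.g., $\lambda_i=\frac12$ and $\phi\gg\sqrt C$. Your claim $\lambda_i\phi^2\lesssim\phi$ is false once $\lambda_i\gg 1/\phi$, and the factor $1-\lambda_i$ only helps near $\lambda_i=1$. Here you must use the gap branch, because the eigengap is large. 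For NAG, exactly, $|\gamma_{i1}-\gamma_{i2}|^2=(1-\lambda_i)(1+\beta)^2|\lambda_i-\lambda_\star|$. So for $\lambda_i\ge 2\lambda_\star$ you get $\ell_i\lesssim\frac{1}{\lambda_i(1-\lambda_i)}$. The factor $\lambda_i(1-\lambda_i)$ in $q_i$ then cancels, and $q_i\ell_i=O(1/m)=O(1/(C\phi))$; this is the paper's Subcase~1 of Case~1.

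\emph{Real side, $\lambda_i\ll\lambda_\star$.} This regime contains $\lambda_r$ whenever $m\ll\sqrt\kappa$, so it sets the rate. The slack is only $\frac{\phi\lambda_i}{4}$, so you need $q_i\ell_i$ to be linear in $\lambda_i$ with coefficient $O(\phi)$. Your estimate $|\gamma_{i1}-\gamma_{i2}|^2\gtrsim\lambda_i$ loses exactly this factor: it gives $\ell_i\lesssim 1/\lambda_i$ and hence only $q_i\ell_i\lesssim 1/m$. Your expression $\lambda_i/(m\lambda_i)\cdot\phi$ has the same defect. A bound of order $1/m$ is not below $\frac{\phi\lambda_i}{4}$ once $\lambda_i\ll\frac{1}{m\phi}$. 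The hypotheses allow this, since they only bound $\lambda_r$ from above. The $k$-branch with $1-\gamma_{i1}\gtrsim\phi\lambda_i$ is worse still, giving $\ell_i\lesssim(\phi\lambda_i)^{-2}$.

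The missing fact is that here the gap is of order $1/\phi$ independently of $\lambda_i$, since the two roots tend to $1$ and $\beta$ as $\lambda_i\to 0$. Concretely, for $\lambda_i\le\lambda_\star/2$ one has $|\gamma_{i1}-\gamma_{i2}|^2\gtrsim(1-\lambda_i)\lambda_\star\asymp(1-\lambda_i)/\phi^2$. Hence $\ell_i\lesssim\phi^2/(1-\lambda_i)$ and $q_i\ell_i\lesssim\lambda_i\phi^2/m\le\lambda_i\phi/C$. This is absorbed into $\frac{\phi\lambda_i}{4}$ precisely because $\phi\le m/C$, which is where that hypothesis does its essential work (the paper's Subcase~2 of Case~3).

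In effect you have swapped the two gap scales. $|\gamma_{i1}-\gamma_{i2}|^2$ is of order $\lambda_i$ on the complex side and of order $\phi^{-2}$ on the real side, and the argument closes only when each regime uses its own scale.
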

\begin{proof}
As we are proving the result for a fixed $i$, we drop the subscript $i$ for notational convenience. The proof proceeds in cases:
    \begin{enumerate}
        \item \emph{Case 1: $\gamma_1$ is complex.} We know from Lemma \ref{l:spec_radius_T} that $|\gamma_1|^2 = \Big(1-\frac{1}{\phi}\Big)(1-\omega\lambda)$. We get
        \begin{align}
            |\gamma_1| +q\cdot\ell(t) < \Big(1-\frac{1}{2\phi}\Big) + \frac{4\lambda(1-\lambda)}{m}\cdot\ell(t).  \label{e:comp_0}
        \end{align}
        Recall $\ell(t)= \max_{0\leq k\leq t}\Big\{11\cdot|\gamma_{1}|^k\cdot h^2(k)\Big\}$ and $h(k) =  \min\Big\{\frac{k}{|\gamma_{1}|}, \frac{2}{|\gamma_{1}-\gamma_{2}|} \Big\} $. Let $k^*$ be the index such that $\ell(t) = 11\cdot|\gamma_1|^{k^*}\cdot   h^2(k^*)$. We get,
        \begin{align*}
            |\gamma_1| +r\cdot\ell(t)  < \Big(1-\frac{1}{2\phi}\Big) + \frac{44\lambda(1-\lambda)}{m}\cdot h^2(k^*)\cdot|\gamma_1|^{k^*}.
        \end{align*}
        Now look at $h^2(k^*)$,
        \begin{align*}
            h^2(k^*) = \min\Big\{\frac{(k^*)^2}{|\gamma_1|^2}, \frac{4}{|\gamma_1-\gamma_2|^2}\Big\}.
        \end{align*}
        Due to Lemma \ref{l:spec_radius_T}, we know
        \begin{align*}
            |\gamma_1 - \gamma_2|^2 &= \frac{1}{4}\Big(4\beta (1-\omega\lambda)-\big((1-\lambda) + \beta(1-\omega\lambda)\big)^2\Big)\\
            &=\frac{(1-\lambda)}{4}\cdot(4\beta\delta-(1+\beta\delta)^2(1-\lambda)\big),
        \end{align*}
        where $\delta=\frac{1-\omega\lambda}{1-\lambda}$. Also, as $\beta= 1-\frac{1}{\phi} \geq 0.5$, we have $\frac{1}{|\gamma_1|^2} \leq \frac{2}{1-\omega\lambda} \leq \frac{2}{1-\lambda}$. Therefore,
        \begin{align}
            |\gamma_1| +q\cdot\ell(t) < \Big(1-\frac{1}{2\phi}\Big) + \frac{44\lambda}{m}\cdot\min\Big\{2(k^*)^2, \frac{16}{4\beta\delta-(1+\beta\delta)^2(1-\lambda)}\Big\}\cdot|\gamma_1|^{k^*} \label{e:comp_1}
        \end{align}
        Let $\epsilon= 4\beta\delta - (1+\beta\delta)^2(1-\lambda)$. We get,
        \begin{align}
            1-\lambda &= \frac{4\beta\delta}{(1+\beta\delta)^2} -\frac{\epsilon}{(1+\beta\delta)^2} \nonumber\\
            \lambda &= 1-\frac{4\beta\delta}{(1+\beta\delta)^2} +\frac{\epsilon}{(1+\beta\delta)^2}\nonumber \\
            &= \frac{(1-\beta\delta)^2}{(1+\beta\delta)^2} + \frac{\epsilon}{(1+\beta\delta)^2} \label{e:comp_t1}
        \end{align}
        We consider two subcases:
        \begin{enumerate}
            \item \emph{Subcase-1: $\epsilon > \frac{(1-\beta\delta)^2}{(1+\beta\delta)^2}$}. Using (\ref{e:comp_t1}) we get,
            \begin{align}
                \frac{\lambda}{4\beta\delta-(1+\beta\delta)^2(1-\lambda)} < 1 + \frac{1}{(1+\beta\delta)^2} <2. \label{e:comp_2}
            \end{align}
            \item \emph{Subcase-2:} $\epsilon \leq \frac{(1-\beta\delta)^2}{(1+\beta\delta)^2}$. Using (\ref{e:comp_t1}) we have $\lambda \leq \Big(1+\frac{1}{(1+\beta\delta)^2}\Big)\frac{(1-\beta\delta)^2}{(1+\beta\delta)^2}$. Note that $\delta \in \{1,\frac{1}{1-\lambda}\}$ and $b\geq 0.5$. Using item $1$ from Lemma \ref{l:helper} with $x=\delta$ we get,
            \begin{align*}
                \lambda \leq \Big(1+\frac{1}{(1+\beta)^2}\Big)\frac{(1-\beta)^2}{(1+\beta)^2} < \frac{3}{2(2\phi-1)^2},
            \text{ \ \ \ \ (as $\beta = 1-\frac{1}{\phi}$).}
            \end{align*}
            We get,
            \begin{align*}
                \lambda\cdot(k^*)^2\cdot|\gamma_1|^{k^*} \leq  \frac{3}{2(2\phi-1)^2}\cdot(k^*)^2\cdot|\gamma_1|^{k^*}.
            \end{align*}
            The quantity $(k^*)^2\cdot|\gamma_1|^{k^*}$ is maximized at $k^*=\frac{-2}{\log|\gamma_1|}$, implying
            \begin{align*}
                 \lambda\cdot(k^*)^2\cdot|\gamma_1|^{k^*} \leq \frac{3}{2(2\phi-1)^2}\cdot\frac{4}{(1-|\gamma_1|)^2}\cdot\frac{1}{e^2},
            \end{align*}
            where in the last inequality we used the relation $\frac{-2}{\log|\gamma_1|} < \frac{2}{1-|\gamma_1|}$ and $|\gamma_1|^{k^*} =e^{-2}$, with $e$ being Euler's number. As $|\gamma_1| < 1-\frac{1}{2\phi}$, we have $\frac{1}{(1-|\gamma_1|)^2} < 4\phi^2$. Therefore,
            \begin{align}
                 \lambda\cdot(k^*)^2\cdot|\gamma_1|^{k^*} \leq \frac{3}{2(2\phi-1)^2}\cdot\frac{16\phi^2}{e^2} \leq \frac{3}{4\phi^2}\cdot\frac{16\phi^2}{e^2} <2. \label{e:comp_3}
            \end{align}
        \end{enumerate}
        Depending on the above two subcases, we substitute (\ref{e:comp_2}) and (\ref{e:comp_3}) in (\ref{e:comp_1}) to prove
        \begin{align*}
            |\gamma_1| +q\cdot\ell(t) < \Big(1-\frac{1}{2\phi}\Big) + \frac{44\cdot 32}{m}.
        \end{align*}
        If $\phi \leq \frac{m}{5632}$, we get,
        \begin{align}
             |\gamma_1| +q\cdot\ell(t)< 1-\frac{1}{4\phi}. \label{e:comp_4}
        \end{align}
        \item \emph{Case 2:} $\gamma_1$ is real and $\gamma_1=\gamma_2$. In this case we have,
        \begin{align*}
            \lambda = \frac{(1-\beta\delta)^2}{(1+\beta\delta)^2} \leq \frac{(1-\beta)^2}{(1+\beta)^2} = \frac{1}{(2\phi-1)^2},
        \end{align*}
        where the last inequality follows from item $2$ from Lemma \ref{l:helper}. The proof is exactly same to subcase-2 of the first case (complex $\gamma_1$). Using Lemma \ref{l:spec_radius_T} we have,  $\gamma_1^2 = \Big(1-\frac{1}{\phi}\Big)(1-\omega\lambda)$. Therefore,
        \begin{align*}
            |\gamma_1| + q\cdot\ell(t) < \Big(1-\frac{1}{2\phi}\Big) + \frac{4\lambda(1-\lambda)}{m}\cdot\ell(t).
        \end{align*}
         Recall $\ell(t) = \max_{0\leq k\leq t}\Big\{11\cdot|\gamma_{1}|^k\cdot h^2(k)\Big\}$ and $h(k) = \frac{k}{|\gamma_1|} $. Let $k^*$ be the index such that $\ell(t) = 11\cdot|\gamma_1|^{k^*}\cdot h^2(k^*)$. We get,
         \begin{align*}
             |\gamma_1| +q\cdot\ell(t) &< \Big(1-\frac{1}{2\phi}\Big)+\frac{44\lambda(1-\lambda)}{m}\cdot\frac{(k^*)^2}{|\gamma_1|^2}\cdot|\gamma_1|^{k^*}\\
             &<\Big(1-\frac{1}{2\phi}\Big) +\frac{176\lambda}{m}\cdot(k^*)^2\cdot|\gamma_1|^{k^*} \ \  \ \ \ \ \ \ \ \text{\big(as $\gamma_1^2 > (1-\lambda)\sqrt{1-\frac{1}{\phi}}$\big)}\\
             &<\Big(1-\frac{1}{2\phi}\Big) + \frac{176}{m}\cdot\frac{1}{(2\phi-1)^2}\cdot\frac{16\phi^2}{e^2}\\
             & < \Big(1-\frac{1}{2\phi}\Big) + \frac{200}{m}.
         \end{align*}
         Therefore, for $\phi < \frac{m}{400}$, we get
         \begin{align}
              |\gamma_1| +q\cdot\ell(t) < 1-\frac{1}{4\phi}. \label{e:same_1}
         \end{align}
        
         \item \emph{Case 3:} $\gamma_1$ is real with $\gamma_1 > \gamma_2$. Using Lemma \ref{l:spec_radius_T}, we have $\gamma_1 < \Big(1-\frac{\phi\lambda}{2}\Big)(1-\omega\lambda)$. Therefore,
         \begin{align*}
            \gamma_1 +q\cdot\ell(t) < \Big(1-\frac{\phi\lambda}{2}\Big) + \frac{4\lambda(1-\lambda)}{m}\cdot\ell(t).
         \end{align*}
            Here again, $\ell(t) = \max_{0\leq k\leq t}\Big\{11\cdot|\gamma_{1}|^k\cdot h^2(k)\Big\}$ and $h(k) =  \min\Big\{\frac{k}{\gamma_1}, \frac{2}{\gamma_1-\gamma_2} \Big\} $.   Let $k^*$ be the index such that $\ell(t) = 11\cdot |\gamma_1|^{k^*}\cdot h^2(k^*)$. We get,
            \begin{align}
                \gamma_1 +r\cdot\ell(t) < \Big(1-\frac{\phi\lambda}{2}\Big)+ \frac{44\lambda(1-\lambda)}{m}\cdot h^2(k^*)\cdot\gamma_1^{k^*},\label{e:real_0}
            \end{align}
             where,
        \begin{align*}
            h^2(k^*) = \min\Big\{\frac{(k^*)^2}{\gamma_1^2}, \frac{4}{(\gamma_1-\gamma_2)^2}\Big\}.
        \end{align*}
         We have \begin{align*}
             (\gamma_1-\gamma_2)^2 &= \Big(\big((1-\lambda) + \beta(1-\omega\lambda)\big)^2-4\beta (1-\omega\lambda)\Big)\\
             &= (1-\lambda)\Big((1+\beta\delta)^2(1-\lambda)-4\beta\delta\Big),
         \end{align*}
         where $\delta = \frac{1-\omega\lambda}{1-\lambda}$. Let $\epsilon = (1+\beta\delta)^2(1-\lambda)-4\beta\delta.$ Note that this yields
         \begin{align*}
             \lambda = \frac{(1-\beta\delta)^2}{(1+\beta\delta)^2} - \frac{\epsilon}{(1+\beta\delta)^2}.
         \end{align*}
             Moreover, let $a(\lambda) =\lambda- \frac{(1-\beta\delta)^2}{(1+\beta\delta)^2}$ and let $\lambda'$ denote the zero of $a(\lambda)$ in $(0,1)$. It turns out that $a(\lambda)$ is an increasing function for all $\lambda$ satisfying $a(\lambda)<0$. As $\lambda$ in our case satisfies $a(\lambda)<0$, we get
             \begin{align}
                 \lambda < \lambda' < \frac{(1-\beta)^2}{(1+\beta)^2} = \frac{1}{(2\phi-1)^2} \leq \frac{1}{9}.\label{e:lam_upper}
             \end{align}
             where we used item $2$ from Lemma \ref{l:helper} again for the second last inequality. The last inequality used $\phi \geq 2$. Consider the following two subcases:
         \begin{enumerate}
             \item \emph{Subcase 1: $0< \epsilon \leq \frac{(1-\beta\delta)^2}{(1+\beta\delta)^2}$.} We get,
             \begin{align*}
             \lambda \geq \Big(1-\frac{1}{(1+\beta\delta)^2}\Big)\frac{(1-\beta\delta)^2}{(1+\beta\delta)^2}.
             \end{align*}
             Using item $3$ from Lemma \ref{l:helper} we get
             \begin{align}
                 \lambda > \frac{(1-\beta)^2}{2(1+\beta)^2} = \frac{1}{2(2\phi-1)^2} > \frac{1}{8\phi^2}.\label{e:lam_lower}
             \end{align}
         
               We know that (see proof of Lemma \ref{l:spec_radius_T}),
        \begin{align*}
            \gamma_1 > \frac{1}{2}\Big((1-\lambda) + \beta(1-\omega\lambda)\Big) > \frac{1}{2}(1+\beta)(1-\lambda) > \frac{3}{4}(1-\lambda) > \frac{2}{3} \ \ \ \ \text{(by (\ref{e:lam_upper}))}
        \end{align*}
             Therefore,
             \begin{align*}
                 \gamma_1 + q\cdot\ell(t) &< \Big(1-\frac{\phi\lambda}{2}\Big) + \frac{44\lambda(1-\lambda)}{m}\cdot\frac{(k^*)^2}{\gamma_1^2}\cdot\gamma_i^{k^*}\\
                 & < \Big(1-\frac{\phi\lambda}{2}\Big) + \frac{100\lambda}{m}\cdot(k^*)^2\cdot\gamma_1^{k^*} \ \ \ \ \text{(as $\gamma_1 > \frac{2}{3}$)}.
             \end{align*}
 As $(k^*)^2\cdot\gamma_1^{k^*}$ is maximized at $k^* = \frac{-2}{\log\gamma_1}$ and $\gamma_1 < 1-\frac{\phi\lambda}{2}$, we get
            \begin{align*}
                \gamma_1 +q\cdot\ell(t) < \Big(1-\frac{\phi\lambda}{2}\Big) + \frac{100\lambda}{m}\cdot\frac{3}{\phi^2\lambda^2}.
            \end{align*}
           Finally using (\ref{e:lam_lower}) we get,
           \begin{align}
               \gamma_1 +q\cdot\ell(t) &< \Big(1-\frac{1}{16\phi}\Big) + \frac{2400}{m}\nonumber \\
               & < 1-\frac{1}{32\phi} \label{e:real_1},
           \end{align}
           where last inequality follows for $\phi \leq \frac{m}{76800}$.
             \item \emph{Subcase 2:} $\epsilon > \frac{(1-\beta\delta)^2}{(1+\beta\delta)^2}$.
             In this case we can simply lower bound $(\gamma_1-\gamma_2)^2$. Note that,
             \begin{align*}
                 \lambda < \Big(1-\frac{1}{(1+\beta\delta)^2}\Big)\cdot\frac{(1-\beta\delta)^2}{(1+\beta\delta)^2}.
             \end{align*}
             Using item 4 from Lemma \ref{l:helper} with $x =\delta$ we get $\frac{(1-\beta\delta)^2}{(1+\beta\delta)^2} \geq \frac{1}{2}\cdot\frac{(1-\beta)^2}{(1+\beta)^2}$. Therefore,
             \begin{align*}
                 (\gamma_1-\gamma_2)^2 &= (1-\lambda) \cdot \epsilon\\
                 & >  (1-\lambda) \cdot\frac{(1-\beta\delta)^2}{(1+\beta\delta)^2}\\
                 & \geq (1-\lambda) \cdot\frac{(1-\beta)^2}{(1+\beta)^2} = (1-\lambda)\cdot\frac{1}{(2\phi-1)^2}.
             \end{align*}
             Substituting the above in (\ref{e:real_0}) we get,
             \begin{align*}
                 \gamma_1 + q\cdot\ell(t) &< 1-\frac{\phi\lambda}{2} + \frac{176\lambda}{m}\cdot(2\phi-1)^2\\
            &<1-\frac{\phi\lambda}{2} + \frac{704\lambda\phi^2}{m}
             \end{align*}
             For $\phi <\frac{m}{2816}$ we get,
             \begin{align}
                  \gamma_1 + q\cdot\ell(t) < 1-\frac{\phi\lambda}{4} \label{e:real_2}.
             \end{align}
             \end{enumerate}
             
         \end{enumerate}
     
    Combining the inequalities (\ref{e:comp_3}), (\ref{e:comp_4}), (\ref{e:real_1}) and (\ref{e:real_2}) we get for $\phi \leq \min\{\frac{m}{C}, \frac{1}{3\sqrt{\lambda_r}}\}$ and $C> 76800$,
    \begin{align*}
        |\gamma_1| +r\cdot \ell(t) \leq 1- \frac{\phi\lambda}{4}.
    \end{align*}
    This finishes the proof.
\end{proof}
\paragraph{Finishing the proof of Theorem \ref{t:L2_iterate}.} We are now ready to complete the proof of Theorem \ref{t:L2_iterate}. Recall the bound on $p_{t+1}$ in (\ref{e:gt_bound}). We had,
\begin{align*}
    p_{t+1} \leq f_{t+1}  < \rho^{t+1} + \sum_{j=0}^{t}{\rho^{j}\cdot\max_{i}\alpha_i(t+1-j)}
\end{align*}
Due to Lemma \ref{l:rho_bound} we know,
\begin{align*}
    \rho < 1-\frac{\phi\lambda_r}{4}.
\end{align*}
 Furthermore,  $\alpha_i(t+1-j) = \big\|(\T_i^\top)^{t+1-j}\T_i^{t+1-j}\big\|$, and we know from Lemma \ref{l:schur} that
\begin{align*}
     \big\|(\T_i^\top)^{t+1-j}\T_i^{t+1-j}\big\| &\leq \big(3(t+1-j)^2 + 2\big)\cdot|\gamma_{i1}|^{t-j} \\
     &< \big(3(t+1)^2 +2\big)\cdot|\gamma_{i1}|^{t-j}\\
     &<\big(3(t+1)^2 +2\big)\cdot\rho^{t-j}.
\end{align*}
Therefore,
\begin{align*}
    p_{t+1} &< \rho^{t+1} + (t+1)\cdot \big(3(t+1)^2 +2)\cdot\rho^{t} \\
    &< 2(t+1)\cdot\big(3(t+1)^2 +2\big)\cdot\rho^t\\
    &<10(t+1)^3\cdot\rho^t.
\end{align*}
Note that $p_t$ upper bounds $\Big\|\E\Big[\Big(\prod_{j=1}^{t}{\Q\P\Y_j^\top\P^\top\Q^\top}\Big)\Big(\prod_{j=0}^{t-1}{\Q\P\Y_{t-j}\P^\top\Q^\top}\Big)\Big]\Big\|$ and this finishes the proof of Theorem \ref{t:L2_iterate}.

The following helper lemma is used in the proof of Lemma \ref{l:rho_bound} for $\omega=0$. A more general variant can be used to prove Lemma \ref{l:rho_bound} for any $\omega \in (0,1)$. The proof of Lemma \ref{l:helper} follows by a routine calculus exercise.

\begin{lemma}\label{l:helper}
Let \(x \ge 1\), \( \tfrac12 \le \beta < 1\), and define
\begin{align*}
f(x) = \left(1+\frac{1}{(1+\beta x)^2}\right)\frac{(1-\beta x)^2}{(1+\beta x)^2}, \ \ g(x) = \left(1-\frac{1}{(1+\beta x)^2}\right)\frac{(1-\beta x)^2}{(1+\beta x)^2} 
\end{align*}
\begin{enumerate}
    \item Over the set of all $x \ge 1$ satisfying
$1-\frac{1}{x} \leq f(x)$,
the function $f(x)$ is maximized at $x=1$. In particular, $f(x) < \frac{3}{2}\cdot\frac{(1-\beta)^2}{(1+\beta)^2}$.
\item Over the set of all $x\geq 1$ satisfying $1-\frac{1}{x} = \frac{(1-\beta x)^2}{(1+\beta x)^2}$, we have $\frac{(1-\beta x)^2}{(1+\beta x)^2} \leq \frac{(1-\beta)^2}{(1+\beta)^2}$.
\item Over the set of all $x \geq 1$ satisfying $1-\frac{1}{x}< \frac{(1-\beta x)^2}{(1+\beta x)^2}$, $\inf_x g(x) > \frac{4}{5}\cdot\frac{(1-\beta)^2}{(1+\beta)^2}$.
\item Over all the set of all $x \geq 1$ satisfying $1-\frac{1}{x} < g(x)$, $\inf_{x}\frac{(1-\beta x)^2}{(1+ \beta x)^2} > \frac{1}{2}\cdot\frac{(1-\beta)^2}{(1+\beta)^2}$.
\end{enumerate}

\end{lemma}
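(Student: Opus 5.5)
The plan is to reduce all four items to a single variable $s=\beta x$ and to monotonicity of
\[
\eta(s):=\frac{(1-s)^2}{(1+s)^2}.
\]
On $s\in[0,1]$, $\eta$ decreases from $1$ to $0$; for $s\ge1$ it increases back toward $1$. Both correction factors $1\pm\frac{1}{(1+s)^2}$ are monotone in $s$: the factor with the plus sign decreases and the factor with the minus sign increases. So everything hinges on locating the feasible sets of $x$. The first step, common to all items, is a localization claim: every constraint set in items 1 to 4 is contained in $[1,1/\beta)$, i.e.\ $s\in[\beta,1)$.

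To prove the localization, fix $x\ge 1/\beta$, so $s\ge1$, and write $u=\frac{s-1}{s+1}\in[0,1)$.
\begin{itemize}
\item The left-hand side satisfies $1-\frac1x=1-\frac{\beta}{s}>1-\frac1s=u\cdot\frac{s+1}{s}>u$, using $\beta<1$.
\item On the right, $(1+s)^2\ge4$, so $f\le\frac54u^2$ and $g\le\eta=u^2$.
\item Since $\frac54u<1<\frac{s+1}{s}$, we get $\frac54u^2<u\cdot\frac{s+1}{s}$. Hence $1-\frac1x$ strictly exceeds $f$, $g$ and $\eta$.
\end{itemize}
So none of the constraints $1-\frac1x\le f$, $1-\frac1x=\eta$, $1-\frac1x<\eta$ or $1-\frac1x<g$ can hold when $s\ge1$. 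This also disposes of the boundary point $s=1$.

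Items 1 and 2 then follow by monotonicity. On $s\in[\beta,1)$, $f=\eta(s)\bigl(1+\frac1{(1+s)^2}\bigr)$ is a product of two positive decreasing functions. Therefore its maximum over the feasible set is at $x=1$. There,
\[
f(1)=\eta(\beta)\Bigl(1+\tfrac{1}{(1+\beta)^2}\Bigr)\le\tfrac{13}{9}\,\eta(\beta)<\tfrac32\,\eta(\beta),
\]
using $\beta\ge\frac12$. Item 2 is immediate, since $\eta(\beta x)\le\eta(\beta)$ for $\beta x\in[\beta,1)$.

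Items 3 and 4 are lower bounds. I need an upper bound $x\le x^*$ on the feasible set, obtained from $1-\frac1x<\eta(\beta x)\le\eta(\beta)\le\frac19$. Writing $x=1+\delta$, this gives $\delta\le\frac98\,\eta(\beta)$. Then
\[
\frac{\eta(\beta x)}{\eta(\beta)}=\Bigl(\frac{1-\beta-\beta\delta}{1-\beta}\Bigr)^2\Bigl(\frac{1+\beta}{1+\beta x}\Bigr)^2 ,
\]
and $\frac{\beta\delta}{1-\beta}\le\frac98\cdot\frac{\beta(1-\beta)}{(1+\beta)^2}\le\frac14$. This gives a concrete constant lower bound on the ratio. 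For item 4 the feasible set is smaller still, since $g<\eta$, so the same argument yields $\inf\eta(\beta x)>\frac12\,\eta(\beta)$ after checking the numerics at the extreme $\beta=\frac12$.

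The step I expect to be the real obstacle is the constant in item 3, and I suspect it fails as stated. At the feasible point $x=1$ we have
\[
g(1)=\Bigl(1-\tfrac{1}{(1+\beta)^2}\Bigr)\eta(\beta)<\tfrac34\,\eta(\beta),
\]
because $(1+\beta)^2<4$. So $\inf g<\frac45\,\eta(\beta)$ cannot hold. I would therefore prove a corrected version, namely $\inf g\ge c\,\eta(\beta)$ with $c$ the product of:
\begin{itemize}
\item the factor $1-\frac{1}{(1+\beta)^2}\ge\frac59$;
\item the ratio bound from the previous paragraph, roughly $(3/4)^2$.
\end{itemize}
This gives something like $c\approx\frac14$. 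I would then propagate this weaker constant into \eqref{e:lam_lower}. That changes $\frac{1}{8\phi^2}$ to roughly $\frac{1}{32\phi^2}$, and hence only the absolute constant $C$ in Lemma~\ref{l:rho_bound}, not the form of the result.
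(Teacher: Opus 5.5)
\paragraph{Assessment.} Your counterexample to item 3 is correct, and it is the central point. $x=1$ always lies in the constraint set, since the constraint there reads $0<\eta(\beta)$. Yet $g(1)=\bigl(1-(1+\beta)^{-2}\bigr)\eta(\beta)<\frac34\,\eta(\beta)$, so $\inf g>\frac45\,\eta(\beta)$ fails for every $\beta\in[\frac12,1)$. The paper gives no proof to compare against; it calls the lemma a routine calculus exercise. Your localization-plus-monotonicity route is a sound way to prove items 1, 2 and 4 and a weakened item 3. The correction is also not cosmetic. In \eqref{e:lam_lower} the paper actually uses the constant $\frac12$, and even that fails at $\beta=\frac12$: there the infimum of $g/\eta(\beta)$ over the item-3 set is about $0.45$. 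That infimum is approached as $x\uparrow x^*\approx 1.094$, where $x^*$ is the root of $1-\frac1x=\eta(\beta x)$. You are right that propagating a smaller constant only changes $C$ in Lemma~\ref{l:rho_bound}.

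\paragraph{Slip 1 (localization).} The claim $\frac54 u<1$ is false once $u\ge\frac45$, i.e. $s\ge 9$. The inequality $\frac54u^2<u\,\frac{s+1}{s}$ itself fails for $s>13$. The cause is replacing $1+\frac{1}{(1+s)^2}$ by $\frac54$, which is very lossy for large $s$. The conclusion is still true if you keep the exact factor:
$u\bigl(1+\frac{1}{(1+s)^2}\bigr)=\frac{s-1}{s+1}+\frac{s-1}{(s+1)^3}<1+\frac1s$.
Hence $f\le u\,\frac{s+1}{s}<1-\frac1x$. The bounds for $g$ and $\eta$ were already fine, since both are at most $u^2\le u$.

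\paragraph{Slip 2 (the constant in items 3 and 4).} Your own chain gives $\frac{\beta\delta}{1-\beta}\le\frac98\cdot\frac{\beta(1-\beta)}{(1+\beta)^2}\le\frac98\cdot\frac19=\frac18$, not merely $\frac14$. This matters for item 4. With $\frac14$, the ratio bound at $\beta=\frac12$ is only $\frac{9}{16}\bigl(\frac{1.5}{1.625}\bigr)^2\approx 0.48<\frac12$. So ``the same argument'' does not yet give item 4 unless you quantify how much smaller its set is. With $\frac18$, and worst case $\beta=\frac12$, you get $\eta(\beta x)/\eta(\beta)\ge(\frac78)^2\bigl(\frac{1.5}{1.5625}\bigr)^2\approx 0.70$ uniformly on the item-3 set. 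The item-4 set is contained in the item-3 set because $g<\eta$ on $s<1$. Item 4 therefore follows directly, and the corrected item 3 holds with $c\ge\frac59\cdot 0.70\approx 0.39$, which is better than your $c\approx\frac14$.
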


\section{Additional experiments}\label{s:add_experiments}
In this section, we provide additional details about our experimental setup, and include numerical results on two datasets to complement the results in Section \ref{s:experiment}. Finally, we also explore the dependence of classical momentum on the $\omega$ parameter, which interpolates between HBM and NAG.

\subsection{Experimental setup}
We transform the data matrices through \texttt{StandardScaler} from \texttt{sklearn.preprocessing}. In particular, after sampling $n=2048$ rows from the dataset, we perform the following transformation,
\begin{verbatim}
scaler = StandardScaler()
X_scaled = scaler.fit_transform(X)
K = rbf_kernel(X_scaled, gamma=0.1)
\end{verbatim}

We consider the following values for $\lambda$ and $\epsilon$ tolerance in our experiment:
\begin{enumerate}
    \item \texttt{Covtype}, $\lambda = 0.5$, $\epsilon = 10^{-7}$.
    \item \texttt{California Housing}, $\lambda = 0.16$, $\epsilon = 10^{-7}$.
    \item \texttt{Abalone}, $\lambda =20$, $\epsilon = 10^{-10}$.
    \item \texttt{Phoneme}, $\lambda =10^{-10}$, $\epsilon = 10^{-6}$.
\end{enumerate}
All experiments were conducted on a MacBook Air (M3) with 16 GB of memory.

\subsection{Plots on \texttt{Abalone} and \texttt{Phoneme} datasets}
\begin{figure}[t]
  \centering

  \begin{subfigure}[t]{0.49\linewidth}
    \centering
    \includegraphics[width=\linewidth]{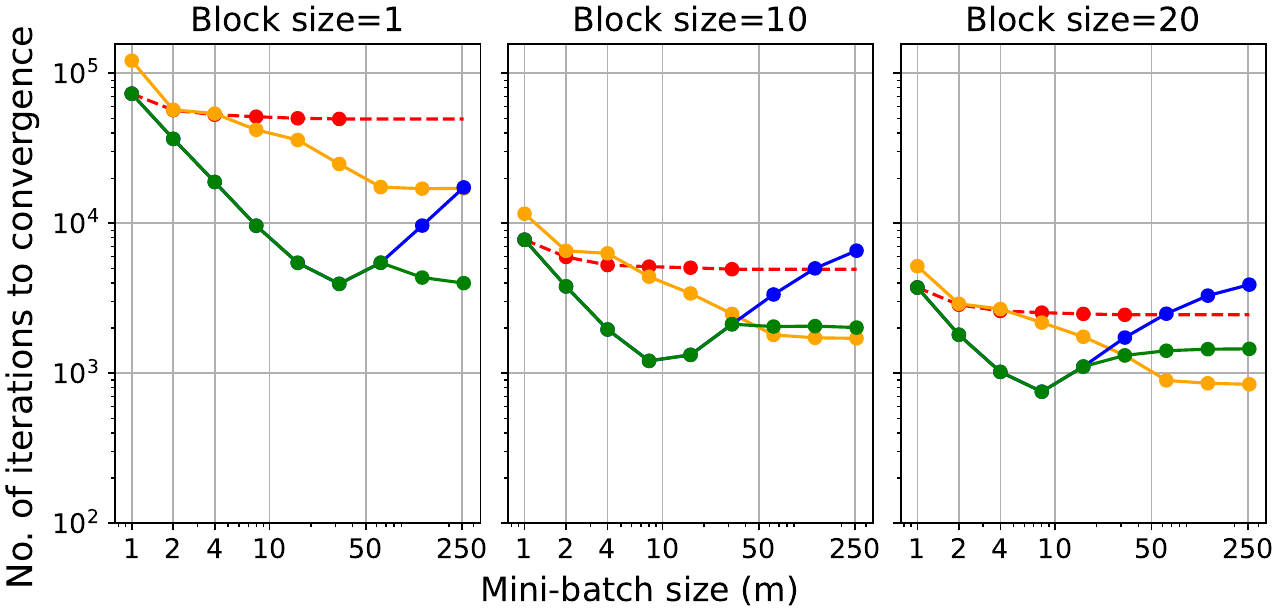}
    \caption{Abalone: iterations vs mini-batch size}
  \end{subfigure}
  \hfill
  \begin{subfigure}[t]{0.49\linewidth}
    \centering
    \includegraphics[width=\linewidth]{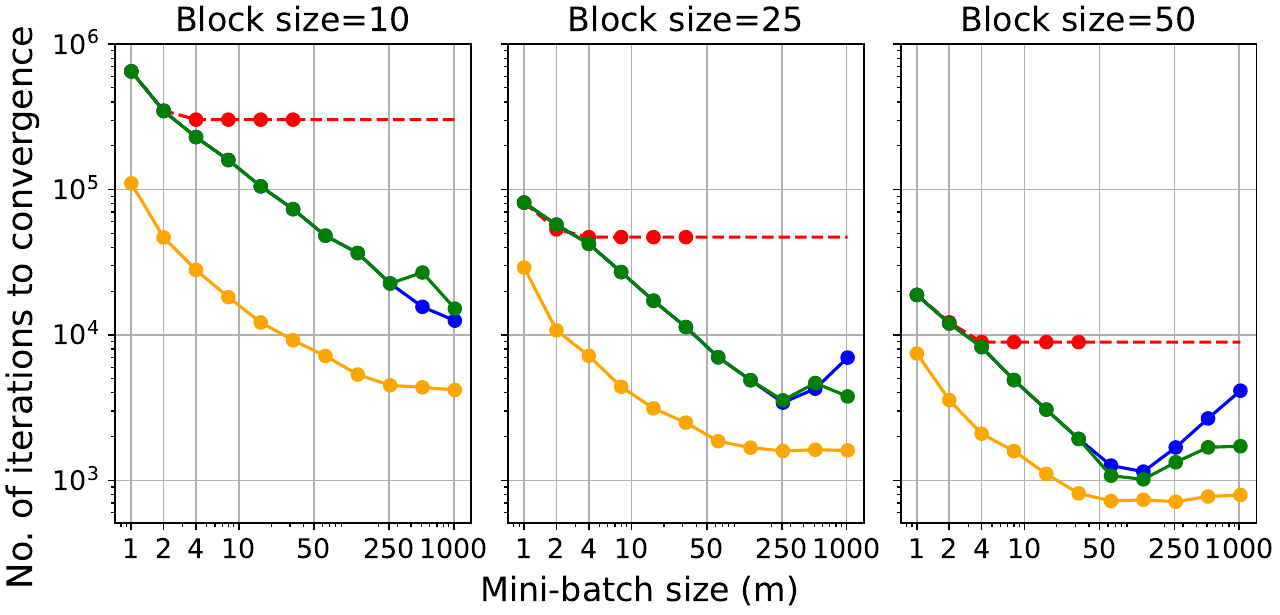}
    \caption{Phoneme: iterations vs mini-batch size}
  \end{subfigure}

  \vspace{0.3em}

  \begin{subfigure}[t]{0.49\linewidth}
    \centering
    \includegraphics[width=\linewidth]{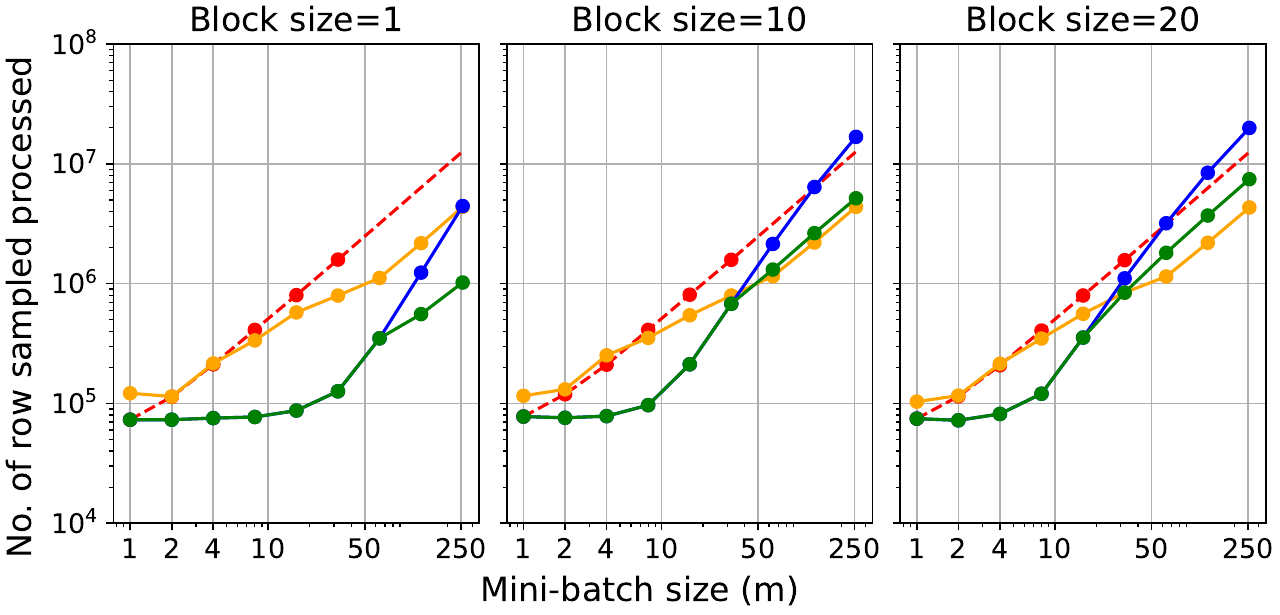}
    \caption{Abalone: row samples vs mini-batch size}
  \end{subfigure}
  \hfill
  \begin{subfigure}[t]{0.49\linewidth}
    \centering
    \includegraphics[width=\linewidth]{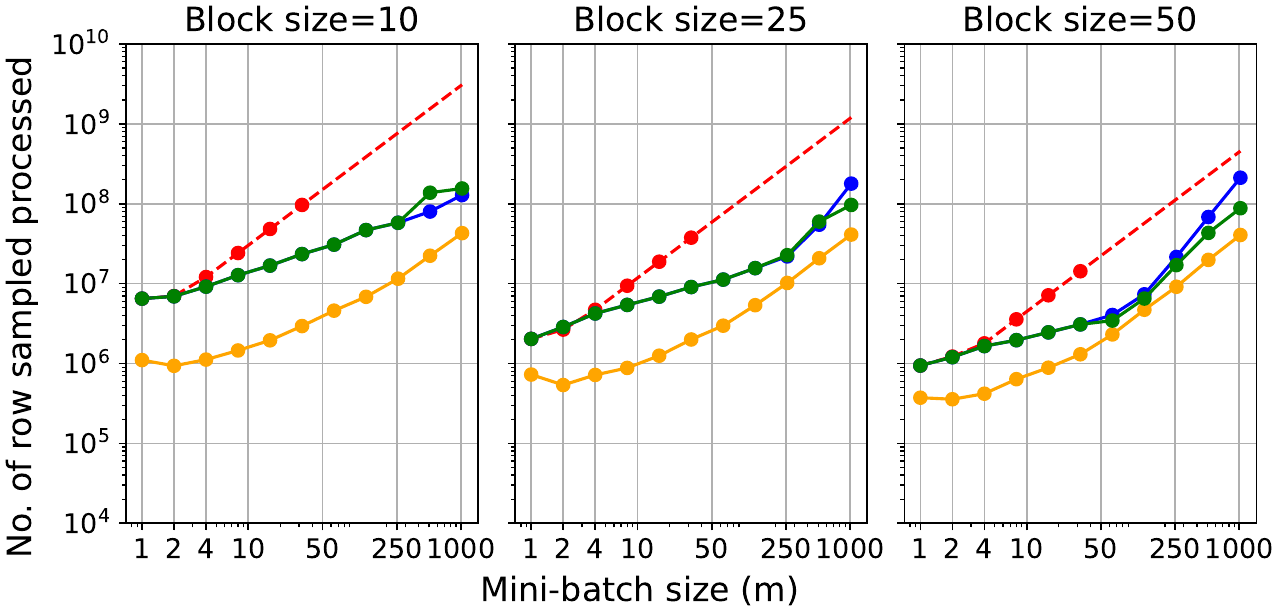}
    \caption{Phoneme: row samples vs mini-batch size}
  \end{subfigure}

  \vspace{0.5em}

  \begin{minipage}{0.9\linewidth}
    \centering
    \small
    \textcolor{myred}{\rule{1em}{1em}} CD \quad
    \textcolor{myorange}{\rule{1em}{1em}} CDpp \quad
    \textcolor{myblue}{\rule{1em}{1em}} CD  $+$ NAG $(\beta=1-1/m)$ \quad
    \textcolor{mygreen}{\rule{1em}{1em}}  CD  $+$ NAG $(\beta=\text{adaptive})$
  \end{minipage}

  \caption{
  Convergence of block coordinate descent with NAG momentum and baselines.
  Top row: iterations to reach error $10^{-10}$ for Abalone and $10^{-6}$ for Phoneme. Bottom row: total work, measured by the number of sampled rows.
  }
  \label{fig:abalone_Phoneme}
\end{figure}

In Fig~\ref{fig:abalone_Phoneme}, we compare, \textbf{CD}, \textbf{CDpp}, \textbf{CD+NAG} ($\beta=1-1/m$), and \textbf{CD+NAG} ($\beta = \text{adaptive}$) on two additional datasets. The results on \texttt{Abalone} dataset follow the same pattern as observed in Fig~\ref{fig:california_covtype} for \texttt{California Housing} and \texttt{Covtype} datasets. However, the results on \texttt{Phoneme} dataset look different, as \textbf{CDpp} performs better than \textbf{CD+NAG}. We provide insights into this phenomenon in the following paragraph.

\paragraph{Comparison between CDpp and CD+NAG.} Under certain incoherence assumptions on the data matrix $\K$, the condition number of $\bar\Pib$ can be bounded as $O\big(\frac{n\bar\kappa_k}{k}\big)$, where $k$ is the block-size and $\bar\kappa_k$ denotes a certain stochastic condition number quantity (see \cite{derezinski2025randomized} for details). In this case,  the iteration complexity of \textbf{CDpp} with mini-batch size $m$ can be shown to scale as $\tilde O\big(\sqrt{\frac{\bar\kappa_k}{m}}\cdot\frac{n}{k}+\sqrt{\bar\kappa_k\frac nk}\big)$, whereas the iteration complexity of \textbf{CD+NAG} scales as $\tilde O\big(\frac{\bar\kappa_k}{m}\cdot\frac{n}{k} +\sqrt{\bar\kappa_k\frac nk}\big)$. Therefore, when $m < \bar\kappa_k$, \textbf{CDpp} can converge faster than \textbf{CD+NAG}, but when $m\ge \bar\kappa_k$, the perfect parallelization phenomenon takes over and \textbf{CD+NAG} becomes faster. While this discussion requires some additional assumptions on the data, it provides some intuition for why one type of acceleration may be preferable over the other, depending on the conditioning properties of the dataset.
\subsection{Adaptive momentum}
Next, we describe the implementation of \textbf{CD+NAG} $(\beta=\text{adaptive})$, where we select the momentum parameter $\beta$ adaptively during the course of the optimization. Consider solving the positive definite linear system $\K\w=\y$ using block coordinate descent with NAG momentum, and with mini-batch size $m$ (here, for simplicity we absorb the regularizer $\lambda$ into $\K$). Our aim is to adaptively find $\phi^* \in [2,m]$ such that $\beta^* = 1-\frac{1}{\phi}$ is a near-optimal momentum parameter for mini-batch size $m$. 

We provide the full heuristic logic for adaptive momentum tuning in Algorithm \ref{alg:adaptive_nag}. The matrix $\tilde\K$ and vector $\tilde\b$ are simply a few randomly sampled rows from $\K$ and $\b$, used to approximate the true residuals. In all our experiments, we sampled $100$ rows randomly from the data matrices for approximating the true residual.
 \begin{algorithm}[H]
\caption{Adaptive momentum tuning for \textbf{CD+NAG}}
\label{alg:adaptive_nag}
\begin{algorithmic}[1]
\Require $\K$, $\tilde\K$, $\b$, $\tilde\b$, $m$, maximum iterations $T$
\Require Warm-up length $T_{\rm warm}$, check interval $T_{\rm check}$
\State Initialize $\w_{-1}=0$, $\x_{-1}=0$
\State Set initial momentum $\beta \gets 1-\frac{1}{m}$

\Comment{Warm-up phase}
\For{$t=0,\ldots,T_{\rm warm}-1$}
    \State $\x_{t+1} \gets \textbf{mini-batch CD step}(\w_t,m)$
    \State $\w_{t+1} \gets \x_{t+1} + \beta(\x_{t+1}-\x_{t})$
\EndFor

\Comment{Adaptive phase}
\State $ \beta_{+}=1-\frac{1}{m}$
\State $ \beta_{-}=1-\frac{1}{2}$
\State $ (\w_{+,t},\x_{+,t}) \gets (\w_t,\x_t)$
\State $ (\w_{-,t},\x_{-,t}) \gets (\w_t,\x_t)$

\While{$t<T$ and $\frac{1-\beta_{-}}{1-\beta_{+}} >2$}
    \For{$i \in \{+,-\}$}
        \State $\x_{i,t+1} \gets \textbf{mini-batch CD step}(\w_{i,t},m)$
        \State $\w_{i,t+1} \gets \x_{i,t+1} + \beta_{i}(\x_{i,t+1}-\x_{i,t})$

     \If{$t \bmod T_{\text{check}} = 0$}
        \State $r_i \gets \|\tilde\K\x_{i,t+1}-\tilde\b\|/\|\tilde\b\|$

    \State 
        $R \gets r_{+}/r_{-}$
    \If {$R <1$}
     $p \gets p+1$ \Else\ $p \gets 0$
    \EndIf
    \If{$R \geq 3$}
        \Comment{Large momentum is too aggressive}
        
        \State $ \phi_{+} \gets \frac{1}{2}\frac{1}{1-\beta_{+}}, \qquad \beta_{+}\gets 1-\frac{1}{\phi_{+}}$
        
        \State $\w_{+,t+1} \gets \w_{-,t+1}, \qquad \x_{+,t+1} \gets \x_{-,t+1}$
        \State $p \gets 0$
    \ElsIf{$R \leq 1/2$ or $p>20$}
      \Comment{Small momentum is too conservative}
       \State $ \phi_{-} \gets \frac{2}{1-\beta_{-}}, \qquad \beta_{-}\gets 1-\frac{1}{\phi_{-}}$
        
        \State $\w_{-,t+1} \gets \w_{+,t+1}, \qquad \x_{-,t+1} \gets \x_{+,t+1}$
        \State $p \gets 0$
    \EndIf
        
     \EndIf
     \EndFor
    
    \State $t \gets t+1$
\EndWhile
\State $\beta \gets \beta_{+}$
\State $(\w_t,\x_t) \gets (\w_{+,t}, \x_{+,t})$

\While{$t<T$}
\Comment{\textbf{CD+NAG} with optimal $\beta$}
        \State $\x_{t+1} \gets \textbf{mini-batch CD step}(\w_{t},m)$
        \State $\w_{t+1} \gets \x_{t+1} + \beta(\x_{t+1}-\x_{t})$
        \State $t \gets t+1$
\EndWhile
\State \Return $\w_t$
\end{algorithmic}
\end{algorithm}
\section{Proofs omitted from Section \ref{s:framework}}\label{s:prop_proofs}
\paragraph{Proof of Proposition \ref{p:simple-rate}}As $\Deltab_{t+1}=\mathcal A_t(\Deltab_t;\alpha) = (\I-\alpha\Pib_t)\Deltab_t$, we have
\begin{align*}
\E\|\Deltab_{t+1}\|^2 &= \Deltab_t^\top\E[(\I-\alpha\Pib_t)^2]\Deltab_t\\
&\leq \Deltab_t^\top\E[(\I-\alpha\Pib_t)]\Deltab_t \qquad\text{(since $\zero \preceq \Pib_t \preceq \I$ and $0 <\alpha \leq 1$)}\\
&=\Deltab_t^\top(\I-\alpha\bar\Pib)\Deltab_t.
\end{align*}
As $\Deltab_0\in\mathrm{range}(\bar\Pib)$, all $\Deltab_t \in \mathrm{range}(\bar\Pib)$ with probability $1$. Therefore, we get
\begin{align*}
    \E\|\Deltab_{t+1}\|^2 \leq \Big(1-\frac{1}{\kappa}\Big)\|\Deltab_0\|^2,
\end{align*}
where $\kappa = \alpha/\lambda^+_{\min}(\bar\Pib)$.

\paragraph{Proof of Proposition \ref{p:mini-batching}} 
It is easy to see that
 \begin{align*}
     \Pib^{[m]} = \frac{1}{m}\sum_{i=1}^{m}{\Pib^{(i)}},
 \end{align*}
 where $\Pib^{(i)}$ is the stochastic rate matrix associated with $\mathcal{A}^{(i)}$. Therefore,
 \begin{align*}
     \E\Big[(\Pib^{[m]} - \bar\Pib)^2\Big] &= (\Pib^{[m]})^2  + \bar\Pib^2 -\E[\Pib^{[m]}]\cdot\bar\Pib - \bar\Pib\cdot\E[\Pib^{[m]}]\\
     & = \big(\Pib^{[m]}\big)^2 - \bar\Pib^2\\
     &= \frac{1}{m^2}\cdot\sum_{i=1}^{m}{\E\big[\big(\Pib^{(i)}\big)^2\big]} + \frac{1}{m^2}\cdot\sum_{i,j=1,  i\ne j}^{m}{\E\big[\Pib^{(i)}\Pib^{(j)}\big]} -\bar\Pib^2\\
     &= \frac{1}{m}\cdot\E\big[\big(\Pib^{(1)}\big)^2\big] - \frac{1}{m}\cdot\bar\Pib^2\\
     &\preceq \frac{1}{m}\cdot\bar\Pib(\I-\bar\Pib).
 \end{align*}

\end{document}